
\documentclass[nohyperref]{article}

\usepackage{microtype}
\usepackage{graphicx}
\usepackage{subcaption}
\usepackage{makecell}
\usepackage{multirow}
\usepackage{booktabs} 

\usepackage{hyperref}

\usepackage{enumitem}



\usepackage[accepted]{icml2022}

\usepackage{amsmath}
\usepackage{amssymb}
\usepackage{mathtools}
\usepackage{amsthm}
\usepackage{thmtools, thm-restate}

\usepackage[capitalize,noabbrev]{cleveref}


\usepackage[textsize=tiny]{todonotes}



\newcommand{\G}{\mathbf{G}}
\newcommand{\g}{\mathbf{g}}

\renewcommand{\t}{\mathbf{t}}

\newcommand{\x}{\mathbf{x}}

\newcommand{\y}{\mathbf{y}}
\newcommand{\z}{\mathbf{z}}

\newtheorem{rem}{Remark}

\newtheorem{assum}{Assumption}

\newtheorem{lemma}{Lemma}



\icmltitlerunning{Anarchic Federated Learning}

\begin{document}

\twocolumn[
\icmltitle{Anarchic Federated Learning}




\begin{icmlauthorlist}
    \icmlauthor{Haibo Yang}{osu}
    \icmlauthor{Xin Zhang}{isu}
    \icmlauthor{Prashant Khanduri }{osu,umn}
    \icmlauthor{Jia Liu}{osu}

\end{icmlauthorlist}

\icmlaffiliation{osu}{Department of Electrical and Computer Engineering, The Ohio State University, Columbus, OH 43210, USA;}
\icmlaffiliation{isu}{Department of Statistics, Iowa State University, Ames, IA 50011, USA;}
\icmlaffiliation{umn}{Department of Electrical and Computer Engineering, University of Minnesota, Minneapolis, MN 55455, USA}

\icmlcorrespondingauthor{Haibo Yang}{yang.5952@osu.edu}
\icmlcorrespondingauthor{Jia Liu}{liu@ece.osu.edu}

\icmlkeywords{Machine Learning, ICML}

\vskip 0.3in
]



\printAffiliationsAndNotice{}  

\allowdisplaybreaks

\begin{abstract}
Present-day federated learning (FL) systems deployed over edge networks consists of a large number of workers with high degrees of heterogeneity in data and/or computing capabilities, which call for flexible worker participation in terms of timing, effort, data heterogeneity, etc.
To satisfy the need for flexible worker participation, we consider a new FL paradigm called ``Anarchic Federated Learning'' (AFL) in this paper.
In stark contrast to conventional FL models, each worker in AFL has the freedom to choose i) when to participate in FL, and ii) the number of local steps to perform in each round based on its current situation (e.g., battery level, communication channels, privacy concerns).
However, such chaotic worker behaviors in AFL impose many new open questions in algorithm design.
In particular, it remains unclear whether one could develop convergent AFL training algorithms, and if yes, under what conditions and how fast the achievable convergence speed is.
Toward this end, we propose two Anarchic Federated Averaging (AFA) algorithms with two-sided learning rates for both cross-device and cross-silo settings, which are named AFA-CD and AFA-CS, respectively. 
Somewhat surprisingly, we show that, under mild anarchic assumptions, both AFL algorithms achieve the best known convergence rate as the state-of-the-art algorithms for conventional FL.
Moreover, they retain the highly desirable {\em linear speedup effect} with respect of both the number of workers and local steps in the new AFL paradigm.
We validate the proposed algorithms with extensive experiments on real-world datasets.
\end{abstract}


\section{Introduction} \label{sec:intro}

Federated Learning (FL) has recently emerged as an important distributed learning framework that leverages numerous workers to collaboratively learn a joint model~\citep{li2019federated,yang2019federated,kairouz2019advances}. 
Since the inception, FL systems have become increasingly powerful and are able to handle various heterogeneity in data, network environments, worker computing capabilities, etc.
Furthermore, most of the prevailing FL algorithms (e.g., FedAvg~\citep{mcmahan2016communication} and its variants~\citep{li2018fedprox,zhang2020fedpd,karimireddy2020scaffold,karimireddy2020mime,acar2021feddyn}) enjoy a desirable ``{\em linear speedup effect,}''
i.e., the convergence time to a first-order stationary point decreases linearly as the number of workers and local steps increases~\citep{stich2018local,yu2019linear,wang2018cooperative,khaled2019first,karimireddy2020scaffold,yang2021achieving,qu2020federated}.


However, to achieve these salient features, most of the existing FL algorithms have adopted a {\em server-centric} approach, i.e., the worker behaviors are tightly ``dictated'' by the server.
Such dictation is typically manifested in three aspects:
i) determine either all or a subset of workers to participate in each round of FL update; 
ii) fully control the timing for synchronization and whether to accept/reject information sent from the workers;
iii) precisely specify the algorithmic operations (e.g., the number of local steps performed at each worker before communicating with the server).
Despite achieving strong performance guarantees, 
these server-centric FL algorithms often implicitly rely on the following strong assumptions: (1) each worker is available for training upon the server's request and throughout a complete round; 
(2) all participating workers are willing to execute the same number of local updates and communicate with the server in a synchronous manner following a common clock. 
Unfortunately, in edge networks where many FL systems are deployed, these assumptions are restrictive or even problematic. 
First, many requested edge devices on the worker side may not be available in each round because of, e.g., communication errors or battery outages. 
Second, the use of synchronous communication and an identical number of local updates across all workers ignores the fact that worker devices in edge-based FL systems are heterogeneous in computation and communication capabilities.
As a result, stragglers (i.e., slow workers) could significantly slow down the training process. 
To mitigate the straggler effect, various robust FL algorithms have been developed. 
For example, the server in FedAvg~\citep{mcmahan2016communication} can simply ignore and drop the information from the stragglers to speedup learning.
However, this may lead to other problems such as wasted computation/energy~\citep{wang2019adaptive}, slower convergence~\citep{li2018fedprox}, or biased/unfair uses of worker data~\citep{kairouz2019advances}.  
Moreover, the synchronous nature of the server-centric approaches implies many networking problems (e.g., interference between workers, periodic traffic spikes, high complexity in maintaining a network-wide common clock).

The above limitations of the current server-centric FL approaches motivate us to propose a new paradigm in FL, which we call {\bf Anarchic Federated Learning} (AFL).
In stark contrast to server-centric FL, workers in AFL are completely {\em free} of the ``dictation'' from the server.
Specifically, each worker has complete freedom to choose when and how long to participate in FL without following any control signals from the server.
As a result, the information fed back from workers is inherently asynchronous.
Also, each worker can independently determine the number of local update steps to perform in each round based on its current local situation (e.g., battery level, communication channels, privacy concerns).
In other words, the amount of local computation at each worker is time-varying, device-dependent, and fully controlled by the worker itself.
Clearly, AFL has a much lower server-worker coordination complexity and avoids the aforementioned pitfalls in server-centric FL approaches.
However, AFL also introduces significant challenges in algorithmic design on the server-side because the server needs to work much harder to handle the chaotic worker behaviors in AFL (e.g., asynchrony, spatial and temporal heterogeneity in computing).
Toward this end, several foundational questions naturally arise:
{\em 1) Is it possible to design algorithms that converge under AFL?
2) Under what condition and how fast could the algorithms converge?
3) If the answer to 1) is "yes" and 2) can also be resolved, could such algorithms still achieve the desired "linear speedup effect" as in conventional FL?}

In this paper, our goal is to obtain a fundamental understanding to the above questions.
Our main contributions and key results are summarized as follows:

\begin{list}{\labelitemi}{\leftmargin=0.2em \itemindent=-0.0em \itemsep=.1em}
\vspace{-.1in}
\item We consider a new FL paradigm called Anarchic Federated Learning (AFL), where the workers are allowed to engage in training {\em at will} and choose the number of local update steps based on their own time-varying situations (computing resources, energy levels, etc.).
This {\em loose worker-server coupling} significantly simplifies the implementations and renders AFL particularly suitable for FL deployments in edge computing environments.
For any AFL algorithms under general worker information arrival processes and non-i.i.d. data across workers, we first establish a fundamental convergence error lower bound to characterize the effect of worker participation in the AFL system.

\item 
For AFL in the {\em cross-device} (CD) setting, we study the convergence of an anarchic federated averaging algorithm (AFA-CD), which is a natural counterpart of the popular FedAvg algorithm \citep{mcmahan2016communication} for server-centric FL.
Our analysis reveals that, under bounded maximum delay, AFA-CD converges to an error ball whose size matches the fundamental lower bound, with an $\mathcal{O}(1/\sqrt{mKT})$ convergence rate.
Here, $m$ is the number of collected workers in each round of update, $K$ is the local steps and $T$ is the total number of rounds.
We note that this convergence rate retains the highly desirable ``linear speedup effect'' in both worker's number $m$ and local steps $K$ under AFL.\footnote{To attain $\epsilon$-accuracy, it takes $\mathcal{O}(1/\epsilon^2)$ steps for an algorithm with an $\mathcal{O}(1/\sqrt{T})$ convergence rate, while needing $\mathcal{O}(1/m \epsilon^2)$ steps for another algorithm with an  $\mathcal{O}(1/\sqrt{mT})$ convergence rate (the hidden constant in Big-O is the same). In this sense, $\mathcal{O}(1/\sqrt{mT})$ implies a {\em linear speedup} in terms of $m$.}
Moreover, under the stronger condition of uniform workers' participation in AFL,
AFA-CD converges to a singleton stationary point at the same convergence rate order that matches the state-of-the-art of server-centric FL.

\item For AFL in the {\em cross-silo} (CS) setting, we study the convergence of a CS version of AFA (AFA-CS), where the special features of CS allow one to leverage historical feedback information and variance reduction techniques.
We show that AFA-CS achieves an improved convergence rate of $\mathcal{O}(1/\sqrt{MKT})$, 
where $M$ is the total number of workers. 
This suggests that, not only can ``linear speedup'' be achieved under AFA-CS, the speedup factor also depends on the total number of workers $M$ instead of the number of collected workers $m$ in each round ($M > m$).

\item We validate both AFA algorithms with extensive experiments on CV and NLP tasks and explore the effect of the asynchrony and local step number in AFL. 
We also numerically show that AFA can be integrated with various advanced FL techniques (e.g., FedProx~\citep{li2018fedprox} and SCAFFOLD~\citep{karimireddy2020scaffold}) to further enhance the AFL performance.

\end{list}


The rest of the paper is organized as follows.
In Section~\ref{sec:related_work}, we review related work.
In Section~\ref{sec:afl}, we introduce AFL and the AFA algorithms, which are followed by their convergence analysis in Section~\ref{sec:convergence}.
We present the numerical results in Section~\ref{sec:numerical} and conclude the work in Section~\ref{sec:conclusion}. 
Due to space limitation, we relegate all proofs and some experiments to the supplementary material.

\section{Related Work} \label{sec:related_work}

\textbf{1) Server-Centric FL Algorithms:}
To date, one of the prevailing FL algorithms is Federated Averaging (FedAvg), which was first proposed in \citep{mcmahan2016communication} as a heuristic to improve communication efficiency and data privacy for FL.
Since then, there have been substantial follow-ups of FedAvg that focus on non-i.i.d. (heterogeneous) data (see, e.g., FedProx~\citep{li2018fedprox}, FedPD~\citep{zhang2020fedpd}, SCAFFOLD~\citep{karimireddy2020scaffold}, FedNova~\citep{wang2020fednova}, FedDyn~\citep{acar2021feddyn}, and MIME~\citep{karimireddy2020mime}), which are closely related to our work.
The main idea for these algorithms is to control the ``model drift'' (due to heterogeneous datasets and the use of multiple local update steps on the worker side).
While achieving various degrees of success in handling data heterogeneity, these algorithms are all server-centric and synchronous, which are more restrictive in edge-based settings (see discussions in Section~\ref{sec:intro}).

\textbf{2) FL with Flexible Worker Participation:}
To achieve high concurrency and avoid stragglers, researchers have proposed various FL methods with flexible worker participation, which can be roughly categorized into three classes:
The first class utilizes different local steps to accommodate worker heterogeneity, while maintaining a synchronous communication between the server and workers~\cite{wang2020fednova,ruan2021towards,avdiukhin2021federated}.
The second class is based on asynchronous distributed optimization~\cite{zhang2015deep,lian2018asynchronous,niu2011hogwild,agarwal2012distributed,paine2013gpu,xie2019asynchronous,zhang2020taming} (with identical local steps)~\cite{nguyen2021federated,avdiukhin2021federated}.
Specifically, \citet{xie2019asynchronous} proposed an asynchronous FL (FedAsync) method to tackle stragglers and heterogeneous latency, where the server continuously triggers one worker for local training.
However, this work did not consider the convergence performance of non-convex problems that are more relevant to learning.
~\citet{nguyen2021federated} utilized buffered asynchronous aggregation and achieved an  $O(\frac{1}{\sqrt{TK}})$ convergence rate for non-convex problems, but it was unclear whether a linear speedup in terms of $m$ is achievable.
~\citet{avdiukhin2021federated} proposed AsyncCommSGD by allowing asynchronous communication, while assuming an identical computation rate across all workers.
This work achieved an $O(\frac{1}{\sqrt{mKT}})$ convergence rate for non-convex problems under a bounded gradient assumption, matching the convergence rate of synchronous FedAvg.
The third class considers arbitrary device unavailability, though the server and workers still communicate in a synchronous fashion (i.e., following a system-wide common clock).
In this class, the algorithms in~\citep{gu2021fast} and ~\citep{yan2020distributed} achieve $O(\frac{1}{\sqrt{MKT}})$ and $O(\frac{1}{\sqrt{M^{0.5}T}})$, respectively. 
However, ~\citet{gu2021fast} required a Lipschitz Hessian assumption, where ~\citet{yan2020distributed} needed a bounded stochastic gradient assumption.

\textbf{3) Key Differences of AFL from Related Works:}
Comparing to the aforementioned related works, the AFL paradigm in this paper allows both heterogeneity: i)  different local steps across workers and ii) asynchronous communications between the server and workers.
In other words, the AFL paradigm subsumes all the above settings as special cases.
Moreover, AFL fundamentally differs from aforementioned FL algorithms in that the worker's participation in AFL and its local optimization process are completely {\em determined by the workers, and not by the sampling requests from the server.}
This is more practical since it allows workers to participate in FL under drastically different situations in the network states, charging/idle cycles, etc.
Due to the complex couplings between multiple sources of randomness and layers of heterogeneity in spatial and temporal domains in AFL, the training algorithm design for AFL and its theoretical analysis is far from a straightforward combination of existing FL techniques.
Interestingly, we show that the AFA algorithms (counterparts of FedAvg under AFL) achieve the same convergence rate without strong assumptions (e.g., Lipschitz Hessian condition in \cite{gu2021fast}).

\section{Anarchic Federated Learning} \label{sec:afl}

In this section, we first formally define the notion of AFL.
Then, we will present the AFA algorithmic framework, which contains two variants called AFA-CD and AFA-CS for cross-device and cross-silo AFL, respectively.

\textbf{1) Overview of Anarchic Federated Learning:} 
The goal of FL is to solve an optimization problem in the form of $\min_{x \in \mathbb{R}^d} f(x) := \frac{1}{M} \sum_{i=1}^{M} f_i(x)$, where $f_i(x) \triangleq \mathbb{E}_{\xi_i \sim D_i}[f_i(x, \xi_i)]$ is the local (non-convex) loss function associated with a local data distribution $D_i$ and $M$ is the total number of workers.
For the setting with heterogeneous (non-i.i.d.) datasets at the workers, we have $D_i \neq D_j, \text{if } i \neq j$.
In terms of the assumption on the size of workers, FL can be classified as cross-device FL and cross-silo FL~\citep{kairouz2019advances,wang2021field}. 
Cross-device FL is designed for large-scale FL with a massive number of mobile or IoT devices ($M$ is large).
As a result, the server can only afford to collect information from a subset of workers in each round of update and is unable to store workers' information across rounds.
In comparison, the number of workers in cross-silo FL is relatively small.
Although the server in cross-silo FL may still have to collect information only from a subset of workers in each round, it has enough capacity to store each worker's most recent information.

\begin{algorithm}[t!]
\caption{The General AFL Framework.} \label{alg:afl} 
\begin{algorithmic}
\STATE 
\emph{\bf At the Server (Concurrently with Workers):} 
\begin{enumerate}[topsep=0.4em, leftmargin=1.5em, itemsep=-0.5pt]
	\item (Concurrent Thread) Collect local updates returned from the workers.
	\item (Concurrent Thread) Aggregate local update returned from collected workers and update global model following some server-side optimization process.
\end{enumerate}

\vspace{0.1em}
\STATE 
\emph{\bf At Each Worker (Concurrently with Server):}
\begin{enumerate}[topsep=0.4em, leftmargin=1.5em, itemsep=-0.5pt]
\item Once decided to participate in the training, pull the global model with current timestamp.
\item Perform (multiple) local update steps following some worker-side optimization process.
\item Return the result and the associated pulling timestamp to the server, with extra processing if so desired.
\end{enumerate}
\end{algorithmic}
\end{algorithm}

The general framework of AFL is illustrated in Algorithm~\ref{alg:afl}.
Here, the server is responsible for: 
1) collecting the local updates returned from workers, 
and 2) aggregating the obtained updates once certain conditions are satisfied (e.g., upon collecting $m \in (0,M]$ local updates from workers) to update the global model.
Note that these two threads are {\em concurrent}, so it completely avoids system locking on server's side.
Also, idling is allowed at each worker between each two successive participations in training.
Whenever a worker intends to participate in the training, it first pulls the current model parameters from the server.
Then, upon finishing multiple local update steps (more on this later) by some worker-side optimization process (e.g., using stochastic gradients or additional information such as variance-reduced and/or momentum adjustments), the worker reports the results to the server (potentially with extra processing if so desired, e.g., compression for communication efficiency).

We remark that AFL is a general computing architecture that subsumes the conventional FL and asynchronous distributed optimization as special cases.
From an optimization perspective, the server and the workers may adopt independent optimization processes, thus enabling a much richer set of learning {\em ``control knobs''} (e.g., separated learning rates, separated batch sizes).
Specifically, each worker is able to completely take control of its own optimization process, even using a time-varying number of local update steps and optimizers, which depend on its local dataset and/or its device status (e.g., battery level, privacy preference).
More importantly, from the system level, the concurrent processes at worker and server side enable loose worker-server coupling and thus avoiding server-worker interlocking and reducing synchronization overhead.

\textbf{2) A Convergence Error Lower Bound for AFL:}
To thoroughly understand AFL, we will first obtain some fundamental insights on the {\em performance limit of any AFL training algorithms}.
Toward this end, we first state several assumptions that are needed for our theoretical analysis throughout the rest of this paper.

\begin{assum}($L$-Lipschitz Continuous Gradient) \label{a_smooth}
	There exists a constant $L > 0$, such that $ \| \nabla f_i(\x) - \nabla f_i(\y) \| \leq L \| \x - \y \|$, $\forall \x, \y \in \mathbb{R}^d$, and $i \in [M]$.
\end{assum}

\begin{assum}(Unbiased Local Stochastic Gradient) \label{a_unbias}
	Let $\xi^i$ be a random local data sample at worker $i$.
	The local stochastic gradient is unbiased, i.e.,
	$\mathbb{E} [\nabla f_i(\x, \xi^i)] = \nabla f_i(\x)$, $\forall i \in [m]$, where the expectation is taken over the local data distribution $D_i$.
\end{assum}

\begin{assum}(Bounded Local and Global Variances) \label{a_variance}
	There exist two constants $\sigma_L \geq 0$ and $\sigma_G \geq 0$, such that the variance of each local stochastic gradient estimator is bounded by
	$\mathbb{E} [\| \nabla f_i(\x, \xi^i) -  \nabla f_i(\x) \|^2] \leq \sigma_{L}^2, \forall i \in [M]$,
	and the global variability of local gradient of the cost function is bounded by 
	$\| \nabla f_i(\x) - \nabla f(\x) \|^2 \leq \sigma_{G}^2, \forall i \in [M]$.
\end{assum}
The first two assumptions are standard in the convergence analysis of non-convex optimization~(see, e.g., \citep{ghadimi2013stochastic,bottou2018optimization}).
For Assumption~\ref{a_variance}, the bounded local variance is also a standard assumption.
We utilize a universal bound $\sigma_G$ to quantify the data heterogeneity among different workers.
This assumption is also frequently used in the literature of FL with non-i.i.d. datasets ~\citep{reddi2020adaptive,wang2019adaptive,yang2021achieving} as well as in decentralized optimization~\citep{kairouz2019advances}.

To establish a fundamental convergence error lower bound, we consider the most general case where {\em no assumption} on the arrival processes of the worker information is made, except that each worker's participation in FL is independent of each other.
In such general worker information arrival processes, we prove the following lower bound of convergence error by constructing a worst-case scenario:

\begin{restatable}[Convergence Error Lower Bound for AFL with General Worker Information Arrival Processes] {theorem}{lb}
	\label{thm:lb}
	For any level of heterogeneity characterized by $\sigma_G$, there exists loss functions  satisfying Assumptions~\ref{a_smooth}-~\ref{a_variance} and a specific worker participation process for which the output ${\hat{\x}}$ of any convergent (and potentially random) FL algorithm satisfies: 
	\begin{align*}
		\mathbb{E}[ \| \nabla f(\hat{\x}) \|^2 ] = \Omega (\sigma_G^2).
	\end{align*}
\end{restatable}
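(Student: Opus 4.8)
The plan is to prove the bound by an information-theoretic (indistinguishability) argument built on a minimal scalar instance, rather than by tracking any particular algorithm. The intuition is that an adversarial participation schedule can permanently hide a constant fraction of the workers from the server; the algorithm then never receives any signal about those workers' objectives, and so cannot locate the true global minimizer, which is displaced by $\Theta(\sigma_G)$ relative to the minimizer of the objective restricted to the participating workers. The crucial design choice is that one must hide a \emph{constant fraction} of the workers (say half): hiding a single worker moves the global minimizer by only $O(\sigma_G/M)$, which would give a vacuous $O(\sigma_G^2/M^2)$ bound, whereas hiding half of them keeps the displacement at $\Theta(\sigma_G)$ independently of $M$.

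Concretely, I would take $d=1$ and scalar quadratics $f_i(x)=\tfrac12(x-a_i)^2$, so that $\nabla f_i(x)=x-a_i$ is $1$-Lipschitz (Assumption~\ref{a_smooth} with $L=1$) and deterministic (so Assumption~\ref{a_unbias} holds and Assumption~\ref{a_variance} holds with $\sigma_L=0$). Partition the $M$ workers into two equal sets $P$ and $Q$, and let the participation process be the degenerate schedule in which every worker in $P$ always reports and every worker in $Q$ never reports; these per-worker schedules are independent, as required. I then compare two instances that share the same $P$-data. In instance $A$, set $a_i=+\sigma_G$ for $i\in P$ and $a_i=-\sigma_G$ for $i\in Q$, so $\bar a_A=0$ and $x^\star_A=0$; in instance $B$, set $a_i=+\sigma_G$ for all $i$, so $\bar a_B=\sigma_G$ and $x^\star_B=\sigma_G$. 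A direct check gives global variability $\max_i|a_i-\bar a|^2=\sigma_G^2$ in $A$ and $0$ in $B$, so Assumption~\ref{a_variance} holds with the prescribed constant $\sigma_G$ in both.

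The crux is the coupling step. Because the workers in $Q$ emit nothing and the workers in $P$ emit identical (here deterministic) gradients in $A$ and $B$, the entire transcript observed by the server is identical in the two instances; hence the output $\hat x$ of any (possibly randomized, possibly adaptive) algorithm has the same distribution under $A$ and under $B$. Writing $g_A(\hat x)=\hat x$ and $g_B(\hat x)=\hat x-\sigma_G$ for the two global gradients and using $u^2+v^2\ge \tfrac12(u-v)^2$ pointwise, I obtain $\mathbb{E}[g_A(\hat x)^2]+\mathbb{E}[g_B(\hat x)^2]\ge \tfrac12\sigma_G^2$, so at least one of the two instances satisfies $\mathbb{E}[\|\nabla f(\hat x)\|^2]\ge \sigma_G^2/4=\Omega(\sigma_G^2)$. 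Since both instances meet Assumptions~\ref{a_smooth}--\ref{a_variance} with the prescribed $\sigma_G$ and use an admissible participation process, this exhibits the required worst-case loss function and schedule.

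The main obstacle I anticipate is not the calculation but getting the quantifiers and the meaning of ``convergent algorithm'' right. The coupling argument is precisely what lets me treat an arbitrary, randomized algorithm uniformly: I never assume the algorithm does anything in particular, only that its output is a measurable function of the observed transcript, which forces the two output distributions to coincide (so the conclusion in fact holds for \emph{every} algorithm, not merely convergent ones). A secondary point to watch is the constant: assigning the $Q$-values symmetrically (for instance $-\sigma_G$ versus $+3\sigma_G$) keeps the heterogeneity bound tight in both instances and slightly improves the constant, but the simple version above already suffices for the $\Omega(\sigma_G^2)$ conclusion.
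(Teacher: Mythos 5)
Your construction is correct and rests on the same adversarial idea as the paper's proof---a participation schedule that permanently hides a subset of workers, together with scalar quadratics whose minimizers are displaced by $\Theta(\sigma_G)$---but the way you extract the conclusion is genuinely different. The paper uses a two-worker instance ($f_1(\x)=(\x+G)^2$, $f_2(\x)=(\x-G)^2$, only worker~1 ever participating) and then simply asserts that any \emph{convergent} algorithm must return $\mathbb{E}[\hat{\x}]=-G$, i.e.\ the minimizer of the objective it actually observes; the $\Omega(\sigma_G^2)$ bound follows by evaluating $\nabla f$ there. You instead run a two-point indistinguishability (Le Cam--style) argument: two instances sharing the visible workers' data produce identical transcripts, hence identical output distributions, and the pointwise inequality $u^2+v^2\ge\tfrac12(u-v)^2$ forces one instance to incur error $\sigma_G^2/4$. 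What your route buys is rigor and generality---you never need to formalize what ``convergent'' means or assume the algorithm finds the minimizer of the observed surrogate, and the bound holds for \emph{all} measurable algorithms; your observation about hiding a constant fraction rather than one worker out of $M$ is also the right scaling remark (the paper sidesteps it only because $M=2$ there).

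The one thing to watch is the quantifier order. The theorem as stated promises a \emph{single} instance defeating every convergent algorithm, which is what the paper's proof delivers (granting its informal use of convergence). Your argument delivers the minimax form: for every algorithm, at least one of two fixed instances fails it, and which one may depend on the algorithm. This is the standard and arguably more honest formulation of such lower bounds, but if you want to match the statement literally you can close the gap by invoking the convergence hypothesis exactly as the paper does: a convergent algorithm seeing only workers with $a_i=\sigma_G$ must output $\hat{x}=\sigma_G$, and then your instance $A$ alone already gives $\|\nabla f_A(\hat{x})\|^2=\sigma_G^2$. Either way the essential content is sound.
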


\begin{rem}{\em (Proof in Appendix~\ref{appdx_lb})
	The lower bound in Theorem~\ref{thm:lb} indicates that no algorithms for AFL could converge to a stationary point under general worker information arrival processes, due to the significant system heterogeneity and randomness caused by such general worker information arrivals.
	The rationale is that there always exist objective value drifts owing to general worker information arrival processes in the worst-case scenario, which further lead to an inevitable error in convergence.
	We note that this lower bound is different from previous optimization lower bounds in FL ~\cite{karimireddy2020scaffold,woodworth2020minibatch,gu2021fast}.
	Our lower bound captures objective deviations due to worker participation while previous bounds focus on the optimization error with ideal worker participation (i.e., full worker or uniformly random worker participation).
	Considering a worst-case scenario in FL by removing such assumption of ideal worker participation, our lower bound also holds for non-i.i.d. FL including synchronous FedAvg and its variants, thus also providing insights for conventional FL. 
	To ensure convergence to a stationary point, extra assumptions for the worker information arrivals need to be made, e.g., uniformly distributed arrivals (see Theorem~\ref{thm:uniform}) and bounded delays (see Theorem~\ref{thm:silo}).
	}
\end{rem}


\section{The Anarchic Federated Averaging (AFA) Algorithms for AFL} \label{sec:convergence}

Upon obtaining a basic understanding of the training algorithm performance limit from the convergence error in Theorem~\ref{thm:lb}, in this section, we study convergence conditions and performance of two anarchic federated averaging (AFA) algorithms for cross-device (CD) and cross-silo (CS) settings in Section~\ref{subsec:cd} and \ref{subsec:cs}, respectively, both of which can be viewed as an extension of FedAvg under AFL.

\subsection{The AFA-CD Algorithm for Cross-Device AFL} \label{subsec:cd}

{\bf 1) The AFA-CD Algorithm:}
First, we consider the AFA-CD algorithm for the cross-device AFL setting.
As mentioned earlier, cross-device AFL is suitable for cases with a massive number of edge devices.
In each round of global model update, only a small subset of workers are used in the training.
The server is assumed to have {\em no historical information} of the workers.
As shown in Algorithm~\ref{alg:FedAvg_device}, AFA-CD closely follows the AFL architecture shown in Algorithm~\ref{alg:afl}.
Here, we use the standard stochastic gradient descent (SGD) method as the server- and worker-side optimizer.
In each update $t=0,\ldots,T-1$, the server waits until collecting $m$ local updates $\{ \G_i(\x_{t - \tau_{t,i}}) \}$ from workers to form a set $\mathcal{M}_t$ with $|\mathcal{M}_t| = m$, where $\tau_{t, i}$ represents the random delay of the local update of worker $i \in \mathcal{M}_t$ (Server Code, Line~1).
Once $\mathcal{M}_t$ is formed, the server aggregates all local updates $\G_i(\x_{t - \tau_{t,i}}), i \in \mathcal{M}_t$ and updates global model (Server Code, Line~2).
We count each global model update as one communication round for direct comparison with previous FL results. 
Meanwhile, for each worker, it pulls the current global model parameters with time stamp $\mu$ once it decides to participate in training (Worker Code, Line~1).
Each worker can then choose a desired number of local update steps $K_{t,i}$ (could be time-varying and device-dependent) to perform SGD updates for $K_{t,i}$ times, and then return the rescaled sum of all stochastic gradients with timestamp $\mu$ to the server (Worker Code, Lines~2--3).

\begin{algorithm}[t!]
\caption{AFA-CD Algorithm for Cross-Device AFL.} \label{alg:FedAvg_device} 
\begin{algorithmic}
\STATE 
\emph{\bf At the Server (Concurrently with Workers):}
\begin{enumerate}[topsep=0.4em, leftmargin=1.5em, itemsep=-0.5pt]
\item In the $t-$th update round, collect $m$ local updates $\{\G_i(\x_{t - \tau_{t,i}}), i \in \mathcal{M}_t \}$ returned from the workers to form the set $\mathcal{M}_t$, where $\tau_{t,i}$ represents the random delay of the worker $i$'s local update, $i \in \mathcal{M}_t$.
\item Aggregate and update:
$\G_t = \frac{1}{m} \sum_{i \in \mathcal{M}_t} \G_i(\x_{t - \tau_{t,i}})$, \quad
$\x_{t+1} = \x_t - \eta \G_t$.
\end{enumerate}
\STATE 
\emph{\bf At Each Worker (Concurrently with Server):}
\begin{enumerate}[topsep=0.4em, leftmargin=1.5em, itemsep=-0.5pt]
\item Once decided to participate in the training, retrieve the parameter $\x_{\mu}$ from the
server and its timestamp, set the local model: $\x_{\mu, 0}^i = \x_{\mu}$. \\
\item Choose a number of local steps $K_{t, i}$, which can be time-varying and device-dependent. 
Let $\x_{\mu, k+1}^i = \x_{\mu, k}^i - \eta_L \g_{\mu, k}^{i}$, where $\g_{\mu, k}^{i} = \nabla f_i(\x_{\mu, k}^i, \xi_{\mu, k}^i)$, $k=0,\ldots,K_{t,i}-1$.
\item Sum and rescale the stochastic gradients: $\G_i(\x_{\mu}) = \frac{1}{K_{t, i}} \sum_{j=0}^{K_{t, i}-1} \g_{\mu, j}^i$. Return $\G_i(\x_{\mu})$.
\end{enumerate}
\end{algorithmic}
\end{algorithm}

{\bf 2) Convergence Analysis of the AFA-CD Algorithm:}
We first analyze the convergence of AFA-CD under general worker information arrival processes.
We use $f_0 = f(\x_0)$ and $f_{*}$ to denote the initial and the optimal objective values, respectively.
We have the following convergence result for the AFA-CD algorithm (see proof details in Appendix~\ref{appdx_arbipdx_uniform}):

\begin{restatable}[AFA-CD with General Worker Information Arrival Processes] {theorem}{arbitrary}
\label{thm:arbitrary}
Suppose that the resultant maximum delay under AFL is bounded, i.e., $\tau := \max_{t \in [T], i \in \mathcal{M}_t} \{ \tau_{t,i} \} < \infty$.
Suppose that the server-side and worker-side learning rates $\eta$ and $\eta_L$ are chosen as such that the following conditions are satisfied:
$6 \eta_L^2 (2K_{t,i}^2 - 3K_{t,i} + 1)L^2 \leq 1, 180 \eta_L^2 K_{t, i} ^2 L^2 \tau < 1, \forall t, i$ and $2 L \eta \eta_L + 6 \tau^2 L^2 \eta^2 \eta_L^2 \leq 1$.
Under Assumptions~\ref{a_smooth}--\ref{a_variance}, the output sequence $\{ \x_t \}$ generated by AFA-CD with general worker information arrival processes satisfies:
\begin{align*}
\frac{1}{T} \sum_{t=0}^{T-1} \mathbb{E} \| \nabla f(\x_t) \|^2 \leq \frac{4(f_0 - f_*)}{\eta \eta_L T} + 4 \big( \alpha_L \sigma^2_L + \alpha_G \sigma_G^2 \big),
\end{align*}
where the constants $\alpha_L$ and $\alpha_G$ are defined as:
\begin{align*}
    \alpha_L &= \frac{L \eta \eta_L}{m} \frac{1}{T} \sum_{t=0}^{T-1} \frac{1}{K_t} + \frac{3 \tau^2 L^2 \eta^2 \eta_L^2}{m} \frac{1}{T} \sum_{t=0}^{T-1} \frac{1}{K_t} \\
    & \quad + \frac{15 \eta_L^2 L^2}{2} \frac{1}{T} \sum_{t=0}^{T-1} \bar{K}_t , \\
    \alpha_G &= \frac{3}{2} + 45 L^2 \eta_L^2 \frac{1}{T} \sum_{t=0}^{T-1} \hat{K}_t^2.
\end{align*}
Here,
\begin{align*}
     \frac{1}{K_t} \!=\! \frac{1}{m} \! \sum_{i \in \mathcal{M}_t} \! \frac{1}{K_{t,i}},
    \bar{K}_t \!=\! \frac{1}{m} \! \sum_{i \in \mathcal{M}_t} \! K_{t, i},
    \hat{K}_t^2 \!=\! \frac{1}{m} \! \sum_{i \in \mathcal{M}_t} \! K_{t, i}^2.
\end{align*}
\end{restatable}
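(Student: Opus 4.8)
The plan is to follow the standard descent-lemma route for two-sided-learning-rate FedAvg, but to carefully absorb the extra error terms created by asynchrony (the delays $\tau_{t,i}$) and by the heterogeneous local-step counts $K_{t,i}$. First I would invoke $L$-smoothness (Assumption~\ref{a_smooth}) on the server update $\x_{t+1} = \x_t - \eta\G_t$ to get $f(\x_{t+1}) \le f(\x_t) - \eta\langle \nabla f(\x_t), \G_t\rangle + \tfrac{L\eta^2}{2}\|\G_t\|^2$, and then take expectation conditioned on the history up to round $t$. Because each worker $i \in \mathcal{M}_t$ computes its rescaled gradient average at the \emph{stale} iterate $\x_{t-\tau_{t,i}}$ and along a \emph{drifted} local trajectory, the conditional mean of $\G_t$ is not $\nabla f(\x_t)$. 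I would therefore apply the polarization identity $\langle \a,\b\rangle = \tfrac12(\|\a\|^2+\|\b\|^2-\|\a-\b\|^2)$ to the inner product, isolating a $-\tfrac{\eta}{2}\|\nabla f(\x_t)\|^2$ term (the quantity we want to bound) together with error terms that measure the mismatch between $\G_t$ and $\nabla f(\x_t)$.

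The mismatch splits into two sources that I would bound separately. First, \emph{client drift}: bound $\mathbb{E}\|\x^i_{\mu,k}-\x_\mu\|^2$ by unrolling the $K_{t,i}$ local SGD steps and using smoothness together with the local and global variance bounds (Assumptions~\ref{a_unbias}--\ref{a_variance}); this produces a bound of the form $\eta_L^2\,\mathrm{poly}(K_{t,i})\,(\sigma_L^2+\sigma_G^2+\|\nabla f(\x_\mu)\|^2)$, and closing the induced recursion is precisely where the condition $6\eta_L^2(2K_{t,i}^2-3K_{t,i}+1)L^2\le 1$ is consumed. Second, \emph{staleness}: bound $\mathbb{E}\|\x_t-\x_{t-\tau_{t,i}}\|^2$ by writing the gap as $\eta\sum_{s=t-\tau_{t,i}}^{t-1}\G_s$ and applying Cauchy--Schwarz over the at-most-$\tau$ summands, giving a term of order $\tau\eta^2\sum_{s}\mathbb{E}\|\G_s\|^2$; this is the origin of the $\tau$ and $\tau^2$ factors, and the remaining step-size conditions, $180\eta_L^2K_{t,i}^2L^2\tau<1$ and $2L\eta\eta_L+6\tau^2L^2\eta^2\eta_L^2\le 1$, are what keep these contributions absorbable.

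Next I would control $\|\G_t\|^2$ by splitting $\G_t$ into its conditional mean plus zero-mean stochastic noise: the noise contributes variance terms scaled like $\tfrac{1}{m}\tfrac{1}{K_t}\sigma_L^2$ (the $1/m$ from averaging over the $m$ collected workers, the $1/K$ from averaging the $K_{t,i}$ local gradients at each worker), while the mean contributes gradient-norm and drift terms that feed back into the previous estimates. Substituting the drift and staleness bounds into the conditional descent inequality yields a per-round bound in which $\mathbb{E}\|\nabla f(\x_t)\|^2$ appears with a positive coefficient and the error terms carry the three local-step averages $1/K_t$, $\bar{K}_t$, $\hat{K}_t^2$ exactly as they enter $\alpha_L$ and $\alpha_G$.

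Finally I would sum over $t=0,\dots,T-1$: the function-value differences telescope to $f_0-f_*$, and the staleness terms are re-indexed by noting that each $\mathbb{E}\|\G_s\|^2$ lies in at most $\tau$ delay windows, so $\sum_t \tau\eta^2\sum_s\mathbb{E}\|\G_s\|^2 \le \tau^2\eta^2\sum_s\mathbb{E}\|\G_s\|^2$; moving these to the left-hand side and dominating them via the step-size conditions leaves a positive multiple of $\tfrac1T\sum_t\mathbb{E}\|\nabla f(\x_t)\|^2$, after which dividing through and collecting the residual terms gives the stated bound with the claimed $\alpha_L,\alpha_G$. I expect the main obstacle to be exactly this cross-temporal coupling from staleness: the delay error at round $t$ depends on the updates $\G_s$ of earlier rounds, which themselves depend on gradient norms and client drift at those rounds, so the terms cannot be bounded round-by-round in isolation. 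Disentangling them through the double-counting/re-indexing argument and verifying that the $\tau$-dependent step-size conditions make the resulting sum absorbable---while simultaneously tracking the three distinct averages of the heterogeneous $K_{t,i}$ through every inequality---is what makes the anarchic analysis substantially more delicate than the synchronous FedAvg case.
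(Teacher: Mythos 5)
Your proposal follows essentially the same route as the paper's proof: descent lemma plus the polarization identity to isolate $-\tfrac{1}{2}\eta\eta_L\|\nabla f(\x_t)\|^2$, a three-way split of the mismatch into staleness, heterogeneity, and client drift (with the drift recursion consuming $6\eta_L^2(2K_{t,i}^2-3K_{t,i}+1)L^2\le 1$ and the staleness term handled by Cauchy--Schwarz over at most $\tau$ summands), and finally the same re-indexing of the cross-temporal double sums so that the remaining step-size conditions absorb the delayed gradient-norm and $\|\sum_i\Delta_i\|^2$ terms before telescoping. The plan is correct and correctly identifies where each hypothesis is used; no gaps.
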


The learning rates conditions imply that $\eta \eta_L = \mathcal{O}(\frac{1}{\tau L}) $ and $\eta_L^2 \leq \frac{1}{K_{t,i}^2}$, which is a natural extension of that in SGD. 
With Theorem~\ref{thm:arbitrary}, if we assume a constant local step number and proper learning rates, we immediately have the following convergence rate for AFA-CD, which implies the ``linear speedup effect'' in both $m$ and $K$.

\begin{restatable}[Linear Speedup to an Error Ball] {corollary}{arbitraryC}
\label{cor:arbitrary}
Suppose a constant local step $K$ for each worker, by setting $\eta_L = \frac{1}{\sqrt{T}}$, and $\eta = \sqrt{mK}$,
the convergence rate of AFA-CD with general worker information arrival processes is:
$$\mathcal{O}\bigg(\frac{1}{m^{1/2} K^{1/2} T^{1/2}} \bigg) + \mathcal{O} \bigg(\frac{\tau^2}{T} \bigg) + \mathcal{O}\bigg(\frac{K^2}{T} \bigg) + \mathcal{O}(\sigma_G^2).$$
\end{restatable}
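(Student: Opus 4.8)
The plan is to derive the stated rate by specializing the general bound in Theorem~\ref{thm:arbitrary} to a constant local-step count and the prescribed learning rates, so the argument is essentially a careful substitution and bookkeeping of orders rather than a new estimate. First I would simplify the averaged quantities: when $K_{t,i} = K$ for all $t$ and $i$, the three averages collapse to $\frac{1}{K_t} = \frac{1}{K}$, $\bar{K}_t = K$, and $\hat{K}_t^2 = K^2$ for every $t$, so the outer time-averages $\frac{1}{T}\sum_t (\cdot)$ also equal these same constants. This removes all heterogeneity-in-$K$ from $\alpha_L$ and $\alpha_G$ and reduces them to simple monomials in $\eta_L$, $\eta$, $K$, $m$, and $\tau$.

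Before substituting, I would verify that the learning-rate feasibility conditions of Theorem~\ref{thm:arbitrary} are met, so that its invocation is legitimate. With $\eta_L = \frac{1}{\sqrt{T}}$ and $\eta = \sqrt{mK}$ one has $\eta\eta_L = \sqrt{mK/T}$ and $\eta^2\eta_L^2 = mK/T$. Each of the three conditions, namely $6\eta_L^2(2K^2 - 3K + 1)L^2 \le 1$, $\,180\eta_L^2 K^2 L^2 \tau < 1$, and $2L\eta\eta_L + 6\tau^2 L^2 \eta^2\eta_L^2 \le 1$, then reduces to an inequality of the form $c/T \le 1$ or $c'/\sqrt{T} \le 1$ with constants depending only on $L, K, \tau, m$; hence all hold once $T$ exceeds an explicit threshold, which is what the Big-O statement tacitly assumes.

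The core step is then to substitute and read off the order of each term, treating $L$, $f_0 - f_*$, and $\sigma_L^2$ as absolute constants while keeping $\sigma_G^2$ explicit, since it is the non-vanishing error floor. The optimization term is $\frac{4(f_0 - f_*)}{\eta\eta_L T} = \frac{4(f_0 - f_*)}{\sqrt{mKT}} = \mathcal{O}(1/\sqrt{mKT})$. In $\alpha_L$, the first sub-term $\frac{L\eta\eta_L}{mK}$ equals $\frac{L}{\sqrt{mKT}} = \mathcal{O}(1/\sqrt{mKT})$; the second sub-term $\frac{3\tau^2 L^2 \eta^2\eta_L^2}{mK} = \frac{3\tau^2 L^2}{T} = \mathcal{O}(\tau^2/T)$; and the third sub-term $\frac{15\eta_L^2 L^2}{2}K = \frac{15 L^2 K}{2T} = \mathcal{O}(K/T)$. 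In $\alpha_G$, the constant $\frac{3}{2}$ yields $\mathcal{O}(\sigma_G^2)$, while $45 L^2\eta_L^2 K^2 = 45 L^2 K^2 / T = \mathcal{O}(K^2/T)$ (after folding in the constant $\sigma_G^2$).

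Finally I would combine the pieces: the two $\mathcal{O}(1/\sqrt{mKT})$ contributions merge, the $\mathcal{O}(K/T)$ term is dominated by $\mathcal{O}(K^2/T)$ since $K \ge 1$ and is absorbed, and the $\sigma_G^2$-weighted $\mathcal{O}(K^2/T)$ piece is likewise folded in, leaving exactly the four advertised terms. I do not expect a genuine obstacle here; the only points requiring discipline are (i) confirming the feasibility thresholds so that Theorem~\ref{thm:arbitrary} applies, and (ii) being careful about what counts as a constant inside the Big-O — in particular keeping $\sigma_G^2$ explicit, so the result is correctly read as a \emph{linear speedup only to an $\mathcal{O}(\sigma_G^2)$ error ball} rather than to a stationary point, consistent with the corollary's title.
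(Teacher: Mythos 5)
Your proposal is correct and follows essentially the same route as the paper: a direct substitution of $\eta_L = 1/\sqrt{T}$, $\eta = \sqrt{mK}$, and the constant-$K$ simplifications $\frac{1}{K_t}=\frac{1}{K}$, $\bar K_t = K$, $\hat K_t^2 = K^2$ into the bound of Theorem~\ref{thm:arbitrary}, followed by reading off the orders of each term. Your additional check that the learning-rate feasibility conditions hold for $T$ large enough is a point the paper's proof leaves implicit, but it does not change the argument.
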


\begin{rem}{\em
Clearly, due to the chaotic worker behaviors in AFL, one cannot expect that an algorithm for AFL can converge under any arbitrary condition.
Theorem~\ref{thm:arbitrary} and Corollary~\ref{cor:arbitrary} suggest that, as long as the consequence of the chaotic workers behaviors remains ``manageable'' in the sense that i) the maximum delay due to asynchrony is bounded and ii) the learning rates used by the workers and server are sufficiently small, then the iterates produced by AFA-CD can converge to a neighborhood around a stationary point.
Moreover, if the workers are less ``anarchic'' in the sense that they know the $T$-value from the server and are willing to set $\eta_L = \frac{1}{\sqrt{T}}$ accordingly, then the non-vanishing error term $\mathcal{O}(\sigma_G^2)$ in Corollary~\ref{cor:arbitrary} matches the lower bound in Theorem~\ref{thm:lb}.
This implies that the convergence error of AFL-CD is {\em order-optimal} in this setting.
}
\end{rem}

\begin{rem}{\em
Recall that the non-vanishing convergence error $\mathcal{O}(\sigma_G^2)$ in Corollary~\ref{cor:arbitrary} is a consequence of objective function drift under the general worker information arrivals (no assumption on the arrivals of the worker participation in each round of update) and is independent of the choices of learning rates, the number of local update steps, and the number of global update rounds (more discussion in the supplementary material).
Also, for a sufficiently large $T$, the dominant term $\mathcal{O}(\frac{1}{m^{1/2} K^{1/2} T^{1/2}})$ implies that AFA-CD achieves the linear speedup in terms of $m$ and $K$ before reaching a constant error neighborhood with size $\mathcal{O}(\sigma_G^2)$.
}
\end{rem}
%
%

Given the weak convergence result under general workers' information arrivals, it is important to understand what extra conditions on the worker information arrivals are needed under AFL in order to achieve stronger convergence performance.
Toward this end, we consider a special setting where the arrivals of worker returned information in each round for global update is uniformly distributed among the workers.
In this setting, $\mathcal{M}_t$ can be viewed as a subset with size $m$ independently and uniformly sampled from $[M]$ without replacement.
It has been empirically found in~\citep{mcmahan2016communication,li2019federated} that, for FL systems with a massive number of workers, the assumption of uniformly distributed arrivals is a good approximation for worker participation in cross-device FL.
In what follows, we show that the convergence performance of AFA-CD in this special setting can be improved as follows (see proof details in Appendix~\ref{appdx_uniform}):

\begin{restatable} {theorem}{uniform}
\label{thm:uniform}
Under the same delay condition in Theorem~\ref{thm:arbitrary} and suppose  that the server-side and worker-side learning rates $\eta$ and $\eta_L$ are chosen as such that the following relationships hold:
$6 \eta_L^2 (2K_{t,i}^2 - 3K_{t,i} + 1)L^2 \leq 1, \forall t, i$, $L \eta \eta_L + L^2 \eta^2 \eta_L^2 \tau^2 \leq \frac{1}{2M}$, and
$120 L^2 \hat{K}_t^2 \eta_L^2 \tau < 1, \forall t$.
Then, under Assumptions~\ref{a_smooth}--~\ref{a_variance}, the output sequence $\{ \x_t \}$ generated by AFA-CD with uniformly distributed worker information arrivals satisfies:
\begin{align*}
\frac{1}{T} \sum_{t=0}^{T-1} \mathbb{E} \| \nabla f(\x_t) \|^2_2 \leq \frac{4(f_0 - f_*)}{ \eta \eta_L T} + 4 \big( \alpha_L \sigma^2_L + \alpha_G \sigma_G^2 \big),
\end{align*}
where $\alpha_L$ and $\alpha_G$ are defined as following:
\begin{align*}
    \alpha_L &= \frac{L \eta \eta_L}{m} \frac{1}{T} \sum_{t=0}^{T-1} \frac{1}{K_t} + \frac{2 \tau^2 L^2 \eta^2 \eta_L^2}{m} \frac{1}{T} \sum_{t=0}^{T-1} \frac{1}{K_t} \\
    & \quad + 5 \eta_L^2 L^2 \frac{1}{T} \sum_{t=0}^{T-1} \bar{K}_t , \\
    \alpha_G &= 30 L^2 \eta_L^2 \frac{1}{T} \sum_{t=0}^{T-1} \hat{K}_t^2,
\end{align*}
and other parameters are defined the same as in Theorem~\ref{thm:arbitrary}.
\end{restatable}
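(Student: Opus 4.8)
The plan is to follow the standard one-round descent analysis for two-sided-rate FedAvg-style methods, but with two extra layers of bookkeeping to handle (i) the asynchronous delay $\tau_{t,i}$ and (ii) the uniform-without-replacement sampling of $\mathcal{M}_t$. I would begin from the $L$-smoothness of $f$ (Assumption~\ref{a_smooth}) to obtain the per-round inequality
\begin{align*}
\mathbb{E}[f(\x_{t+1})] \le \mathbb{E}[f(\x_t)] - \eta\,\mathbb{E}\langle \nabla f(\x_t), \G_t\rangle + \tfrac{L\eta^2}{2}\,\mathbb{E}\|\G_t\|^2,
\end{align*}
where $\G_t = \frac{1}{m}\sum_{i\in\mathcal{M}_t}\G_i(\x_{t-\tau_{t,i}})$. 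The crucial structural fact unique to this setting is that, conditioned on the history, uniform sampling gives
\begin{align*}
\mathbb{E}_{\mathcal{M}_t}[\G_t] = \frac{1}{M}\sum_{i=1}^M \G_i(\x_{t-\tau_{t,i}}),
\end{align*}
i.e., the aggregated update is an \emph{unbiased} estimator of the full-worker average. This is precisely what eliminates the persistent, non-vanishing constant in $\alpha_G$ (the $\tfrac{3}{2}$ of Theorem~\ref{thm:arbitrary}) and is the source of the sharper $\alpha_G = 30L^2\eta_L^2\tfrac{1}{T}\sum_t \hat{K}_t^2$ claimed here.

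Next I would control the two deviation (drift) terms that obstruct a clean descent. First, the \emph{client drift} $\mathbb{E}\|\x_{\mu,k}^i - \x_\mu\|^2$ caused by the $K_{t,i}$ local SGD steps: unrolling the local recursion, applying Assumptions~\ref{a_smooth}--\ref{a_variance}, and closing the resulting geometric recursion yields a bound of order $\eta_L^2 K_{t,i}^2(\sigma_L^2 + \sigma_G^2 + \|\nabla f\|^2)$; the condition $6\eta_L^2(2K_{t,i}^2-3K_{t,i}+1)L^2\le 1$ is exactly the contraction requirement that makes this recursion summable. Second, the \emph{delay drift} $\mathbb{E}\|\x_t - \x_{t-\tau_{t,i}}\|^2$: since $\x_t - \x_{t-\tau_{t,i}} = -\eta\sum_{s=t-\tau_{t,i}}^{t-1}\G_s$ with $\tau_{t,i}\le\tau$, I would bound it by $\eta^2\tau\sum_{s}\|\G_s\|^2$ via Cauchy--Schwarz and back-substitute the second-moment estimate of $\G_s$. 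The remaining two learning-rate conditions, $120L^2\hat{K}_t^2\eta_L^2\tau<1$ and $L\eta\eta_L + L^2\eta^2\eta_L^2\tau^2\le\frac{1}{2M}$, are what guarantee this cross-round coupling does not blow up; the $\frac{1}{2M}$ on the right-hand side arises because, after Jensen on the full-worker average $\frac1M\sum_i(\cdot)$, the delay term must be dominated uniformly over all $M$ workers.

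For the second-moment term $\mathbb{E}\|\G_t\|^2$ I would split into $\|\mathbb{E}_{\mathcal{M}_t}\G_t\|^2$, the sampling variance, and the stochastic-gradient variance. Here the without-replacement sampling contributes a finite-population factor $\frac{M-m}{m(M-1)}$ (of order $\frac{1}{m}$) in front of the across-worker dispersion, producing the $\frac{1}{m}$-scaled $\sigma_L^2$ contributions in $\alpha_L$; combined with the client-drift bound this yields the $\eta_L^2 L^2\hat{K}_t^2$ terms in $\alpha_G$. I would then assemble everything: substitute the drift and second-moment bounds into the descent inequality, apply the identity $\langle a,b\rangle = \tfrac12\|a\|^2 + \tfrac12\|b\|^2 - \tfrac12\|a-b\|^2$ to the inner-product term to extract a negative multiple of $\|\nabla f(\x_t)\|^2$ whose effective coefficient scales with the product $\eta\eta_L$ (the two-sided-rate structure), use the learning-rate conditions to keep this coefficient nonnegative, telescope over $t=0,\dots,T-1$, and divide by $\eta\eta_L T/4$ to reach the stated bound with the claimed $\alpha_L,\alpha_G$.

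The main obstacle I anticipate is the delay--locality coupling. Because each $\x_{t-\tau_{t,i}}$ is itself a random iterate determined by \emph{earlier} sampling and local computation, the delay-drift term feeds past second moments $\|\G_s\|^2$ back into round $t$, so the ``descent'' is in fact a recursion coupled across up to $\tau$ consecutive rounds rather than a single-step inequality. Carefully ordering the nested expectations --- over the stochastic samples $\xi$, over the sampled set $\mathcal{M}_t$, and over the delays --- and arranging the summation so that this multi-round recursion telescopes cleanly, using exactly the three learning-rate conditions to absorb the coupling, is where the real work lies and is what distinguishes this analysis from the synchronous FedAvg case.
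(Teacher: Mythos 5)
Your proposal follows essentially the same route as the paper's proof: the descent lemma, the key observation that uniform sampling makes the aggregate an unbiased estimator of the full-worker average $\frac{1}{M}\sum_{i}\Delta_i(\x_{t-\tau_{t,i}})$ (which is exactly what removes the constant $\sigma_G^2$ term from $\alpha_G$), the client-drift/delay-drift split of $\|\nabla f(\x_t)-\frac{1}{M}\sum_i\Delta_i(\x_{t-\tau_{t,i}})\|^2$, the exact second-moment computation for without-replacement sampling, and the $\tau$-window telescoping absorbed by the three stated learning-rate conditions. The only minor slip is attributing the $\sigma_L^2/m$ terms in $\alpha_L$ to the finite-population sampling variance --- in the paper they come from the martingale-difference variance of the local stochastic gradients (Lemma~\ref{lem:2}), while the sampling variance of the $\Delta_i$'s is entirely absorbed into the negative $-\frac{\eta\eta_L}{2M^2}\|\sum_i\Delta_i\|^2$ term under the $\frac{1}{2M}$ condition --- but this does not affect the structure or validity of the argument.
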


The requirement for learning rates could be easily satisfied as that in Theorem~\ref{thm:arbitrary}.
Furthermore, with appropriate server- and worker-side learning rates, we immediately have the following linear speedup convergence result for AFA-CD:
\begin{restatable}[Linear Speedup to a Stationary Point] {corollary}{uniformC}
\label{cor:uniform}
Suppose a constant local step $K$, let $\eta_L = \frac{1}{\sqrt{T}}$, and $\eta = \sqrt{mK}$,
the convergence rate of AFA-CD with uniformly distributed worker information arrivals is: 
\begin{align*}
    &\mathcal{O}(\frac{1}{m^{1/2} K^{1/2} T^{1/2}}) + \mathcal{O}\bigg(\frac{\tau^2}{T}\bigg) + \mathcal{O}\bigg(\frac{K^2}{T}\bigg)
\end{align*}
\end{restatable}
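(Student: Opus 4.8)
The plan is to derive Corollary~\ref{cor:uniform} directly from Theorem~\ref{thm:uniform} by specializing to a constant number of local steps and then substituting the prescribed learning rates. Setting $K_{t,i}=K$ for all $t,i$ collapses the three local-step quantities to constants, namely $\frac{1}{K_t}=\frac{1}{K}$, $\bar K_t=K$, and $\hat K_t^2=K^2$, so that $\alpha_L$ and $\alpha_G$ in Theorem~\ref{thm:uniform} become explicit expressions in $\eta,\eta_L,K,m,\tau,L$ alone. Everything downstream is then a matter of evaluating Big-O orders.

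Before substituting, I would first confirm that the choices $\eta_L=1/\sqrt{T}$ and $\eta=\sqrt{mK}$ are admissible, i.e., that they satisfy the three learning-rate conditions of Theorem~\ref{thm:uniform}. With $\eta_L^2=1/T$, the condition $6\eta_L^2(2K^2-3K+1)L^2\le 1$ holds once $T\gtrsim K^2L^2$; the condition $120L^2\hat K_t^2\eta_L^2\tau<1$ becomes $120L^2K^2\tau/T<1$, again valid for large $T$; and the coupling condition $L\eta\eta_L+L^2\eta^2\eta_L^2\tau^2\le \frac{1}{2M}$ reads $L\sqrt{mK/T}+L^2mK\tau^2/T\le \frac{1}{2M}$, whose left-hand side vanishes as $T\to\infty$. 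Hence all three constraints are met for $T$ sufficiently large, and the bound of Theorem~\ref{thm:uniform} applies verbatim.

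The remaining work is arithmetic bookkeeping. Substituting the rates into the optimization term gives $\frac{4(f_0-f_*)}{\eta\eta_L T}=\frac{4(f_0-f_*)}{\sqrt{mK}\,\sqrt{T}}=\mathcal{O}(1/\sqrt{mKT})$. For $\alpha_L\sigma_L^2$, the three summands evaluate to $\frac{L\eta\eta_L}{mK}=\mathcal{O}(1/\sqrt{mKT})$, $\frac{2\tau^2L^2\eta^2\eta_L^2}{mK}=\mathcal{O}(\tau^2/T)$, and $5\eta_L^2L^2K=\mathcal{O}(K/T)$. For $\alpha_G\sigma_G^2$, we obtain $30L^2\eta_L^2K^2\sigma_G^2=\mathcal{O}(K^2/T)$. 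Collecting these and absorbing the $\mathcal{O}(K/T)$ term into the larger $\mathcal{O}(K^2/T)$ term (valid since $K\ge 1$) yields exactly the claimed rate $\mathcal{O}(1/\sqrt{mKT})+\mathcal{O}(\tau^2/T)+\mathcal{O}(K^2/T)$.

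The key structural point, and the reason this corollary delivers convergence to a genuine stationary point rather than to an $\mathcal{O}(\sigma_G^2)$ error ball as in Corollary~\ref{cor:arbitrary}, is that $\alpha_G$ in Theorem~\ref{thm:uniform} carries no additive constant (contrast the $\frac{3}{2}$ in the arbitrary-arrival $\alpha_G$), so the heterogeneity term $\alpha_G\sigma_G^2$ is itself $\mathcal{O}(K^2/T)$ and vanishes with $T$. Since all the genuine analytic difficulty was already absorbed into the proof of Theorem~\ref{thm:uniform}, in particular exploiting that uniform sampling makes the aggregated update an unbiased estimator of $\nabla f$ (which removes the non-vanishing bias), I do not expect a substantive obstacle here; the only point requiring a little care is verifying that the coupling condition with the $\frac{1}{2M}$ right-hand side is eventually satisfied, which holds for $T$ large enough.
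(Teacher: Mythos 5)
Your proposal is correct and follows essentially the same route as the paper's own proof: specialize Theorem~\ref{thm:uniform} to constant $K$, substitute $\eta_L = 1/\sqrt{T}$ and $\eta = \sqrt{mK}$, and read off the orders of $\alpha_L$ and $\alpha_G$ (the paper states exactly these orders and stops). Your additional check that the learning-rate conditions hold for sufficiently large $T$ is a minor but welcome refinement the paper omits.
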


\begin{rem}{\em
For a sufficiently large $T$, the linear speedup convergence to a stationary point (rather than a constant error neighborhood) can be achieved under bounded maximum delay $\tau$, i.e., $\mathcal{O}(\frac{1}{m^{1/2} K^{1/2} T^{1/2}})$.
Note that this rate does not depend on the delay $\tau$ after sufficiently many rounds $T$ (i.e., $\tau \leq \min \{ \frac{T^{1/4}}{m^{1/4} K^{1/4}}, \frac{T^{1/2}}{m^{1/2} K^{5/2}} \}$), the negative effect of using outdated information in such an asynchronous setting vanishes asymptotically.
Further, for $\sigma_G=0$ (i.i.d. data) and $K=1$ (single local update step), AFA-CD can be viewed as an extension of the AsySG-con algorithm~\citep{lian2015asynchronous} in asynchronous parallel distributed optimization.
It can be readily verified that AFA-CD achieves the same rate as that of the AsySG-con algorithm.
Furthermore,  
AsyncCommSGD~\cite{avdiukhin2021federated} achieves $O(\frac{1}{\sqrt{mKT}})$ for FL by allowing asynchronous communication assuming an identical computation rate across workers and bounded gradients.
Interestingly, AFA-CD achieves the same convergence rate while allowing flexible worker participation and without such assumptions.
Surprisingly, this rate even matches the best known rate for the general non-convex setting in FL~\cite{karimireddy2020scaffold,reddi2020adaptive}.
It is worth noting that ~\citet{nguyen2021federated} proposed the FedBuff algorithm for FL, which is akin to AFA-CD and boosts FL concurrency.
However, FedBuff achieves an $O(\frac{1}{\sqrt{TK}})$ convergence rate, which does not achieve the linear speedup in terms of $m$.
}
\end{rem}

\subsection{The AFA-CS Algorithm for Cross-Silo AFL} \label{subsec:cs} 

{\bf 1) The AFA-CS Algorithm:} As mentioned earlier, cross-silo FL is suitable for collaborative learning among a relatively small number of (organizational) workers.
Thanks to the relatively small number of workers, each worker's feedback can be stored at the server.
As a result, the server could reuse the historical information of each specific worker in each round of global update.

\begin{algorithm}[t!]
\caption{The AFA-CS Algorithm for Cross-Silo AFL.} \label{alg:FedAvg_silo} 
\begin{algorithmic}
\STATE 
\emph{\bf At the Server (Concurrently with Workers):}
\begin{enumerate}[topsep=0.4em, leftmargin=1.5em, itemsep=-0.5pt]
\item In the $t-$th update round, collect $m$ local updates.
%
\item Update worker $i$'s information in the memory using the returned local update $\G_i$.
\item Aggregate and update:
$\G_t = \frac{1}{M} \sum_{i \in [M]} \G_i$, \quad
$\x_{t+1} = \x_t - \eta \G_t$.
\end{enumerate}
\STATE 
\emph{\bf At Each Worker (Concurrently with Server):} Same as AFA-CD Worker Code.
\end{algorithmic}
\end{algorithm}

As shown in Algorithm~\ref{alg:FedAvg_silo}, the AFA-CS algorithm also closely follows the AFL architecture as shown in Algorithm~\ref{alg:afl}.
In each round of global model update, a subset of workers could participate in the training (Server Code, Line~1).
Compared to AFA-CD, the key difference in AFA-CS is in Line~2 of the Server Code, where the server stores the collected local updates $\{\G_i \}$ for each worker $i \in \mathcal{M}_t$ into the memory space at the server (Server Code, Line~2). 
As a result, whenever a worker $i$ returns  a local update to the server upon finishing its local update steps, the server will update the memory space corresponding to worker $i$ to replace the old information with this newly received update from worker $i$.
Similar to AFA-CD, every $m$ new updates in the AFA-CS algorithm trigger the server to aggregate all the $\G_i, i \in [M]$ 
 and update the global model.
The Worker Code in AFA-CS is exactly the same as AFA-CD and its description is omitted for brevity.

{\bf 2) Convergence Analysis of the AFA-CS Algorithm:} 
We divide stochastic gradient returns $\{\G_i \}$ into two groups.
One is for those without delay ($\G_i(x_t), i \in \mathcal{M}_t, | \mathcal{M}_t | =m'$) and the other is for those with delay ($\G_i(\x_{t-\tau_{t, i}}), i \in \mathcal{M}_t^c, | \mathcal{M}_t^c | = M - m'$).
For cross-silo AFL, the AFA-CS algorithm achieves the following convergence performance (see proof details in Appendix~\ref{appdx_silo}):

\begin{restatable} {theorem}{silo}
\label{thm:silo}
Suppose that the resultant maximum delay in the system is bounded, i.e.,  $\tau := \max_{\t \in [T], i \in \mathcal{M}_t^c} \{ \tau_{t,i} \} < \infty$. Suppose that the server-side and worker-side learning rates $\eta$ and $\eta_L$ are chosen as such that the following relationships hold: $6 \eta_L^2 (2K_{t,i}^2 - 3K_{t,i} + 1)L^2 \leq 1, \forall t, i$, 
$\left( \frac{\eta \eta_L (M - m^{'})^2 L^2 \tau^2}{M^2} + \frac{L}{2} \right) \eta \eta_L \leq \frac{1}{4},$
and $\frac{30 L^2 \eta_L^2 \tau}{M} \left(\sum_{i \in [M]} K_{t,i}^2 \right) \leq \frac{1}{4}.$
Then, under Assumptions~\ref{a_smooth}-~\ref{a_variance}, the output sequence $\{ \x_t \}$ generated by the AFA-CS algorithm under general worker information arrival processes 
satisfies:
\begin{align*}
    \frac{1}{T} \sum_{t=0}^{T-1} \left\| \nabla f(\x_t) \right\|^2 &\leq \frac{4 f(\x_0) - f(\x_T)}{\eta \eta_L T} + \alpha_L \sigma_L^2 + \alpha_G \sigma_G^2,
\end{align*}
where the constants $\alpha_{L}$ and $\alpha_{G}$ are defined as follows:
\begin{align*}
    \alpha_L &= \frac{4}{M} \bigg[5L^2 \eta_L^2 \frac{1}{T} \sum_{t=0}^{T-1} \bar{K}_t \\
    &\quad + \left( \frac{2\eta^2 \eta_L^2 (M - m^{'})^2 L^2 \tau^2}{M^2} + L \eta \eta_L \right) \frac{1}{T} \sum_{t=0}^{T-1} \frac{1}{K_t} \bigg], \\
    \alpha_G &= \frac{120L^2 \eta_L^2}{M} \frac{1}{T} \sum_{t=0}^{T-1} \hat{K}_t^2,
\end{align*}

and other parameters are defined the same as in Theorem~\ref{thm:arbitrary}.
\end{restatable}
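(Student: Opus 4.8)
The plan is to run a descent-lemma argument for the two-sided learning rate update, adapted to the cross-silo aggregate that mixes fresh and stale gradients. First I would invoke Assumption~\ref{a_smooth} to write the per-round descent inequality from $L$-smoothness applied to $\x_{t+1} = \x_t - \eta\G_t$, and take expectation conditioned on the history; with the two-sided structure this produces a dominant term proportional to $-\eta\eta_L\langle \nabla f(\x_t), \frac{1}{\eta_L}\mathbb{E}\G_t\rangle$ and a curvature term controlled by $\frac{L}{2}\mathbb{E}\|\x_{t+1}-\x_t\|^2$. The central object is the aggregate $\G_t = \frac{1}{M}\big(\sum_{i\in\mathcal{M}_t}\G_i(\x_t) + \sum_{i\in\mathcal{M}_t^c}\G_i(\x_{t-\tau_{t,i}})\big)$, whose conditional mean is $\frac{1}{M}\sum_{i\in[M]}\frac{1}{K_{t,i}}\sum_k \nabla f_i(\x_{\mu_i,k}^i)$ with $\mu_i = t$ for fresh workers and $\mu_i = t-\tau_{t,i}$ for stale ones. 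The crucial structural advantage over the cross-device case is that this sum runs over \emph{all} $M$ workers, so $\frac{1}{M}\sum_{i\in[M]}\nabla f_i = \nabla f$ exactly and there is no partial-participation bias; this is what collapses the variance prefactors to $\frac{1}{M}$ and removes a standalone sampling term in $\sigma_G^2$.

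Next I would control the two sources of deviation between the conditional mean of $\G_t$ and $\nabla f(\x_t)$. The first is the local drift $\|\x_{\mu_i,k}^i - \x_{\mu_i}\|$ accumulated over the $K_{t,i}$ local SGD steps; using $L$-smoothness, Assumption~\ref{a_variance}, and the first learning-rate condition $6\eta_L^2(2K_{t,i}^2 - 3K_{t,i}+1)L^2 \leq 1$, this is bounded by terms proportional to $\eta_L^2 \bar{K}_t \sigma_L^2$ and $\eta_L^2 \hat{K}_t^2 \sigma_G^2$, which is the origin of the $5L^2\eta_L^2\bar{K}_t$ and $120 L^2 \eta_L^2 \hat{K}_t^2$ pieces. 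The second is the staleness drift $\|\x_t - \x_{t-\tau_{t,i}}\|$ for $i\in\mathcal{M}_t^c$; here I would unroll the server recursion over the delay window, write $\x_t - \x_{t-\tau_{t,i}} = \eta\sum_s \G_s$, and bound its squared norm by $\tau\sum_s \|\G_s\|^2$ via Cauchy--Schwarz together with the bounded-delay hypothesis $\tau < \infty$. Applying the polarization identity $\langle a,b\rangle = \frac{1}{2}(\|a\|^2 + \|b\|^2 - \|a-b\|^2)$ to the inner product then extracts the dominant negative multiple of $\|\nabla f(\x_t)\|^2$ while expressing the remainder through the drift and staleness bounds above.

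I would then bound the second moment $\mathbb{E}\|\G_t\|^2$ by splitting it into its mean-squared part and a variance part; because each $\G_i$ averages $K_{t,i}$ independent unbiased samples (Assumption~\ref{a_unbias}) and the outer average is over $M$ workers, the variance contributes the $\frac{1}{M}\frac{1}{K_t}\sigma_L^2$-type terms that appear with the $L\eta\eta_L$ and $\tau^2$ coefficients in $\alpha_L$, while the staleness window injects the $\frac{(M-m^{'})^2 L^2 \tau^2}{M^2}$ factor since only the $M - m^{'}$ stale gradients carry delay. Finally I would sum the per-round inequality over $t = 0,\dots,T-1$ and telescope $f(\x_0) - f(\x_T)$. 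The delay-dependent terms produced by the staleness unrolling reappear multiplied by $\frac{\eta\eta_L (M-m^{'})^2 L^2 \tau^2}{M^2}$ and the drift terms by $\frac{30 L^2 \eta_L^2 \tau}{M}\sum_{i\in[M]} K_{t,i}^2$; the second condition $\left(\frac{\eta\eta_L (M-m^{'})^2 L^2 \tau^2}{M^2} + \frac{L}{2}\right)\eta\eta_L \leq \frac{1}{4}$ and the third condition $\frac{30 L^2 \eta_L^2 \tau}{M}\left(\sum_{i\in[M]} K_{t,i}^2\right) \leq \frac{1}{4}$ are calibrated exactly so that these fold back into the descent and curvature terms, leaving a clean coefficient in front of $\frac{1}{T}\sum_t \|\nabla f(\x_t)\|^2$. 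Dividing through and rearranging yields the stated bound with the claimed $\alpha_L$ and $\alpha_G$.

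\textbf{Main obstacle.} I expect the crux to be the staleness analysis: the aggregate couples $\nabla f(\x_t)$ with gradients evaluated at the delayed iterates $\x_{t-\tau_{t,i}}$, and unrolling $\|\x_t - \x_{t-\tau_{t,i}}\|^2$ re-introduces $\|\G_s\|^2$, hence $\|\nabla f(\x_s)\|^2$, terms from earlier rounds. Ensuring that the summation over $t$ does not create a circular or accumulating dependence — so that the $\tau^2$ terms are absorbed rather than growing with $T$ — is the delicate step, and it is precisely what dictates the form of the second and third learning-rate conditions.
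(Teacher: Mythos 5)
Your proposal follows essentially the same route as the paper's proof: the same fresh/stale split of the aggregate over all $M$ workers (which is exactly what eliminates the standalone $\sigma_G^2$ sampling bias), the same polarization-identity decomposition of the inner-product term into local-drift and staleness pieces, the same unrolling of $\|\x_t - \x_{t-\tau_{t,i}}\|^2$ over the delay window, and the same role for the second and third learning-rate conditions in absorbing the recursively re-introduced $\|\nabla f(\x_s)\|^2$ and update-norm terms after summing over $t$. The plan is correct and the attribution of each piece of $\alpha_L$ and $\alpha_G$ to its source matches the paper's derivation.
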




With appropriate learning rates, we immediately have {\em stronger} linear speedup convergence:
\begin{restatable}[Linear Speedup] {corollary}{siloC}
\label{cor:silo}
Suppose a constant local step $K$, and let $\eta_L = \frac{1}{\sqrt{T}}$, and $\eta = \sqrt{MK}$,
the convergence rate of the AFA-CS algorithm under general worker information arrival processes is: 
$$ \mathcal{O}\bigg(\frac{1}{M^{1/2} K^{1/2} T^{1/2}}\bigg) + \mathcal{O}\bigg(\frac{K^2}{MT}\bigg) + \mathcal{O}\bigg(\frac{\tau^2 (M-m^{'})^2}{TM^2}\bigg).$$
\end{restatable}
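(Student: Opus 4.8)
The plan is to derive Corollary~\ref{cor:silo} as a direct specialization of the bound in Theorem~\ref{thm:silo}, so the work is entirely a substitution-and-collection exercise rather than a fresh analysis. First I would fix the local step count to the constant value $K_{t,i} = K$ for all $t,i$, which collapses the averaged quantities to $\bar{K}_t = K$, $\hat{K}_t^2 = K^2$, and $1/K_t = 1/K$, so that every $\frac{1}{T}\sum_{t}(\cdot)$ appearing in $\alpha_L$ and $\alpha_G$ becomes the corresponding constant. With the choices $\eta_L = 1/\sqrt{T}$ and $\eta = \sqrt{MK}$ one has the handy identities $\eta\eta_L = \sqrt{MK/T}$, $\eta_L^2 = 1/T$, and $\eta^2\eta_L^2 = MK/T$, which I would substitute into each term in turn.

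The descent term $\frac{4(f_0 - f_*)}{\eta\eta_L T}$ becomes $\frac{4(f_0-f_*)}{\sqrt{MKT}}$, giving the leading $\mathcal{O}(1/(M^{1/2}K^{1/2}T^{1/2}))$ rate. For $\alpha_L\sigma_L^2$ I would evaluate its three pieces separately (each carrying the overall $\frac{4}{M}$ prefactor): the $5L^2\eta_L^2\bar{K}_t$ piece gives $\mathcal{O}(K/(MT))$; the delay piece $\frac{2\eta^2\eta_L^2(M-m')^2L^2\tau^2}{M^2}\cdot\frac{1}{K}$ reduces, after the $MK/T$ factor cancels against the $1/K$, to $\mathcal{O}(\tau^2(M-m')^2/(M^2 T))$; and the $L\eta\eta_L\cdot\frac{1}{K}$ piece collapses to $\mathcal{O}(1/(M^{1/2}K^{1/2}T^{1/2}))$, matching the descent term. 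For $\alpha_G\sigma_G^2$, the single term $\frac{120L^2\eta_L^2}{M}\hat{K}_t^2$ becomes $\mathcal{O}(K^2/(MT))$. Collecting dominant terms — and noting that the $\mathcal{O}(K/(MT))$ contribution from $\alpha_L$ is absorbed into the larger $\mathcal{O}(K^2/(MT))$ from $\alpha_G$ whenever $K \geq 1$ — yields exactly the three-term bound claimed.

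The one point that genuinely requires care, rather than bookkeeping, is verifying that the prescribed learning rates actually satisfy the three admissibility conditions of Theorem~\ref{thm:silo}, which is what legitimizes the substitution. I would check that for sufficiently large $T$ each holds: $6\eta_L^2(2K^2-3K+1)L^2 \leq 1$ needs $T = \Omega(L^2 K^2)$; the condition $\big(\frac{\eta\eta_L(M-m')^2L^2\tau^2}{M^2} + \frac{L}{2}\big)\eta\eta_L \leq \frac{1}{4}$ holds because, after substitution, its two summands equal $\frac{K(M-m')^2L^2\tau^2}{MT}$ and $\frac{L}{2}\sqrt{MK/T}$, both vanishing as $T\to\infty$; and $\frac{30L^2\eta_L^2\tau}{M}\sum_{i\in[M]}K_{t,i}^2 = \frac{30L^2\tau K^2}{T}$ needs $T = \Omega(L^2\tau K^2)$. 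Since all three reduce to lower bounds on $T$, the corollary holds in the large-$T$ regime that the $\mathcal{O}(\cdot)$ statement already presumes. The only conceptual remark worth making is that, unlike the cross-device Corollary~\ref{cor:arbitrary}, here the $\sigma_G^2$ coefficient $\alpha_G$ vanishes with $T$, so no non-vanishing error floor survives; this is the payoff of storing and re-aggregating all $M$ workers' most recent updates, and it is already encoded in Theorem~\ref{thm:silo}, so nothing beyond the substitution is required.
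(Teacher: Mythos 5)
Your proposal is correct and follows essentially the same route as the paper's own proof: a direct substitution of $\eta_L = 1/\sqrt{T}$, $\eta=\sqrt{MK}$, and constant $K$ into the bound of Theorem~\ref{thm:silo}, yielding the same three dominant terms. Your additional check that the three learning-rate admissibility conditions reduce to lower bounds on $T$ is a worthwhile detail the paper leaves implicit, but it does not change the argument.
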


\begin{rem}{\em
Compared to Corollary~\ref{cor:arbitrary}, we can see that, by reusing historical data, AFA-CS can eliminate the non-vanishing $\mathcal{O}(\sigma^2_G)$ error term even under general worker information arrival processes and bounded delay.
The bounded delay implicitly requires each workers at least participate in the training process, eliminating the worst-case scenario in Theorem~\ref{thm:lb}.
On the other hand, although the server only collects $m$ workers' feedback in each round of global model update, the server leverages all $M$ workers' feedback by reusing historical information.
Intuitively, this translates the potential objection function drift originated from general worker information arrival process into the negative effect of delayed returns $\G(\x_{t-\tau_{t, i}})$ from workers.
It can be shown that such a negative effect vanishes asymptotically as the number of communication rounds $T$ gets large and in turn diminishes the convergence error.
This also explains the {\em stronger} linear speedup $\mathcal{O}(1/\sqrt{MT})$.
Specifically, even with partial ($m$) workers participation in each round, AFA-CS achieves a speedup with respect to total number of workers $M$ ($M>m$).
From the lower bound in FL (Proposition 6.1 in~\citet{gu2021fast}), Corollary~\ref{cor:silo} is tight. 
}
\end{rem}

\begin{rem}{\em
AFA-CS generalizes the lazy aggregation strategy in distributed learning (e.g., LAG~\citep{chen2018lag}) by setting $K=1$ (single local update), $\tau=0$ (synchronous setting) and $\sigma_L = 0$ (using full gradient descents instead of stochastic gradients) and further improve the rate of LSAG~\citep{chen2020lasg} from $\mathcal{O}(1/\sqrt{T})$ to $\mathcal{O}(1/\sqrt{MT})$.
We note that ~\citet{gu2021fast} and ~\citet{yan2020distributed} achieved $\mathcal{O}(\frac{1}{\sqrt{MKT}})$ and $\mathcal{O}(\frac{1}{\sqrt{M^{0.5}T}})$ for FL, respectively, by using historical information, which is similar to AFA-CS.
However, they both requires additional assumptions.
Specifically, ~\citet{gu2021fast} required a Lipschitz Hessian assumption and ~\citet{yan2020distributed} needed bounded stochastic gradient assumption.
By contrast, AFA-CS achieves the same optimal rate without such assumptions.
}
\end{rem}

\section{Numerical results} \label{sec:numerical}

In this section, we conduct experiments to verify our theoretical results.
We use i) logistic regression (LR) on manually partitioned non-i.i.d. MNIST dataset~\citep{mnist}, ii) convolutional neural network (CNN) for manually partitioned CIFAR-10~\citep{cifar10}, and iii) recurrent neural network (RNN) on natural non-i.i.d. dataset {\em Shakespeare}~\citep{mcmahan2016communication}.
In order to impose data heterogeneity in MNIST and CIFAR-10 data, we distribute the data evenly into each worker in label-based partition following the same process in the literature (e.g., \citet{mcmahan2016communication,yang2021achieving,li2019convergence}).
Therefore, we can use a parameter $p$ to represent the classes of labels in each worker's dataset, which signifies data heterogeneity: the smaller the $p$-value, the more heterogeneous the data across workers (cf. \citet{yang2021achieving,li2019convergence} for details).
Due to space limitation, we relegate the details of models, datasets and hyper-parameters, and further results of CNN and RNN to the appendix.

\begin{figure}[t!]
    \centering
    \begin{subfigure}[b]{0.45\columnwidth}
        \includegraphics[width=1\textwidth]{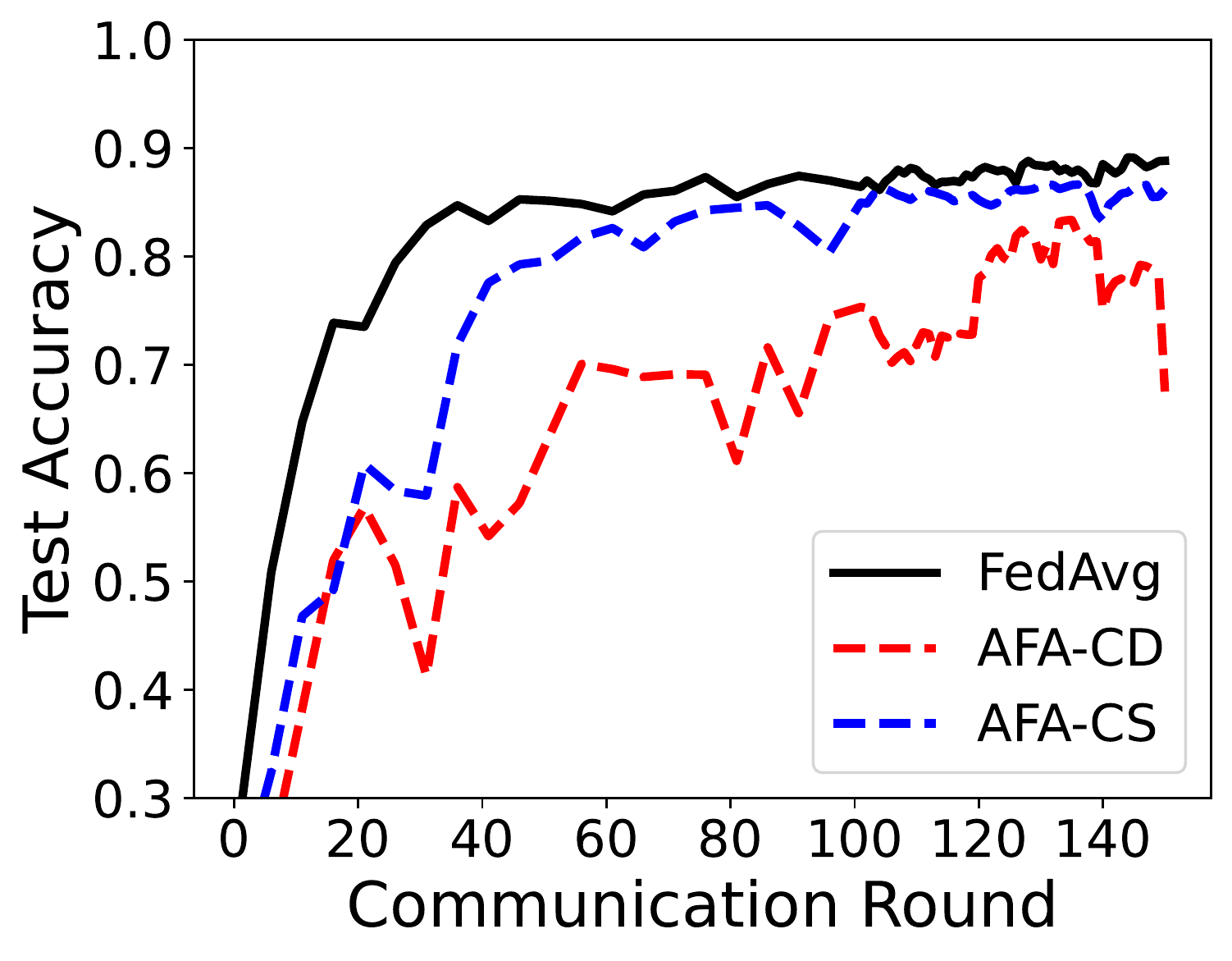}
        \caption{$p=1$.}
        \label{fedavg_afl_mnist_1}
    \end{subfigure}
    \begin{subfigure}[b]{0.45\columnwidth}
        \includegraphics[width=1\textwidth]{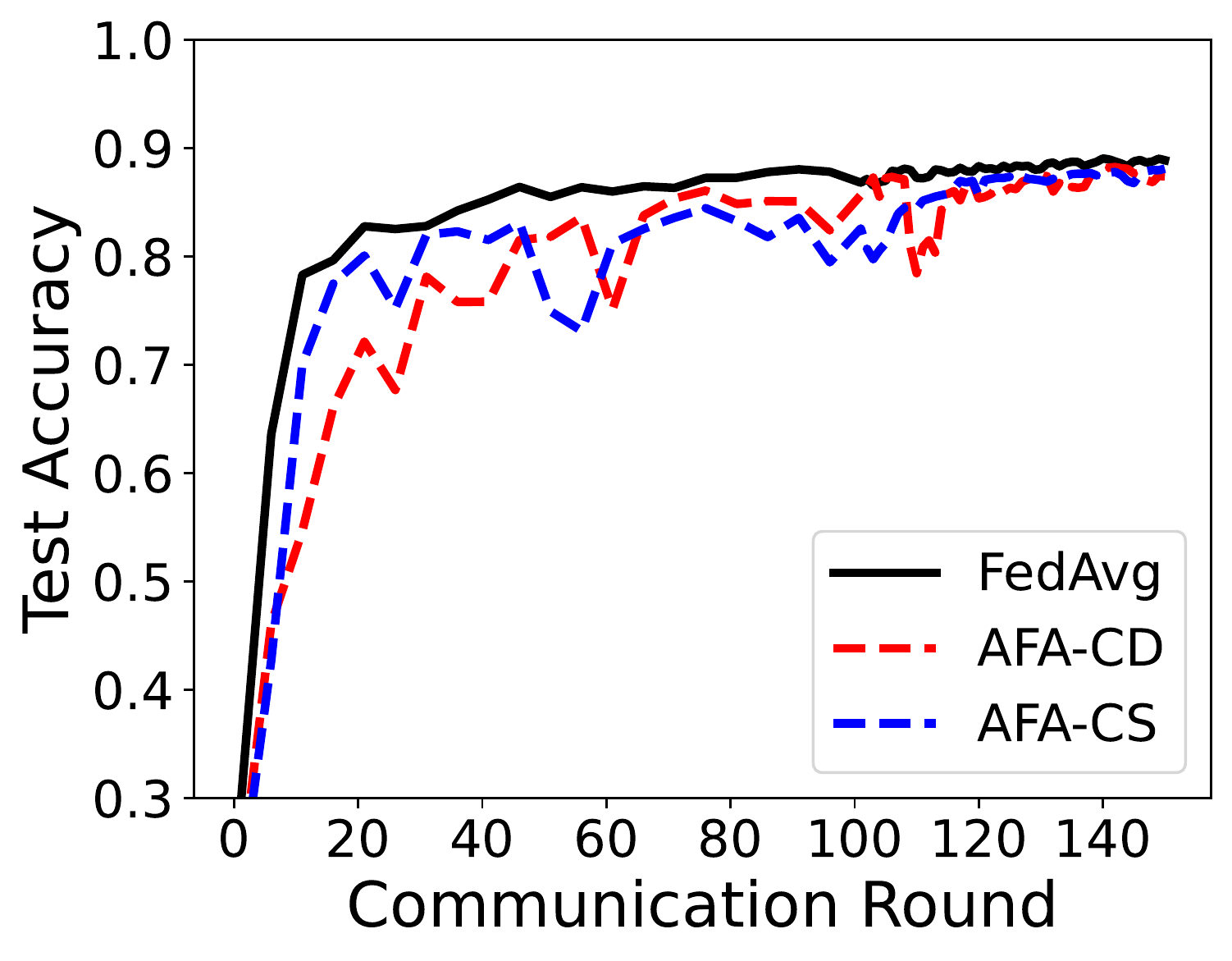}
        \caption{$p=2$.}
        \label{fedavg_afl_mnist_2}
    \end{subfigure}
    \begin{subfigure}[b]{0.45\columnwidth}
        \includegraphics[width=1\textwidth]{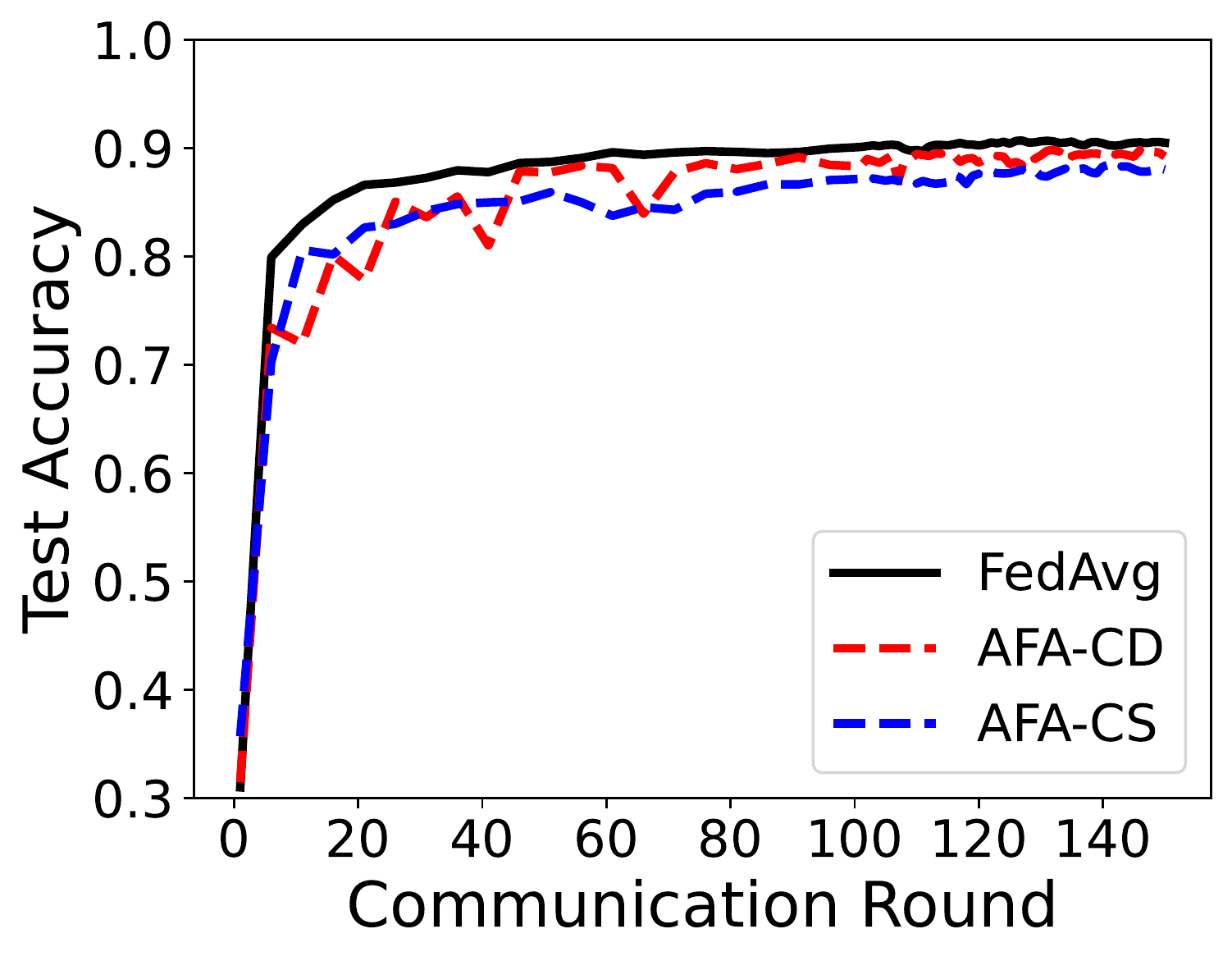}
        \caption{$p=5$.}
        \label{fedavg_afl_mnist_5}
    \end{subfigure}
    \begin{subfigure}[b]{0.45\columnwidth}
        \includegraphics[width=1\textwidth]{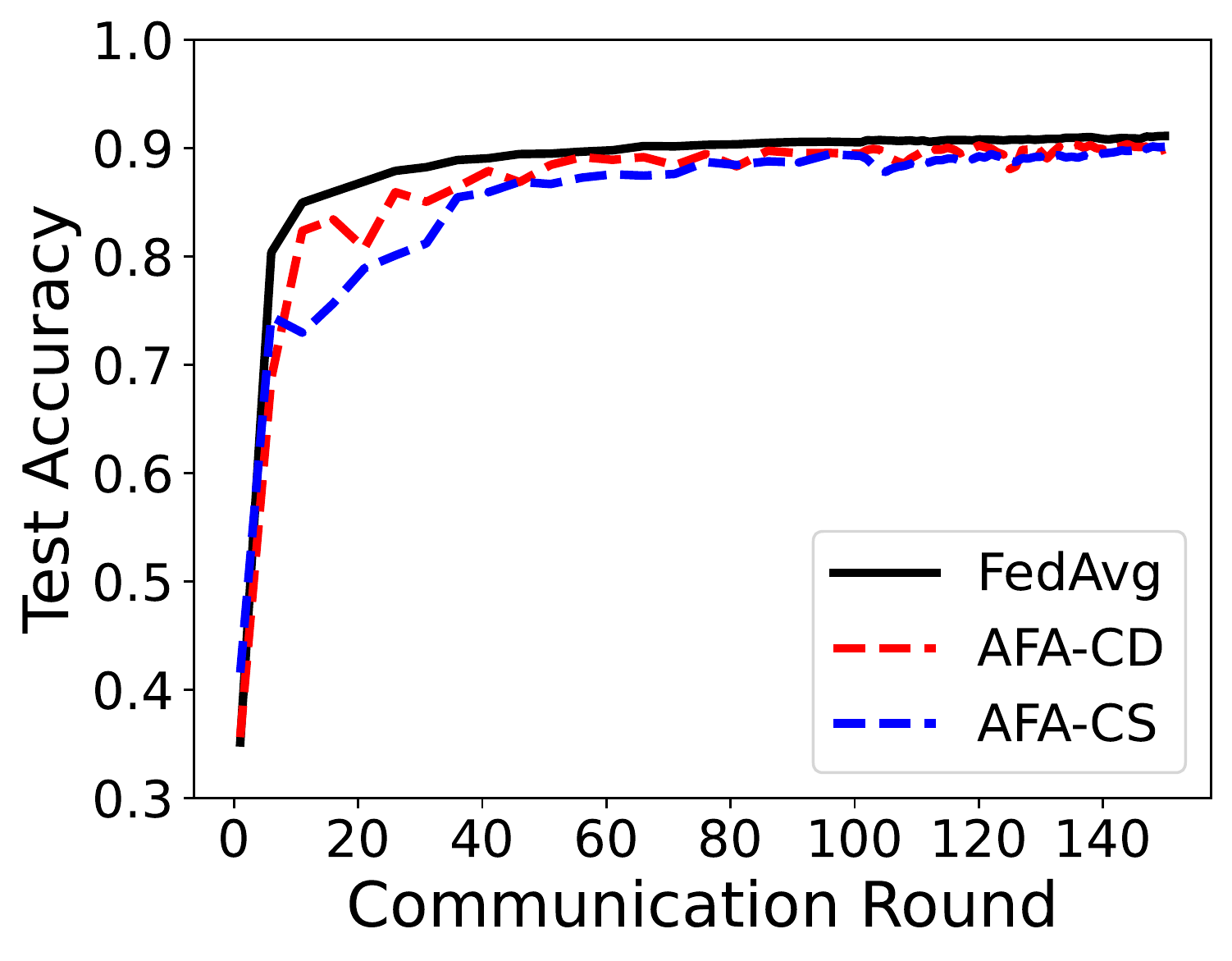}
        \caption{$p=10$.}
        \label{fedavg_afl_mnist_10}
    \end{subfigure} 
    \caption{Test accuracy for logistic regression on non-i.i.d. MNIST with different $p$-values.}
    \label{exp:lr_acc}
\end{figure}

In Figure~\ref{exp:lr_acc}, we illustrate the test accuracy for LR on MNIST with different $p$-values.
We use the classical FedAvg algorithm~\citep{mcmahan2016communication} for conventional FL with uniform worker sampling as a baseline, since it corresponds to the most ideal scenario where workers are fully cooperative with the server.
We examine the learning performance degradation of AFA algorithms (due to anarchic worker behaviors) compared to this ideal baseline.
For our AFA-CD and AFA-CS with general worker information arrival processes, the test accuracy is comparable to or nearly the same as that of FedAvg.
This confirms our theoretical results and validates the effectiveness of our AFA algorithms.
We further evaluate the impacts of various factors in AFL, including asynchrony, heterogeneous computing, worker's arrival process, and non-i.i.d. datasets, on convergence rate of our proposed AFA algorithms.
Note that AFL subsumes FedAvg and many variants when the above hyper-parameters are set as constant. 
Also, AFL coupled with other FL algorithms such as FedProx~\citep{li2018fedprox} and SCAFFOLD~\citep{karimireddy2020scaffold} is tested.
Our results show that the AFA algorithms are robust against all asynchrony and heterogeneity factors in AFL.
Due to space limitation, we refer readers to the appendix for all these experimental results.

\section{Conclusions} \label{sec:conclusion}
In this paper, we propose a new paradigm in FL called ``Anarchic Federated Learning'' (AFL).
In stark contrast to conventional FL models where the server and the worker are tightly coupled, AFL has a much lower server-worker coordination complexity, allowing a flexible worker participation.
We proposed two Anarchic Federated Averaging algorithms with two-sided learning rates for both cross-device and cross-silo settings, which are named AFA-CD and AFA-CS, respectively. 
We showed that both algorithms retain the highly desirable linear speedup effect in the new AFL paradigm.
Moreover, we showed that our AFL framework works well numerically by employing advance FL algorithms FedProx and SCAFFOLD as the optimizer in worker's side.






\section*{Acknowledgements}
This work has been supported in part by NSF grants CAREER CNS-2110259, CNS-2112471, CNS-2102233, CCF-2110252, and a Google Faculty Research Award.




\bibliography{BIB/FederatedLearning,BIB/AsynLearning,BIB/VarianceReduction}
\bibliographystyle{icml2022}

\newpage
\appendix
\onecolumn
\allowdisplaybreaks


\begin{center} 
	{\Large \textbf{Appendix}}
\end{center}

In this supplementary material, we provide the detailed proofs for all theoretical results in this paper.
Before presenting the proofs, we introduce some notations that will be used subsequently..
We assume there exists $M$ workers in total in the FL systems.
In each communication round, we assume a subset $\mathcal{M}_t$ of workers to be used, with $|\mathcal{M}_t| = m$.
We use $\G_i(\x_t)$ to represent the local update returned from worker $i, i \in [M]$ given global model parameter $\x_t^0 = \x_t$.
Also, we define $\G_i(\x_t) \triangleq \frac{1}{K_{t, i}} \sum_{j=0}^{K_{t,i}-1} \nabla f_i(\x^j_t, \xi_{t,i})$, where $\x_t^j$ represents the trajectory of the local model in the worker.
We use $\Delta_i$ to denote the average of the full gradients long the trajectory of local updates, i.e.,
$\Delta_i(\x_{t}) = \frac{1}{K_{t, i}} \sum_{j=0}^{K_{t,i}-1} \nabla f_i(\x^j_{t})$.
With the above notations, we are now in a position to present the proofs of the theoretical results in this paper.

\section{Proofs of Lemma~\ref{lem:1} and Lemma~\ref{lem:2}}
We start with proving two results stated in the following two lemmas, which will be useful in the rest of the proofs.
\begin{lemma} \label{lem:1}
$\mathbb{E}[\G_i(\x_{t})] = \Delta_i(\x_{t})$,
$\mathbb{E}[ \| \G_i(\x_{t}) - \Delta_i(\x_{t}) \|^2] \leq \frac{1}{K_{t,i}} \sigma_L^2, \forall i \in [M]$.
\end{lemma}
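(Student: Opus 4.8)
The plan is to prove both claims by exploiting the per-step conditional unbiasedness of the stochastic gradients along the local trajectory, together with the tower property of conditional expectation. The chief subtlety is that the iterates $\x_t^j$ produced by the local SGD recursion $\x_t^{j+1}=\x_t^j-\eta_L \nabla f_i(\x_t^j,\xi_{t,i}^j)$ are \emph{not} independent of one another: each $\x_t^j$ is a function of the earlier samples $\xi_{t,i}^0,\dots,\xi_{t,i}^{j-1}$. Hence the summands in $\G_i(\x_t)-\Delta_i(\x_t)$ are correlated, and one cannot treat them as an average of independent terms. I would therefore set up the filtration $\mathcal{F}_j:=\sigma(\xi_{t,i}^0,\dots,\xi_{t,i}^{j-1})$, observing that $\x_t^j$ is $\mathcal{F}_j$-measurable while the fresh draw $\xi_{t,i}^j$ is independent of $\mathcal{F}_j$.

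For the first claim, I would expand $\G_i(\x_t)=\frac{1}{K_{t,i}}\sum_{j=0}^{K_{t,i}-1}\nabla f_i(\x_t^j,\xi_{t,i}^j)$ and apply the tower property term by term. Conditioning on $\mathcal{F}_j$ freezes $\x_t^j$, so Assumption~\ref{a_unbias} gives $\mathbb{E}[\nabla f_i(\x_t^j,\xi_{t,i}^j)\mid\mathcal{F}_j]=\nabla f_i(\x_t^j)$; taking total expectation and averaging over $j$ yields $\mathbb{E}[\G_i(\x_t)]=\mathbb{E}[\Delta_i(\x_t)]$, which is the asserted unbiasedness identity under the per-step conditioning convention.

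For the second claim, write $\G_i(\x_t)-\Delta_i(\x_t)=\frac{1}{K_{t,i}}\sum_j e_j$ with $e_j:=\nabla f_i(\x_t^j,\xi_{t,i}^j)-\nabla f_i(\x_t^j)$. The key step, which replaces a naive independence argument, is showing that the cross terms vanish. For $j<j'$, conditioning on $\mathcal{F}_{j'}$ makes both $e_j$ and $\x_t^{j'}$ measurable, so $\mathbb{E}[\langle e_j,e_{j'}\rangle\mid\mathcal{F}_{j'}]=\langle e_j,\mathbb{E}[e_{j'}\mid\mathcal{F}_{j'}]\rangle=0$ since $\mathbb{E}[e_{j'}\mid\mathcal{F}_{j'}]=0$ by conditional unbiasedness. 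Thus $\{e_j\}$ is a martingale-difference sequence and $\mathbb{E}\|\sum_j e_j\|^2=\sum_j\mathbb{E}\|e_j\|^2$. Bounding $\mathbb{E}\|e_j\|^2\le\sigma_L^2$ by Assumption~\ref{a_variance} and dividing by $K_{t,i}^2$ gives $\mathbb{E}\|\G_i(\x_t)-\Delta_i(\x_t)\|^2\le \frac{K_{t,i}\sigma_L^2}{K_{t,i}^2}=\frac{\sigma_L^2}{K_{t,i}}$.

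The main obstacle is exactly the dependence structure described above: one must resist treating the trajectory points as fixed or the summands as independent, since the later iterates inherit the randomness of the earlier gradient samples. Everything goes through cleanly once the orthogonality of the cross terms is established via the tower property, i.e., once $\{e_j\}$ is recognized as a martingale-difference sequence; no assumptions beyond Assumptions~\ref{a_unbias} and~\ref{a_variance} are needed.
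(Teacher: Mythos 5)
Your proposal is correct and follows essentially the same route as the paper's proof: the paper also reduces the second claim to the observation that $\{\nabla f_i(\x_t^j,\xi_{t,i}^j)-\nabla f_i(\x_t^j)\}$ forms a martingale-difference sequence, so that the cross terms vanish and the variance bound $\sigma_L^2/K_{t,i}$ follows. Your write-up simply makes explicit the filtration and tower-property argument that the paper states in one line.
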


\begin{proof}
Taking the expectation of $\G_i(\x_t)$, we have:
\begin{align*}
\mathbb{E} \big[\G_i(\x_{t}) \big] &= \mathbb{E} \bigg[ \frac{1}{K_{t, i}} \sum_{j=0}^{K_{t,i}-1} \nabla f_i(\x^j_{t}, \xi_{t,i}^j) \bigg] \\
&= \frac{1}{K_{t, i}} \sum_{j=0}^{K_{t,i}-1} \mathbb{E} [ \nabla f_i(\x^j_{t}, \xi_{t,i}^j)] \\
&= \Delta_i(\x_{t}). 
\end{align*}
Also, by computing the mean square error between $\G_i(\x_t)$ and $\Delta_i(\x_{t})$, we have:
\begin{align*}
\mathbb{E}[ \| \G_i(\x_{t}) - \Delta_i(\x_{t}) \|^2] 
&= \mathbb{E} \big[ \| \frac{1}{K_{t, i}} \sum_{j=0}^{K_{t,i}-1} \nabla f_i(\x^j_{t}, \xi_{t,i}) - \sum_{j=0}^{K_{t,i}-1} \nabla f_i(\x^j_{t}) \|^2 \big] \\
&= \frac{1}{K_{t, i}^2} \mathbb{E} \big[ \| \sum_{j=0}^{K_{t,i}-1} \nabla f_i(\x^j_{t}, \xi_{t,i}) - \sum_{j=0}^{K_{t,i}-1} \nabla f_i(\x^j_{t}) \|^2 \big] \\
&\leq \frac{1}{K_{t,i}} \sigma_L^2.
\end{align*}
Note $\{ \nabla f_i(\x^j_{t}, \xi_{t,i}) - \nabla f_i(\x^j_{t}) \}$ forms a martingale difference sequence.
This completes the proof of Lemma~\ref{lem:1}.
\end{proof}

\begin{lemma} \label{lem:2}
For a fixed set $\mathcal{M}_t$ with cardinality $m$, $\mathbb{E} \bigg[ \big\| \sum_{i \in \mathcal{M}_t} \G_i(\x_{t-\tau_{t,i}}) \big\|^2 \bigg] \leq 2 \big\| \sum_{i \in \mathcal{M}_t} \Delta_i(\x_{t-\tau_{t,i}}) \big\|^2 + \frac{2m}{K_t} \sigma_L^2,$ where $\frac{1}{K_t} = \frac{1}{m} \sum_{i \in \mathcal{M}_t} \frac{1}{K_{t,i}}$. 
\end{lemma}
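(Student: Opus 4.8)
The plan is to prove the bound via a bias--variance decomposition of the aggregated stochastic update around its averaged full-gradient counterpart. First I would add and subtract $\Delta_i(\x_{t-\tau_{t,i}})$ inside the norm, writing $\sum_{i \in \mathcal{M}_t} \G_i(\x_{t-\tau_{t,i}}) = \sum_{i \in \mathcal{M}_t} \Delta_i(\x_{t-\tau_{t,i}}) + \sum_{i \in \mathcal{M}_t} \big( \G_i(\x_{t-\tau_{t,i}}) - \Delta_i(\x_{t-\tau_{t,i}}) \big)$, and then apply the elementary inequality $\|a+b\|^2 \leq 2\|a\|^2 + 2\|b\|^2$. This separates a ``drift'' term built from the (trajectory-determined) quantity $\sum_i \Delta_i$ from a mean-zero ``noise'' term $\sum_i (\G_i - \Delta_i)$, and after taking expectation it produces exactly the $2\| \sum_i \Delta_i \|^2$ contribution appearing on the right-hand side.

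Next I would control the noise term by expanding the squared norm of the sum into diagonal and off-diagonal parts, $\mathbb{E}\big[\| \sum_i (\G_i - \Delta_i) \|^2\big] = \sum_i \mathbb{E}\big[\|\G_i - \Delta_i\|^2\big] + \sum_{i \neq j} \mathbb{E}\big[\langle \G_i - \Delta_i,\, \G_j - \Delta_j \rangle\big]$. The key step is to show that every cross term vanishes: since distinct workers draw their local samples independently, $\G_i - \Delta_i$ and $\G_j - \Delta_j$ are independent for $i \neq j$, and each factor has (conditional) mean zero by Lemma~\ref{lem:1}, so each inner product has zero expectation. I would then invoke the per-worker variance bound $\mathbb{E}[\|\G_i - \Delta_i\|^2] \leq \sigma_L^2 / K_{t,i}$ from Lemma~\ref{lem:1} and sum over $\mathcal{M}_t$, using $\sum_{i \in \mathcal{M}_t} 1/K_{t,i} = m \cdot \frac{1}{m}\sum_{i \in \mathcal{M}_t} 1/K_{t,i} = m/K_t$ by the definition of $1/K_t$. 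Multiplying by the factor $2$ from the splitting inequality yields the $\frac{2m}{K_t}\sigma_L^2$ term and completes the estimate.

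The main obstacle is the justification that the off-diagonal terms drop out, which is the only place where the structure of the stochastic updates genuinely enters. This rests on two ingredients that must be handled with care: cross-worker independence of the sampling noise, and the within-worker martingale-difference property already exploited in Lemma~\ref{lem:1} (the increments $\nabla f_i(\x^j,\xi) - \nabla f_i(\x^j)$ along each local trajectory are conditionally mean-zero even though the iterate $\x^j$ depends on earlier draws). A secondary subtlety worth flagging is the conditioning: the expectation is taken so that the averaged full gradients $\Delta_i$ are held fixed while the gradient noise is averaged out, which is precisely what makes $\| \sum_i \Delta_i \|^2$ appear without an outer expectation on the right-hand side. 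Everything else is the routine application of $\|a+b\|^2 \le 2\|a\|^2+2\|b\|^2$ together with Lemma~\ref{lem:1}.
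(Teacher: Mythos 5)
Your proposal is correct and follows essentially the same route as the paper: add and subtract $\Delta_i(\x_{t-\tau_{t,i}})$, apply $\|a+b\|^2 \le 2\|a\|^2 + 2\|b\|^2$, kill the cross terms using cross-worker independence and the zero-mean property from Lemma~\ref{lem:1}, and sum the per-worker variance bounds to obtain $\frac{2m}{K_t}\sigma_L^2$. Your treatment is in fact slightly more careful than the paper's (which writes the splitting inequality as an equality and leaves the vanishing of the off-diagonal terms implicit), so no changes are needed.
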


\begin{proof}
By adding and subtracting $\Delta_i(\x_{t-\tau_{t,i}})$, we have:
\begin{align*}
\mathbb{E} \bigg[ \big\| \sum_{i \in \mathcal{M}_t} \G_i(\x_{t-\tau_{t,i}}) \big\|^2 \bigg] 
&= 2\mathbb{E} \bigg[ \big\| \sum_{i \in \mathcal{M}_t} \G_i(\x_{t-\tau_{t,i}}) - \Delta_i(\x_{t-\tau_{t,i}}) \big\|^2 \bigg]  + 2\big\| \sum_{i \in \mathcal{M}_t} \Delta_i(\x_{t-\tau_{t,i}}) \big\|^2 \\
&\leq 2\sum_{i \in \mathcal{M}_t} \frac{1}{K_{t,i}} \sigma_L^2 + 2\big\| \sum_{i \in \mathcal{M}_t} \Delta_i(\x_{t-\tau_{t,i}}) \big\|^2 \\
&\leq \frac{2m}{K_t} \sigma_L^2 + 2\big\| \sum_{i \in \mathcal{M}_t} \Delta_i(\x_{t-\tau_{t,i}}) \big\|^2.
\end{align*}
Here the updates among clients $\{ \G_i(\x_{t-\tau_{t,i}}) - \Delta_i(\x_{t-\tau_{t,i}}) \}$ are assumed to be independent.
\end{proof}

\section{Proof of the performance of the AFA-CD algorithm} \label{appdx_device}

In this section, we provide the proofs of the theoretical results of the AFA-CD algorithm.
We consider two cases: i) general worker information arrival processes and ii) uniformly distributed worker information arrivals.
As mentioned earlier, for general worker information arrival processes, we do not make any assumptions on the worker information arrival processes except the independence of workers' participation.
For uniformly distributed worker information arrivals, $\mathcal{M}_t$ can be viewed as a subset with size m independently and uniformly sampled from $[M]$ without replacement. 
The similar convergence analysis for independently and uniformly sampling with replacement can be derived in the same way following the techniques in ~\citep{yang2021achieving,li2019convergence}.

\subsection{Lower Bound for General Worker Information Arrival Processes}\label{appdx_lb}
\lb*

\begin{proof}
	We prove the lower bound by considering a worst-case scenario for simple one-dimensional functions.
	Let the FL system has two workers with the following loss functions: $f_1(\x) = (\x + G)^2, f_2(\x) = (\x - G)^2, f(\x) = \frac{1}{2}(f_1(\x) + f_2(\x)) = \x^2 + G^2.$
	It is easy to verify that $\| \nabla f_1(\x) - \nabla f(\x) \|^2 \leq 4 G^2 = \sigma_G^2$ and $\| \nabla f_2(\x) - \nabla f(\x) \|^2 \leq 4 G^2 = \sigma_G^2$, where $\sigma_G$ is the heterogeneity index.
	We consider a special case for the general worker arrival process when only the first one worker participates in the training as the worst-case scenario, equivalent to optimizing $f_1(\x)$ rather than $f(\x)$.
	In such case, any convergent (and potentially random) algorithm would return $\mathbb{E} \hat{\x} = -G$.
	As a result, $\mathbb{E} \| \nabla f(\hat{\x}) \|^2 = \Omega (\sigma_G^2).$
	\end{proof}

\subsection{General Worker Information Arrival Processes}\label{appdx_arbipdx_uniform}

\arbitrary*

\begin{proof}
Due to the $L$-smoothness assumption, taking expectation of $f(\x_{t+1})$ over the randomness in communication round $t$, we have:
\begin{align*}
	\mathbb{E} [f(\x_{t+1})] &\leq f(\x_t) + \underbrace{ \big< \nabla f(\x_t), \mathbb{E} [\x_{t+1} - \x_t] \big> }_{A_1} + \frac{L}{2} \underbrace{ \mathbb{E} [\| \x_{t+1} - \x_t \|^2 }_{A_2}.
\end{align*}

First, we bound the term $A_2$ as follows:
\begin{align*}
A_2 &= \mathbb{E} \| \x_{t+1} - \x_t \|^2 \\
&= \eta^2 \eta_L^2 \mathbb{E} \left\| \frac{1}{m} \sum_{i \in \mathcal{M}_t}G_i(\x_{t-\tau_{t,i}}) \right\|^2 \\
&\overset{(a1)}{\leq} \frac{2 \eta^2 \eta_L^2}{m^2} \left\| \sum_{i \in \mathcal{M}_t} \Delta_i(\x_{t-\tau_{t,i}}) \right\|^2 + \frac{2 \eta^2 \eta_L^2}{m K_t} \sigma_L^2,
\end{align*}
where $(a1)$ is due to Lemma~\ref{lem:2}.
Next, we bound the term $A_1$ as follows:
\begin{align*}
&A_1 = \big< \nabla f(\x_t), \mathbb{E} [\x_{t+1} - \x_t] \big> \\
&= - \eta \eta_L \big< \nabla f(\x_t), \mathbb{E} \left[\frac{1}{m} \sum_{i \in \mathcal{M}_t} \G_i(\x_{t - \tau_{t,i}}) \right] \big> \\
&\overset{(a2)}{=}  -\frac{1}{2} \eta \eta_L \| \nabla f(\x_t) \|^2 - \frac{1}{2} \eta \eta_L \mathbb{E} \left\| \frac{1}{m} \sum_{i \in \mathcal{M}_t} \Delta_i(\x_{t - \tau_{t,i}}) \right\|^2 + \frac{1}{2} \eta \eta_L \underbrace{ \mathbb{E} \| \nabla f(\x_t) - \frac{1}{m} \sum_{i \in \mathcal{M}_t} \Delta_i(\x_{t - \tau_{t,i}}) \|^2}_{A_3},
\end{align*}
where $(a2)$ is due to Lemma~\ref{lem:1} and the fact that $\langle \x, \y \rangle = \frac{1}{2} (\| \x \|^2 + \| \y \|^2 - \| \x - \y \|^2)$ and Lemma~\ref{lem:1}.
To further bound the term $A_3$, we have:
\begin{align*}
&A_3 =  \mathbb{E} \| \nabla f(\x_t) - \frac{1}{m} \sum_{i \in \mathcal{M}_t} \Delta_i(\x_{t - \tau_{t,i}}) \|^2 \\
&\leq \frac{1}{m} \sum_{i \in \mathcal{M}_t} \mathbb{E} \| \nabla f(\x_t) -  \Delta_i(\x_{t - \tau_{t,i}}) \|^2 \\
&= \frac{1}{m} \sum_{i \in \mathcal{M}_t} \mathbb{E} \| \nabla f(\x_t) - \nabla f(\x_{t - \tau_{t,i}}) + \nabla f(\x_{t - \tau_{t,i}}) - \nabla f_i(\x_{t - \tau_{t,i}}) + \nabla f_i(\x_{t - \tau_{t,i}}) -  \Delta_i(\x_{t - \tau_{t,i}}) \|^2 \\
&\overset{(a3)}{\leq} \frac{1}{m} \sum_{i \in \mathcal{M}_t} \bigg[ 3 \mathbb{E} \| \nabla f(\x_t) - \nabla f(\x_{t - \tau_{t,i}}) \|^2 + 3 \mathbb{E} \| \nabla f(\x_{t - \tau_{t,i}}) - \nabla f_i(\x_{t - \tau_{t,i}}) \|^2 \\
& \quad \quad + 3 \mathbb{E} \| \nabla f_i(\x_{t - \tau_{t,i}}) -  \Delta_i(\x_{t - \tau_{t,i}}) \|^2 \bigg] \\
&\overset{(a4)}{\leq} \underbrace{\frac{3 L^2}{m} \sum_{i \in \mathcal{M}_t} \mathbb{E} \| \x_t - \x_{t - \tau_{t,i}} \|^2 }_{A_4} 
+ 3 \sigma_G^2
+ \frac{3}{m} \sum_{i \in \mathcal{M}_t} \underbrace{ \mathbb{E} \| \nabla f_i(\x_{t - \tau_{t,i}}) -  \Delta_i(\x_{t - \tau_{t,i}}) \|^2 }_{A_5},
\end{align*}

where $(a3)$ followings from the inequality $\| \x_1 + \x_2 + \cdots + \x_n \|^2 \leq n \sum_{i=1}^{n} \| \x_i \|^2$, and $(a4)$ is due to the L-smoothness assumption (Assumption~1) and bounded global variance assumption (Assumption~3).

To further bound the term $A_4$, we have:
\begin{align*}
A_4 &= \frac{1}{m} \sum_{i \in [m]} \mathbb{E} \| \x_t - \x_{t - \tau_{t,i}} \|^2 \\
&\overset{(a5)}{\leq} \mathbb{E} \| \x_t - \x_{t - \tau_{t,u}} \|^2 \\
&= \mathbb{E} \left\| \sum_{k=t - \tau_{t, u}}^{t-1} \x_{k+1} - \x_k \right\|^2 \\
&= \mathbb{E} \left\| \sum_{k=t - \tau_{t, u}}^{t-1} \frac{1}{m} \eta \eta_L \sum_{i \in \mathcal{M}_k} \G_i(\x_{k - \tau_{k,i}}) \right\|^2 \\
&= \mathbb{E} \left[ \frac{\eta^2 \eta_L^2}{m^2} \left\| \sum_{k=t - \tau_{t, u}}^{t-1} \sum_{i \in \mathcal{M}_k} \G_i(\x_{k - \tau_{k,i}}) \right\|^2 \right] \\
&\overset{(a6)}{\leq} \mathbb{E} \left[ \frac{\eta^2 \eta_L^2}{m^2} \tau \sum_{k=t - \tau_{t, u}}^{t-1} \left\| \sum_{i \in \mathcal{M}_k} G_i(\x_{k - \tau_{k,i}}) \right\|^2 \right] \\
&\overset{(a7)}{\leq} \left[ \frac{\eta^2 \eta_L^2 \tau }{m^2} \sum_{k=t - \tau_{t, u}}^{t-1} \left(2 \mathbb{E} \left\| \sum_{i \in \mathcal{M}_k} \Delta_i(\x_{k - \tau_{k,i}}) \right\|^2 +  \frac{2m}{K_k} \sigma_L^2 \right) \right].
\end{align*}
In the derivations above, we let $u := \mathop{\mathrm{argmax}}_{i \in [M]} \| \x_t - \x_{t - \tau_{t,i}} \|^2$, which yields $(a5)$.
Note also that the maximum delay assumption $\tau \geq \tau_{k,i}, \forall i \in [M]$ implies $(a6)$.
Lastly, $(a7)$ follows from Lemma~\ref{lem:2}.

To further bound the term $A_5$, we have:
\begin{align*}
A_5 &= \mathbb{E} \| \nabla f_i(\x_{t - \tau_{t,i}}) -  \Delta_i(\x_{t - \tau_{t,i}}) \|^2 \\
&= \mathbb{E} \left\| \nabla f_i(\x_{t - \tau_{t,i}}) - \frac{1}{K_{t,i}} \sum_{j=0}^{K_{t,i}-1} \nabla f_i(\x^j_{t - \tau_{t,i}}) \right\|^2 \\
&= \frac{1}{K_{t,i}} \sum_{j=0}^{K_{t,i}-1} \mathbb{E} \left\| \nabla f_i(\x_{t - \tau_{t,i}}) - \nabla f_i(\x^j_{t - \tau_{t,i}}) \right\|^2 \\
&\overset{(a8)}{\leq} \frac{L^2}{K_{t,i}} \sum_{j=0}^{K_{t,i}-1} \underbrace{ \mathbb{E} \| \x_{t - \tau_{t,i}} - \x^j_{t - \tau_{t,i}} \|^2}_{A_6} \\
&\overset{(a9)}{\leq} 5 K_{t,i} L^2 \eta_L^2 (\sigma_L^2 + 6 K_{t,i} \sigma_G^2) + 30 K_{t,i}^2 L^2 \eta_L^2 \| \nabla f(\x_{t - \tau_{t,i}}) \|^2,
\end{align*}
where $(a8)$ is due to the $L$-smoothness assumption (Assumption~1),
and $(a9)$ follows from the bound of $A_6$ shown below.
Here, we denote maximum number of local steps of all workers as $K$, i.e., $K_{t, i} \leq K, \forall t, i$.
This definition of $K$ implies $(a10)$.

Now, it remains to bound term $A_6$ in the derivations above.
Note that the bounding proof of $A_6$ in what follows is the same as Lemma 4 in \citep{reddi2020adaptive}. we restate the proof here in order for this paper to be self-contained.
For any worker $i$ in the $k$-th local step, we have the following results for the norm of parameter changes for one local computation:
\begin{align*}
A_6 &=\mathbb{E} [\|\x_{t, k}^i - \x_t \|^2] = \mathbb{E} [\|\x_{t, k-1}^i - \x_t -\eta_L g_{t, k-1}^i \|^2] \\
&\leq \mathbb{E} [\|\x_{t, k-1}^i - \x_t -\eta_L (g_{t, k-1}^i - \nabla f_i(\x_{t, k-1}^i) + \nabla f_i(\x_{t, k-1}^i) - \nabla f_i(\x_{t}) \\
& \quad + \nabla f_i(\x_{t}) - \nabla f(\x_t) + \nabla f(\x_t)) \|^2 ] \\
&\leq (1 + \frac{1}{2K_{t,i}-1}) \mathbb{E} [\| \x_{t, k-1}^i - \x_t \|^2] + \mathbb{E} [\| \eta_L (g_{t, k-1}^i - \nabla f_i(\x_{t, k-1}^i)) \|^2 ] \\
& \quad + 6K_{t,i} \mathbb{E} [\| \eta_L (\nabla f_i(\x_{t, k-1}^i) - \nabla f_i(\x_{t})) \|^2] + 6K_{t,i} \mathbb{E} [\| \eta_L (\nabla f_i(\x_{t}) - \nabla f(\x_t))) \|^2] \\
& \quad + 6K_{t,i} \| \eta_L \nabla f(\x_t)) \|^2 \\
&\leq (1 + \frac{1}{2K_{t,i}-1}) \mathbb{E} [\| \x_{t, k-1}^i - \x_t \|^2] + \eta_L^2 \sigma_L^2 + 6K_{t,i} \eta_L^2 L^2 \mathbb{E} [\| \x_{t, k-1}^i -\x_{t} \|^2] \\
&\quad + 6K_{t,i} \eta_L^2 \sigma_G^2 + 6K_{t,i} \| \eta_L \nabla f(\x_t)) \|^2 \\
&= (1 + \frac{1}{2K_{t,i}-1} + 6K_{t,i} \eta_L^2 L^2) \mathbb{E} [\| \x_{t, k-1}^i - \x_t \|^2] + \eta_L^2 \sigma_L^2 + 6K_{t,i} \eta_L^2 \sigma_G^2 + 6K_{t,i} \| \eta_L \nabla f(\x_t)) \|^2 \\
&\overset{(a11)}{\leq} (1 + \frac{1}{K_{t,i}-1}) \mathbb{E} [\| \x_{t, k-1}^i - \x_t \|^2] + \eta_L^2 \sigma_L^2 + 6K_{t,i} \eta_L^2 \sigma_G^2 + 6K_{t,i} \| \eta_L \nabla f(\x_t)) \|^2,
\end{align*}
where $(a11)$ follows from the fact that $\frac{1}{2K_{t,i}-1} + 6K_{t,i} \eta_L^2 L^2 \leq \frac{1}{K_{t,i}-1}$ if $\eta_L^2 \leq \frac{1}{6(2K_{t,i}^2 - 3K_{t,i} + 1)L^2}$.

Unrolling the recursion, we obtain:
\begin{align}
\mathbb{E} [\|\x_{t, k}^i - \x_t \|^2] &\leq \sum_{p=0}^{k-1}(1+\frac{1}{K_{t,i}-1})^p [\eta_L^2 \sigma_L^2+6K_{t,i}\sigma_G^2 + 6K_{t,i}\eta_L^2 \| \eta_L \nabla f(\x_t)) \|^2] \nonumber \\
&\leq (K_{t,i}-1) [(1 + \frac{1}{K_{t,i}-1})^K_{t,i} - 1] [\eta_L^2 \sigma_L^2+6K_{t,i} \eta_L^2 \sigma_G^2 + 6K_{t,i} \| \eta_L \nabla f(\x_t)) \|^2] \nonumber \\
&\leq  5 K_{t,i} \eta_L^2 (\sigma_L^2 + 6 K_{t,i} \sigma_G^2) + 30 K_{t,i}^2 \eta_L^2 \| \nabla f(\x_t) \|^2. \label{appdx:a6}
\end{align}




With the above results of the terms $A_1$ through $A_5$, we have:
\begin{align*}
&\mathbb{E} [f(\x_{t+1})] - f(\x_t) \leq \underbrace{ \big< \nabla f(\x_t), \mathbb{E} [\x_{t+1} - \x_t] \big> }_{A_1} + \frac{L}{2} \underbrace{ \mathbb{E} [\| \x_{t+1} - \x_t \|^2 }_{A_2} \\
&= -\frac{1}{2} \eta \eta_L \| \nabla f(\x_t) \|^2 -\frac{1}{2} \eta \eta_L \mathbb{E} \left\| \frac{1}{m} \sum_{i \in \mathcal{M}_t} \Delta_i(\x_{t - \tau_{t,i}}) \right\|^2 + \frac{1}{2} \eta \eta_L \underbrace{ \mathbb{E} \left\| \nabla f(\x_t) - \frac{1}{m} \sum_{i \in \mathcal{M}_t} \Delta_i(\x_{t - \tau_{t,i}}) \right\|^2}_{A_3} \\
& \quad + \frac{L \eta^2 \eta_L^2}{m^2} \mathbb{E} \left\| \sum_{i \in \mathcal{M}_t} \Delta_i(\x_{t-\tau_{t,i}}) \right\|^2 + \frac{L \eta^2 \eta_L^2}{m} \sigma_L^2 \\
&\leq -\frac{1}{2} \eta \eta_L \| \nabla f(\x_t) \|^2 - \frac{1}{2} \eta \eta_L \mathbb{E} \left\| \frac{1}{m} \sum_{i \in \mathcal{M}_t} \Delta_i(\x_{t - \tau_{t,i}}) \right\|^2 + \frac{L \eta^2 \eta_L^2}{m^2} \mathbb{E} \left\| \sum_{i \in \mathcal{M}_t} \Delta_i(\x_{t-\tau_{t,i}}) \right\|^2 + \frac{L \eta^2 \eta_L^2}{m K_t} \sigma_L^2 \\
& \quad + \frac{3}{2} \eta \eta_L \sigma_G^2
+ \frac{3 L^2}{2} \eta \eta_L \bigg[ \underbrace{\frac{1}{m} \sum_{i \in \mathcal{M}_t} \mathbb{E} \| \x_t - \x_{t - \tau_{t,i}} \|^2 }_{A_4} \bigg] + \frac{3\eta \eta_L}{2m} \sum_{i \in \mathcal{M}_t} \underbrace{ \mathbb{E} \| \nabla f_i(\x_{t - \tau_{t,i}}) -  \Delta_i(\x_{t - \tau_{t,i}}) \|^2 }_{A_5} \\
&\leq -\frac{1}{2} \eta \eta_L \| \nabla f(\x_t) \|^2 - \frac{\eta \eta_L}{2m^2} \mathbb{E} \left\| \sum_{i \in \mathcal{M}_t} \Delta_i(\x_{t - \tau_{t,i}}) \right\|^2 + \frac{L \eta^2 \eta_L^2}{m^2} \mathbb{E} \left\| \sum_{i \in \mathcal{M}_t} \Delta_i(\x_{t-\tau_{t,i}}) \right\|^2 + \frac{L \eta^2 \eta_L^2}{m K_t} \sigma_L^2 \\
& \quad + \frac{3}{2} \eta \eta_L \sigma_G^2
+ \frac{3 L^2}{2} \eta \eta_L \left[ \frac{\eta^2 \eta_L^2 \tau }{m^2} \sum_{k=t - \tau_{t, u}}^{t-1} \left( 2 \mathbb{E} \left\| \sum_{i \in \mathcal{M}_k} \Delta_i(\x_{k - \tau_{k,i}}) \right\|^2 +  \frac{2m}{K_k} \sigma_L^2 \right) \right] \\
& \quad + \frac{3\eta \eta_L}{2} \frac{1}{m} \sum_{i \in \mathcal{M}_t} \left[5 K_{t,i} L^2 \eta_L^2 (\sigma_L^2 + 6 K_{t,i} \sigma_G^2) + 30 L^2 \eta_L^2 K_{t,i}^2 \| \nabla f(\x_{t - \tau_{t,i}}) \|^2 \right] \\
&\leq - \frac{1}{2} \eta \eta_L \| \nabla f(\x_t) \|^2 + 45 \eta \eta_L^3 L^2 \frac{1}{m} \sum_{i=1}^m K_{t,i}^2 \| \nabla f(\x_{t - \tau_{t,i}}) \|^2 \\
& \quad + \bigg[ - \frac{\eta \eta_L}{2m^2} + \frac{L \eta^2 \eta_L^2}{m^2} \bigg] \mathbb{E} \left\| \sum_{i \in \mathcal{M}_t} \Delta_i(\x_{t - \tau_{t,i}}) \right\|^2 
+ \frac{3 \tau \eta^3 \eta_L^3}{m^2} \sum_{k=t - \tau_{t, u}}^{t-1} \mathbb{E} \left\|  \sum_{i \in \mathcal{M}_t} \Delta_i(\x_{k - \tau_{k,i}}) \right\|^2 \\
& \quad + \bigg[ \frac{L \eta^2 \eta_L^2}{m K_t} + \frac{3 \tau L^2 \eta^3 \eta_L^3 \sum_{k=t - \tau_{t, u}}^{t-1} \frac{1}{K_k}}{m} + \frac{15 \eta \eta_L^3 L^2 \frac{1}{m} \sum_{i \in \mathcal{M}_t} K_{t,i} }{2} \bigg] \sigma_L^2 
+ \left[\frac{3}{2} \eta \eta_L + 45 L^2 \eta \eta_L^3 \frac{1}{m} \sum_{i \in \mathcal{M}_t} K_{t,i}^2 \right] \sigma_G^2.
\end{align*}

Summing the above inequality from $t=0$ to $t= T-1$ yields:
\begin{align*}
&\mathbb{E} f(\x_{T}) - f(\x_0) \\
&\leq \sum_{t=0}^{T-1} \bigg[ -\frac{1}{2} \eta \eta_L \| \nabla f(\x_t) \|^2 + 45 \eta \eta_L^3 L^2 \frac{1}{m} \sum_{i=1}^m K_{t,i}^2 \mathbb{E} \left\| \nabla f(\x_{t - \tau_{t,i}}) \right\|^2 \bigg] \\
& \quad + \sum_{t=0}^{T-1} \bigg[ \left[-\frac{\eta \eta_L}{2m^2} + \frac{L \eta^2 \eta_L^2}{m^2} \right] \mathbb{E} \left\| \sum_{i \in \mathcal{M}_t} \Delta_i(\x_{t - \tau_{t,i}}) \right\|^2 
+ \frac{3 \tau L^2 \eta^3 \eta_L^3}{m^2} \sum_{k=t - \tau_{t, u}}^{t-1} \mathbb{E} \left\| \sum_{i \in \mathcal{M}_k} \Delta_i(\x_{k - \tau_{k,i}}) \right\|^2 \bigg] \\
& \quad + \sum_{t=0}^{T-1} \bigg[ \frac{L \eta^2 \eta_L^2}{m K_t} + \frac{3 \tau L^2 \eta^3 \eta_L^3 \sum_{k=t - \tau_{t, u}}^{t-1} \frac{1}{K_k}}{m} + \frac{15 \eta \eta_L^3 L^2 \frac{1}{m} \sum_{i \in \mathcal{M}_t} K_{t,i}}{2} \bigg] \sigma_L^2 \\
&\quad + \sum_{t=0}^{T-1} \bigg[\frac{3}{2} \eta \eta_L + 45 L^2 \eta \eta_L^3 \frac{1}{m} \sum_{i \in \mathcal{M}_t} K_{t,i}^2  \bigg] \sigma_G^2 \\
&\overset{(a12)}{\leq} \sum_{t=0}^{T-1} \bigg[ -\frac{1}{2} \eta \eta_L + 45 \eta \eta_L^3 K_{t, max}^2 L^2 \tau \bigg] \| \nabla f(\x_t) \|^2 \\
& \quad + \sum_{t=0}^{T-1} \bigg[ -\frac{\eta \eta_L}{2m^2} + \frac{L \eta^2 \eta_L^2}{m^2} + \frac{3 \tau^2 L^2 \eta^3 \eta_L^3}{2m^2} \bigg] \mathbb{E} \left\| \sum_{i \in \mathcal{M}_t} \Delta_i(\x_{t - \tau_{t,i}}) \right\|^2 \\
& \quad + \sum_{t=0}^{T-1} \bigg[ \frac{L \eta^2 \eta_L^2}{m K_t} + \frac{3 \tau^2 L^2 \eta^3 \eta_L^3 \frac{1}{K_t}}{m} + \frac{15 \eta \eta_L^3 \bar{K}_{t} L^2}{2} \bigg] \sigma_L^2 
+ \sum_{t=0}^{T-1} \bigg[\frac{3}{2} \eta \eta_L + 45 \hat{K}_t^2 L^2 \eta \eta_L^3 \bigg] \sigma_G^2 \\
&\overset{(a13)}{\leq} \sum_{t=0}^{T-1} - \frac{1}{4} \eta \eta_L \| \nabla f(\x_t) \|^2 \\
& \quad + \eta \eta_L \left[ \frac{L \eta \eta_L}{m} \sum_{t=0}^{T-1} \frac{1}{K_t} + \frac{3 \tau^2 L^2 \eta^2 \eta_L^2}{m} \sum_{t=0}^{T-1} \frac{1}{K_t} + \frac{15 \eta_L^2 L^2}{2} \sum_{t=0}^{T-1} \bar{K}_t \right] \sigma_L^2 
+ \sum_{t=0}^{T-1} \eta \eta_L \left[\frac{3}{2} + 45 \hat{K}_t^2 L^2 \eta_L^2 \right] \sigma_G^2 \\
&\overset{(a14)}{=} \sum_{t=0}^{T-1} - \frac{1}{4} \eta \eta_L \| \nabla f(\x_t) \|^2
+ T \eta \eta_L \big[ \alpha_L \sigma^2_L + \alpha_G \sigma_G^2 \big],
\end{align*}
where $(a12)$ is due to maximum time delay $\tau$ in the system,
$(a13)$ holds if
$\frac{1}{4} \leq [ \frac{1}{2} - 45 \eta_L^2 K_{t, max}^2 L^2 \tau ]$, i.e., 
$180 \eta_L^2 K_{t, max}^2 L^2 \tau < 1$, and
$\bigg[ -\frac{\eta \eta_L}{2m^2} + \frac{L \eta^2 \eta_L^2}{m^2} + \frac{3 L^2 \tau^2 \eta^3 \eta_L^3}{m^2} \bigg] \leq 0$, i.e.,  $2L \eta \eta_L + 6 \tau^2 L^2 \eta^2 \eta_L^2 \leq 1$.
Note $\bar{K}_t = \frac{1}{m} \sum_{i \in \mathcal{M}_t} K_{t, i}$, $\hat{K}_t^2 = \frac{1}{m} \sum_{i \in \mathcal{M}_t} K_{t, i}^2$, and $K_{t, max} = \max \{ K_{t, i}, i \in [m] \}$.
Lastly, $(a14)$ follows from the following definitions:
$$\alpha_L = \left[ \frac{L \eta \eta_L}{m} \frac{1}{T} \sum_{t=0}^{T-1} \frac{1}{K_t} + \frac{3 \tau^2 L^2 \eta^2 \eta_L^2}{m} \frac{1}{T} \sum_{t=0}^{T-1} \frac{1}{K_t} + \frac{15 \eta_L^2 L^2}{2} \frac{1}{T} \sum_{t=0}^{T-1} \bar{K}_t \right],$$
$$\alpha_G = \left[\frac{3}{2} + 45 L^2 \eta_L^2 \frac{1}{T} \sum_{t=0}^{T-1} \hat{K}_t^2 \right].$$
Rearranging terms, we have:
\begin{align*}
\frac{1}{T} \sum_{t=0}^{T-1} \| \nabla f(\x_t) \|^2 \leq \frac{4(f_0 - f_*)}{ \eta \eta_L T} + 4 \big[ \alpha_L \sigma^2_L + \alpha_G \sigma_G^2 \big],
\end{align*}
and the proof is complete.
\end{proof}


\arbitraryC*

\begin{proof}
Suppose a constant local step $K$ for each worker, and let $\eta_L = \frac{1}{\sqrt{T}}$, and $\eta = \sqrt{mK}$.
It then follows that:
$$\alpha_L = \mathcal{O}(\frac{1}{m^{1/2} K^{1/2} T^{1/2}}) + \mathcal{O}(\frac{\tau^2}{T}) + \mathcal{O}(\frac{K}{T}).$$

$$\alpha_G = \mathcal{O}(\sigma_G^2) + \mathcal{O}(\frac{K^2}{T}).$$
This completes the proof.
\end{proof}


\subsection{Uniformly Distributed Worker Information Arrivals}\label{appdx_uniform}

Now, we consider the special case that the worker information arrivals are uniformly distributed, i.e., the worker in $\mathcal{M}_t$ could be regarded as a uniformly random sample without replacement in $[M]$.
As mentioned earlier, this special case acts as a widely-used assumption in FL and could deepen our understanding on the AFA-CD algorithm's performance in large-scale AFL systems.

\uniform*

\begin{proof}
The one-step update can be rewritten as:
$\x_{t+1} - \x_{t} = - \eta \eta_L \G_t$.
For cross-device FL,
$\G_t = \frac{1}{m} \sum_{i \in \mathcal{M}_t} \G_i(\x_{t-\tau_{t,i}})$, where $\tau_{t,i}$ is the delay for client $i$ in terms of the current global communication round $t$.
When $\tau_{t,i}=0, \forall i \in \mathcal{M}_t$, it degenerates to synchronous FL with partial worker participation.

Due to the $L$-smoothness in Assumption~1 
, taking expectation of $f(\x_{t+1})$ over the randomness in communication round $t$, we have:
\begin{align*}
	\mathbb{E} [f(\x_{t+1})] &\leq f(\x_t) + \underbrace{ \big< \nabla f(\x_t), \mathbb{E} [\x_{t+1} - \x_t] \big> }_{A_1} + \frac{L}{2} \underbrace{ \mathbb{E} [\| \x_{t+1} - \x_t \|^2 }_{A_2}
\end{align*}

We first bound $A_2$ as follows:
\begin{align*}
A_2 &= \mathbb{E} \| \x_{t+1} - \x_t \|^2 \\
&= \eta^2 \eta_L^2 \mathbb{E} \left\| \frac{1}{m} \sum_{i \in \mathcal{M}_t} G_i(\x_{t-\tau_{t,i}}) \right\|^2 \\
&\overset{(b1)}{\leq} \frac{2\eta^2 \eta_L^2}{m^2} \mathbb{E} \bigg[ \left\| \sum_{i \in \mathcal{M}_t} \Delta_i(\x_{t-\tau_{t,i}}) \right\|^2 + \frac{m}{K_t} \sigma_L^2 \bigg] \\
&\overset{(b2)}{\leq} \frac{2\eta^2 \eta_L^2}{m^2} \mathbb{E} \left\| \sum_{i=1}^{M} \mathbb{I}\{i \in \mathcal{M}_t\} \Delta_i(\x_{t-\tau_{t,i}}) \right\|^2 + \frac{2\eta^2 \eta_L^2}{m K_t} \sigma_L^2,
\end{align*}
where $(b1)$ is due to Lemma~\ref{lem:2} and $(b2)$ is due to the uniformly independent information arrival assumption.

To bound the term $A_1$, we have:
\begin{align*}
A_1 =& \big< \nabla f(\x_t), \mathbb{E} [\x_{t+1} - \x_t] \big> \\
=& - \eta \eta_L \left< \nabla f(\x_t), \mathbb{E} \left[\frac{1}{m} \sum_{i \in \mathcal{M}_t} \G_i(\x_{t - \tau_{t,i}}) \right] \right> \\
\overset{(b3)}{=}& - \eta \eta_L \left< \nabla f(\x_t), \frac{1}{M} \sum_{i \in [M]} \Delta_i(\x_{t - \tau_{t,i}}) \right> \\
\overset{(b4)}{=}& -\frac{1}{2} \eta \eta_L \left\| \nabla f(\x_t) \right\|^2 -\frac{1}{2} \eta \eta_L \left\| \frac{1}{M} \sum_{i \in [M]} \Delta_i(\x_{t - \tau_{t,i}}) \right\|^2 \\
&+ \frac{1}{2} \eta \eta_L \underbrace{ \bigg\| \nabla f(\x_t) - \frac{1}{M} \sum_{i \in [M]} \Delta_i(\x_{t - \tau_{t,i}}) \bigg\|^2}_{A_3},
\end{align*}
where $(b3)$ is due to the uniformly independent worker information arrival assumption and Lemma~\ref{lem:1},
$(b4)$ is due to the fact that $\langle \x, \y \rangle = \frac{1}{2} (\| \x \|^2 + \| \y \|^2 - \| \x - \y \|^2)$.

To further bound the term $A_3$, we have:
\begin{align*}
A_3 &=  \left\| \nabla f(\x_t) - \frac{1}{M} \sum_{i \in [M]} \Delta_i(\x_{t - \tau_{t,i}}) \right\|^2 \\
&\overset{(b5)}{=} \left\| \frac{1}{M} \sum_{i \in [M]} [\nabla f_i(\x_t) -  \Delta_i(\x_{t - \tau_{t,i}})] \right\|^2 \\
&\leq \frac{1}{M} \sum_{i \in [M]} \big\| \nabla f_i(\x_t) -  \Delta_i(\x_{t - \tau_{t,i}}) \big\|^2 \\
&= \frac{1}{M} \sum_{i \in [M]} \big\| \nabla f_i(\x_t) - \nabla f_i(\x_{t - \tau_{t,i}}) + \nabla f_i(\x_{t - \tau_{t,i}}) -  \Delta_i(\x_{t - \tau_{t,i}}) \big\|^2 \\
&\overset{(b6)}{\leq} \frac{1}{M} \sum_{i \in [M]} \bigg[ 2 \| \nabla f_i(\x_t) - \nabla f_i(\x_{t - \tau_{t,i}}) \|^2 + 2 \| \nabla f_i(\x_{t - \tau_{t,i}}) -  \Delta_i(\x_{t - \tau_{t,i}}) \|^2 \bigg] \\
%
&\overset{(b7)}{\leq} \underbrace{\frac{2L^2}{M} \sum_{i=1}^{M} \big\| \x_t - \x_{t - \tau_{t,i}} \big\|^2 }_{A_4} 
+ \frac{2}{M} \sum_{i=1}^{M} \underbrace{ \big\| \nabla f_i(\x_{t - \tau_{t,i}}) -  \Delta_i(\x_{t - \tau_{t,i}}) \big\|^2 }_{A_5},
\end{align*}
where $(b5)$ is due to the fact that $\nabla f(\x) = \frac{1}{M} \sum_{i \in [M]} \nabla f_i(\x)$,
$(b6)$ follows from the inequality $\| \x_1 + \x_2 + \cdots + \x_n \|^2 \leq n \sum_{i=1}^{n} \| \x_i \|^2$,
and $(b7)$ follows from the $L$-smoothness assumption (Assumption~1).

For $A_4$ and $A_5$, we have the same bounds as in the case of general worker information arrival processes:
\begin{align*}
A_4 &= \frac{2L^2}{M} \sum_{i=1}^{M} \big\| \x_t - \x_{t - \tau_{t,i}} \big\|^2 \\
&\leq \mathbb{E} \left[ \frac{4 L^2 \eta^2 \eta_L^2 \tau }{m^2} \sum_{k=t - \tau_{t, u}}^{t-1} \left( \| \sum_{i \in \mathcal{M}_k} \Delta_i(\x_{k - \tau_{k,i}}) \|^2 +  \frac{m}{K_k} \sigma_L^2 \right) \right] \\
&\leq \frac{4 L^2\eta^2 \eta_L^2 \tau }{m^2} \sum_{k=t - \tau_{t, u}}^{t-1} \left( \left\| \sum_{i=1}^{M} \mathbb{I}\{ i \in \mathcal{M}_k \} \Delta_i(\x_{k - \tau_{k,i}}) \right\|^2 +  \frac{m}{K_k} \sigma_L^2 \right)
\end{align*}

\begin{align*}
A_5 &= \big\| \nabla f_i(\x_{t - \tau_{t,i}}) -  \Delta_i(\x_{t - \tau_{t,i}}) \big\|^2 \\
&\leq 5 K_{t,i} L^2 \eta_L^2 (\sigma_L^2 + 6 K_{t,i} \sigma_G^2) + 30 K_{t,i}^2 L^2 \eta_L^2 \| \nabla f(\x_{t - \tau_{t,i}}) \|^2,
\end{align*}

With the above results of the term $A_{1}$ through $A_5$, we have:
\begin{align*}
&\mathbb{E}_t [f(\x_{t+1})] - f(\x_t) \leq \underbrace{ \big< \nabla f(\x_t), \mathbb{E}_t [\x_{t+1} - \x_t] \big> }_{A_1} + \frac{L}{2} \underbrace{ \mathbb{E}_t [\| \x_{t+1} - \x_t \|^2 }_{A_2} \\
&= -\frac{1}{2} \eta \eta_L \| \nabla f(\x_t) \|^2 -\frac{1}{2} \eta \eta_L \| \frac{1}{M} \sum_{i \in [M]} \Delta_i(\x_{t - \tau_{t,i}}) \|^2 + \frac{1}{2} \eta \eta_L \underbrace{ \left\| \nabla f(\x_t) - \frac{1}{M} \sum_{i \in [M]} \Delta_i(\x_{t - \tau_{t,i}}) \right\|^2}_{A_3} \\
& \quad + \frac{L \eta^2 \eta_L^2}{m^2} \mathbb{E} \left\| \sum_{i=1}^{M} \mathbb{I}\{i \in \mathcal{M}_t \} \Delta_i(\x_{t-\tau_{t,i}}) \right\|^2 + \frac{L \eta^2 \eta_L^2}{m K_t} \sigma_L^2 \\
&\leq -\frac{1}{2} \eta \eta_L \| \nabla f(\x_t) \|^2 -\frac{1}{2} \eta \eta_L \left\| \frac{1}{M} \sum_{i \in [M]} \Delta_i(\x_{t - \tau_{t,i}}) \right\|^2 
+ \frac{L \eta^2 \eta_L^2}{m^2} \mathbb{E} \left\| \sum_{i=1}^{M} \mathbb{I}\{i \in \mathcal{M}_t\} \Delta_i(\x_{t-\tau_{t,i}}) \right\|^2 \\
& \quad + \frac{1}{2} \eta \eta_L \bigg[\underbrace{\frac{2L^2}{M} \sum_{i=1}^{M} \| \x_t - \x_{t - \tau_{t,i}} \|^2 }_{A_4} 
+ \frac{2}{M} \sum_{i=1}^{M} \underbrace{ \| \nabla f_i(\x_{t - \tau_{t,i}}) -  \Delta_i(\x_{t - \tau_{t,i}}) \|^2 }_{A_5} \bigg]
+ \frac{L \eta^2 \eta_L^2}{m K_t} \sigma_L^2 \\
&\leq -\frac{1}{2} \eta \eta_L \| \nabla f(\x_t) \|^2 -\frac{1}{2} \eta \eta_L \left\| \frac{1}{M} \sum_{i \in [M]} \Delta_i(\x_{t - \tau_{t,i}}) \right\|^2 
+ \frac{L \eta^2 \eta_L^2}{m^2} \mathbb{E} \left\| \sum_{i=1}^{M} \mathbb{I}\{i \in \mathcal{M}_t\} \Delta_i(\x_{t-\tau_{t,i}}) \right\|^2  \\
& \quad + \eta \eta_L L^2 \frac{2 \eta^2 \eta_L^2 \tau }{m^2} \sum_{k=t - \tau_{t, u}}^{t-1} \left( \left\| \sum_{i=1}^{M} \mathbb{I}\{ i \in \mathcal{M}_k \} \Delta_i(\x_{k - \tau_{k,i}}) \right\|^2 +  \frac{m}{K_k} \sigma_L^2 \right) \\
& \quad + \eta \eta_L \frac{1}{M} \sum_{i=1}^{M} \left[5 K_{t, i} L^2 \eta_L^2 (\sigma_L^2 + 6 K_{t, i} \sigma_G^2) + 30 K_{t, i}^2 L^2 \eta_L^2 \| \nabla f(\x_{t - \tau_{t,i}}) \|^2 \right]
+ \frac{L \eta^2 \eta_L^2}{m K_t} \sigma_L^2 \\
%
&\leq \bigg[-\frac{1}{2} \eta \eta_L \| \nabla f(\x_t) \|^2 + (30 \eta L^2 \eta_L^3) \frac{1}{M} \sum_{i=1}^{M} K_{t, i}^2 \| \nabla f(\x_{t - \tau_{t,i}}) \|^2 \bigg] \\
&\qquad + \bigg[ -\frac{\eta \eta_L}{2M^2} \left\| \sum_{i=1}^{M} \Delta_i(\x_{t-\tau_{t,i}}) \right\|^2  + \frac{L \eta^2 \eta_L^2}{m^2} \mathbb{E} \left\| \sum_{i=1}^{M} \mathbb{I}\{i \in \mathcal{M}_t\} \Delta_i(\x_{t-\tau_{t,i}}) \right\|^2 \\
&\qquad + \frac{2 L^2 \eta^3 \eta_L^3 \tau}{m^2} \sum_{k=t - \tau_{t, \mu}}^{t-1} \mathbb{E} \left\| \sum_{i=1}^{M} \mathbb{I}\{i \in \mathcal{M}_k\} \Delta_i(\x_{k-\tau_{k,i}}) \right\|^2 \bigg] \\
&\qquad + \sigma_L^2 \bigg[ \frac{L\eta^2 \eta_L^2}{m K_t} + \frac{2 \tau L^2 \eta^3 \eta_L^3 \sum_{k=t - \tau_{t, \mu}}^{t-1} \frac{1}{K_k}}{m} + 5 \eta L^2 \eta_L^3 \frac{1}{M} \sum_{i=1}^{M} K_{t,i} \bigg] + \left[ 30 \eta L^2 \eta_L^3 \frac{1}{M} \sum_{i=1}^{M} K_{t,i}^2 \right] \sigma_G^2.
\end{align*}

Summing the above inequality from $t=0$ to $t= T-1$ yields:
\begin{align*}
&\mathbb{E} f(\x_{T}) - f(\x_0) \\
\leq& \sum_{t=0}^{T-1} \bigg[-\frac{1}{2} \eta \eta_L \| \nabla f(\x_t) \|^2 + (30 \eta L^2 \eta_L^3) \frac{1}{M} \sum_{i=1}^{M} K_{t, i}^2 \| \nabla f(\x_{t - \tau_{t,i}}) \|^2 \bigg] \\
&+ \sum_{t=0}^{T-1} \bigg[ -\frac{\eta \eta_L}{2M^2} \left\| \sum_{i=1}^{M} \Delta_i(\x_{t-\tau_{t,i}}) \right\|^2  + \frac{L \eta^2 \eta_L^2}{m^2} \mathbb{E} \left\| \sum_{i=1}^{M} \mathbb{I}\{i \in \mathcal{M}_t \} \Delta_i(\x_{t-\tau_{t,i}}) \right\|^2 \\
&+ \frac{2 L^2 \eta^3 \eta_L^3 \tau}{m^2} \sum_{k=t - \tau_{t, \mu}}^{t-1} \mathbb{E} \left\| \sum_{i=1}^{M} \mathbb{I}\{i \in \mathcal{M}_k\} \Delta_i(\x_{k-\tau_{k,i}}) \right\|^2 \bigg] \\
&+ \sum_{t=0}^{T-1} \left[ \sigma_L^2 \left( \frac{L\eta^2 \eta_L^2}{m K_t} + \frac{2 \tau L^2 \eta^3 \eta_L^3 \sum_{k=t - \tau_{t, \mu}}^{t-1} \frac{1}{K_k}}{m} + 5 \eta L^2 \eta_L^3 \frac{1}{M} \sum_{i=1}^{M} K_{t,i} \right) + \left( 30 \eta L^2 \eta_L^3 \frac{1}{M} \sum_{i=1}^{M} K_{t,i}^2 \right) \sigma_G^2 \right] \\
\overset{(b8)}{\leq}& \sum_{t=0}^{T-1} \bigg[ -\frac{1}{2} \eta \eta_L \| \nabla f(\x_t) \|^2 + (30 \eta L^2 \eta_L^3 \tau) \frac{1}{M} \sum_{i=1}^{M} K_{t, i}^2 \| \nabla f(\x_t) \|^2 \bigg] \\
&+ \sum_{t=0}^{T-1} \bigg[ -\frac{\eta \eta_L}{2M^2} \left\| \sum_{i=1}^{M} \Delta_i(\x_{t-\tau_{t,i}}) \right\|^2  + \frac{ L \eta^2 \eta_L^2}{m^2} \mathbb{E}\| \sum_{i=1}^{M} \mathbb{I}\{ i \in \mathcal{M}_t \} \Delta_i(\x_{t-\tau_{t,i}}) \|^2 \\
&+ \frac{2 L^2 \eta^3 \eta_L^3 \tau^2}{m^2} \mathbb{E} \left\| \sum_{i \in [M]} \mathbb{I}\{ i \in \mathcal{M}_t \} \Delta_i(\x_{t - \tau_{t,i}}) \right\|^2 \bigg] \\
&+ \sum_{t=0}^{T-1} \left[ \sigma_L^2 \left( \frac{L\eta^2 \eta_L^2}{m K_t} + \frac{2 \tau L^2 \eta^3 \eta_L^3 \sum_{k=t - \tau_{t, \mu}}^{t-1} \frac{1}{K_k}}{m} + 5 \eta L^2 \eta_L^3 \frac{1}{M} \sum_{i=1}^{M} K_{t,i} \right) + \left( 30 \eta L^2 \eta_L^3 \frac{1}{M} \sum_{i=1}^{M} K_{t,i}^2 \right) \sigma_G^2 \right], \\
\overset{(b9)}{\leq}& \sum_{t=0}^{T-1} \bigg[-\frac{1}{2} \eta \eta_L \| \nabla f(\x_t) \|^2 + (30 \eta L^2 \eta_L^3 \tau) \hat{K}_t^2 \| \nabla f(\x_t) \|^2 \bigg] \\
&+ \sum_{t=0}^{T-1} \bigg[ -\frac{\eta \eta_L}{2M^2} \left\| \sum_{i=1}^{M} \Delta_i(\x_{t-\tau_{t,i}}) \right\|^2  + \frac{ L \eta^2 \eta_L^2}{m^2} \mathbb{E}\left\| \sum_{i=1}^{M} \mathbb{I}\{ i \in \mathcal{M}_t \} \Delta_i(\x_{t-\tau_{t,i}}) \right\|^2 \\
&+ \frac{2 L^2 \eta^3 \eta_L^3 \tau^2}{m^2} \mathbb{E} \left\| \sum_{i \in [M]} \mathbb{I}\{ i \in \mathcal{M}_t \} \Delta_i(\x_{t - \tau_{t,i}}) \right\|^2 \bigg] \\
&+ \sum_{t=0}^{T-1} \left[ \sigma_L^2 \left( \frac{L\eta^2 \eta_L^2}{m K_t} + \frac{2 \tau^2 L^2 \eta^3 \eta_L^3}{m} \frac{1}{K_t} + 5 \eta L^2 \eta_L^3 \bar{K}_t \right) + \left( 30 \eta L^2 \eta_L^3 \hat{K}_t^2 \right) \sigma_G^2 \right],
\end{align*}
where $(b8)$ is due to the fact that the delay in the system is less than $\tau$, $(b9)$ follows from that
$\hat{K}_t^2 = \frac{1}{M} \sum_{i=1}^{M} K_{t, i}^2, \bar{K}_t = \frac{1}{M} \sum_{i=1}^{M} K_{t,i}$.

By letting $\z_{i} = \Delta_i(\x_{t-\tau_{t,i}})$ (omitting the communication round index $t$ for notation simplicity), we have that:

\begin{align*}
\| \sum_{i=1}^{M} \z_i \|^2 &= \sum_{i \in [M]} \|  \z_i \|^2 + \sum_{i \neq j} \langle \z_i, \z_j \rangle, \\
&\overset{(b10)}{=} \sum_{i \in [M]} M \|  \z_i \|^2 - \frac{1}{2} \sum_{i \neq j} \| \z_i - \z_j \|^2, \\
\mathbb{E}\| \sum_{i=1}^{M} \mathbb{I} \{ i \in \mathcal{M}_t \} \z_i \|^2 &= \sum_{i \in [M]} \mathbb{P} \{ i \in \mathcal{M}_t \} \|  \z_i \|^2 + \sum_{i \neq j} \mathbb{P} \{ i, j \in \mathcal{M}_t \} \langle \z_i, \z_j \rangle \\
&\overset{(b11)}{=} \frac{m}{M} \sum_{i \in [M]} \| \z_i \|^2 + \frac{m(m-1)}{M(M-1)} \sum_{i \neq j} \langle \z_i, \z_j \rangle \\
&\overset{(b12)}{=} \frac{m^2}{M} \sum_{i \in [M]} \| \z_i \|^2 - \frac{m(m-1)}{2M(M-1)} \sum_{i \neq j} \| \z_i - \z_j \|^2,
\end{align*}
where $(b10)$ and $(b12)$ are due to the fact that $\big<\x, \y \big> = \frac{1}{2} [ \| \x \|^2 + \| \y \|^2 - \| \x - \y \|^2] \leq \frac{1}{2} [ \| \x \|^2 + \| \y \|^2 ],$
$(b11)$ follows from the fact that $\mathbb{P} \{ i \in \mathcal{M}_t \} = \frac{m}{M}$ and $\mathbb{P} \{ i, j \in \mathcal{M}_t \} = \frac{m(m-1)}{M(M-1)}$.
It then follows that:
\begin{align*}
& -\frac{\eta \eta_L}{2M^2} \| \sum_{i=1}^{M} \z_{i} \|^2  + \frac{L \eta^2 \eta_L^2}{m^2} \mathbb{E} \left\| \sum_{i=1}^{M} \mathbb{I}\{ i \in \mathcal{M}_t \} \z_{i} \right\|^2
+ \frac{L^2 \eta^3 \eta_L^3 \tau^2}{m^2} \mathbb{E} \left\| \sum_{i=1}^{M} \mathbb{I}\{ i \in \mathcal{M}_t \} \z_{i} \right\|^2 \\
&\leq \bigg[ -\frac{\eta \eta_L}{2M} + (\frac{L\eta^2 \eta_L^2}{M} + \frac{ L^2 \eta^3 \eta_L^3 \tau^2}{M}) \bigg] \sum_{i=1}^{M} \| \z_{i} \|^2 + \frac{\eta \eta_L}{4M^2} \sum_{i \neq j} \| \z_i - \z_j \|^2 \\
&\leq \bigg[ -\frac{\eta \eta_L}{2M} + (\frac{L\eta^2 \eta_L^2}{M} + \frac{ L^2 \eta^3 \eta_L^3 \tau^2}{M}) \bigg] \sum_{i=1}^{M} \| \z_{i} \|^2 + \frac{\eta \eta_L (M-1)}{2M^2} \sum_{i=1}^{M} \| \z_{i} \|^2 \\
&= \bigg[ -\frac{\eta \eta_L}{2M^2} + (\frac{L\eta^2 \eta_L^2}{M} + \frac{ L^2 \eta^3 \eta_L^3 \tau^2}{M}) \bigg] \sum_{i=1}^{M} \| \z_{i} \|^2 \\
&\leq 0
%
\end{align*}
The last inequality follows from $L \eta \eta_L + L^2 \eta^2 \eta_L^2 \tau^2 \leq \frac{1}{2M}$.


Using the above results, we finally have:
\begin{align*}
	\mathbb{E} f(\x_{T}) - f(\x_0)
	&\leq \sum_{t=0}^{T-1} \bigg[-\frac{1}{2} \eta \eta_L \| \nabla f(\x_t) \|^2 + (30 \eta L^2 \eta_L^3 \tau) \hat{K}_t^2 \| \nabla f(\x_t) \|^2 \bigg] \\
	&\qquad + \sum_{t=0}^{T-1} \left[ \sigma_L^2 \left( \frac{L\eta^2 \eta_L^2}{m K_t} + \frac{2 \tau^2 L^2 \eta^3 \eta_L^3}{m} \frac{1}{K_t} + 5 \eta L^2 \eta_L^3 \bar{K}_t \right) + \left( 30 \eta L^2 \eta_L^3 \hat{K}_t^2 \right) \sigma_G^2 \right] \\
	&\leq \sum_{t=0}^{T-1} - \frac{1}{4} \eta \eta_L \| \nabla f(\x_t) \|^2 + T \eta \eta_L \big[ \alpha_L \sigma^2_L + \alpha_G \sigma_G^2 \big]
\end{align*}
where $(b13)$ follows from the fact that
$$\frac{1}{4} \leq \frac{1}{2} - 30 L^2 \hat{K}_t^2 \eta_L^2 \tau $$ if
$120 L^2 \hat{K}_t^2 \eta_L^2 \tau < 1, \forall t$;
$\alpha_L$ and $\alpha_G$ are defined as following:
$$\alpha_L = \frac{1}{T} \sum_{t=0}^{T-1} \left[ ( \frac{ L \eta \eta_L}{m K_t} + \frac{ 2 \tau^2 L^2 \eta^2 \eta_L^2}{m K_t} + 5 \bar{K}_t L^2 \eta_L^2) \right],$$
and
$$\alpha_G = \frac{1}{T} \sum_{t=0}^{T-1} \left[30 \hat{K}_t^2 L^2 \eta_L^2 \right].$$

Lastly, by rearranging and telescoping, we have
\begin{align*}
\frac{1}{T} \sum_{t=0}^{T-1} \mathbb{E}\| \nabla f(\x_t) \|^2 \leq \frac{4(f_0 - f_*)}{ \eta \eta_L T} + 4 \big[ \alpha_L \sigma^2_L + \alpha_G \sigma_G^2 \big].
\end{align*}
This completes the proof.
\end{proof}

\uniformC*

\begin{proof}
Suppose a constant local step $K$, let $\eta_L = \frac{1}{\sqrt{T}}$, and $\eta = \sqrt{mK}$, then it follows that:
$$\alpha_L = \mathcal{O}(\frac{1}{m^{1/2} K^{1/2} T^{1/2}}) + \mathcal{O}(\frac{\tau^2}{T}) + \mathcal{O}(\frac{K}{T}) ,$$

$$\alpha_G = \mathcal{O}(\frac{K^2}{T}),$$
and the proof is complete.
\end{proof}

\section{Proof of the performance results of the AFA-CS algorithm} \label{appdx_silo}

\silo*

\begin{proof}
We divide the stochastic gradient returns \{ $\G_i $\} into two groups, one is for those without delay ($\G_i(x_t), i \in \mathcal{M}_t, | \mathcal{M}_t | =m'$) and the other is for those with delay ($\G_i(\x_{t-\tau_{t, i}}), i \in \mathcal{M}_t^c, | \mathcal{M}_t^c | = M - m'$).

Then, the update step can be written as follows:
\begin{align*}
   \x_{t+1} - \x_{t} &= - \frac{\eta \eta_L}{M} \bigg[ \sum_{i \in \mathcal{M}_t} G_i(\x_t) + \sum_{i \in \mathcal{M}^c_t} G_i(\x_{t-\tau_{t, i}}) \bigg].
\end{align*}
Due to the $L$-smoothness assumption, taking expectation of $f(\x_{t+1})$ over the randomness in communication round $t$, we have:
\begin{align*}
	\mathbb{E} [f(\x_{t+1})] &\leq f(\x_t) + \underbrace{ \big< \nabla f(\x_t), \mathbb{E} [\x_{t+1} - \x_t] \big> }_{A_1} + \frac{L}{2} \underbrace{ \mathbb{E} [\| \x_{t+1} - \x_t \|^2 }_{A_2}
\end{align*}

We first bound $A_2$ as follows:
\begin{align*}
A_2 &= \mathbb{E}[\| \x_{t+1} - \x_t \|^2 ] \\
	&= \frac{\eta^2 \eta_L^2}{M^2} \mathbb{E}  \bigg[ \bigg\| \sum_{i \in \mathcal{M}_t} G_i(\x_t) + \sum_{i \in \mathcal{M}^c_t} G_i(\x_{t-\tau_{t, i}}) \bigg\|^2 \bigg]  \\
	&= \frac{\eta^2 \eta_L^2}{M^2} \mathbb{E} \left[ \left\| \sum_{i \in \mathcal{M}_t} \left[ G_i(\x_t) - \Delta_i(\x_t) \right] + \sum_{i \in \mathcal{M}^c_t} \left[ G_i(\x_{t-\tau_{t, i}}) - \Delta_i(\x_{t-\tau_{t, i}}) \right] + \sum_{i \in \mathcal{M}_t} \Delta_i(\x_t) + \sum_{i \in \mathcal{M}^c_t}  \Delta_i(\x_{t-\tau_{t, i}}) \right\|^2 \right] \\
    &\overset{(c1)}{\leq} \frac{2 \eta^2 \eta_L^2}{M^2} \left( \sum_{i \in \mathcal{M}_t} \frac{1}{K_{t,i}} + \sum_{i \in \mathcal{M}^c_t} \frac{1}{K_{t-\tau_{t, i}, i}}\right) \sigma_L^2 + \frac{2 \eta^2 \eta_L^2}{M^2} \left[ \left\| \sum_{i \in \mathcal{M}_t} \Delta_i(\x_t) + \sum_{i \in \mathcal{M}^c_t} \Delta_i(\x_{t-\tau_{t, i}}) \right\|^2 \right] \\
    &= \frac{2 \eta^2 \eta_L^2}{M K_t} \sigma_L^2 + \frac{2 \eta^2 \eta_L^2}{M^2} \left[ \left\| \sum_{i \in \mathcal{M}_t} \Delta_i(\x_t) + \sum_{i \in \mathcal{M}^c_t} \Delta_i(\x_{t-\tau_{t, i}}) \right\|^2 \right],
\end{align*}
where $(c1)$ follows from the similar result in Lemma~\ref{lem:1} and $\frac{1}{K_t} = \frac{1}{M} \left( \sum_{i \in \mathcal{M}_t} \frac{1}{K_{t,i}} + \sum_{i \in \mathcal{M}^c_t} \frac{1}{K_{t-\tau_{t, i}, i}}\right)$.
To bound the term $A_1$, we have:
\begin{align*}
    A_1 &= \mathbb{E} \big< \nabla f(\x_t), \x_{t+1} - \x_t \big> \\
    &= \mathbb{E} \left< \nabla f(\x_t), - \frac{\eta \eta_L}{M} \left[ \sum_{i \in \mathcal{M}_t} \G_i(\x_t) + \sum_{i \in \mathcal{M}^c_t} G_i(\x_{t-\tau_{t, i}}) \right] \right> \\
    &= - \eta \eta_L \left< \nabla f(\x_t), \frac{1}{M} \left[ \sum_{i \in \mathcal{M}_t} \Delta_i(\x_t) + \sum_{i \in \mathcal{M}^c_t} \Delta_i(\x_{t-\tau_{t, i}}) \right] \right> \\
    &= - \frac{\eta \eta_L}{2} \left\| \nabla f(\x_t) \right\|^2 - \frac{\eta \eta_L}{2M^2} \left\| \sum_{i \in \mathcal{M}_t} \Delta_i(\x_t) + \sum_{i \in \mathcal{M}^c_t} \Delta_i(\x_{t-\tau_{t, i}}) \right\|^2 \\
    &\quad + \frac{\eta \eta_L}{2} \left\| \nabla f(\x_t) - \frac{1}{M} \left[ \sum_{i \in \mathcal{M}_t} \Delta_i(\x_t) + \sum_{i \in \mathcal{M}^c_t} \Delta_i(\x_{t-\tau_{t, i}}) \right] \right\|^2 \\
    &= - \frac{\eta \eta_L}{2} \left\| \nabla f(\x_t) \right\|^2 - \frac{\eta \eta_L}{2M^2} \left\| \sum_{i \in \mathcal{M}_t} \Delta_i(\x_t) + \sum_{i \in \mathcal{M}^c_t} \Delta_i(\x_{t-\tau_{t, i}}) \right\|^2 \\
    &\quad + \frac{\eta \eta_L}{2M^2} \left\| \sum_{i \in \mathcal{M}_t} \left[ \nabla f(\x_t) - \Delta_i(\x_t) \right] + \sum_{i \in \mathcal{M}^c_t} \left[ \nabla f(\x_{t-\tau_{t, i}}) - \Delta_i(\x_{t-\tau_{t, i}}) \right] + \sum_{i \in \mathcal{M}^c_t} \left[ \nabla f(\x_t) - \nabla f(\x_{t-\tau_{t, i}}) \right] \right\|^2 \\
    &\leq - \frac{\eta \eta_L}{2} \left\| \nabla f(\x_t) \right\|^2 - \frac{\eta \eta_L}{2M^2} \left\| \sum_{i \in \mathcal{M}_t} \Delta_i(\x_t) + \sum_{i \in \mathcal{M}^c_t} \Delta_i(\x_{t-\tau_{t, i}}) \right\|^2 \\
    &\quad + \frac{\eta \eta_L}{M^2} \left\| \sum_{i \in \mathcal{M}_t} \left[ \nabla f(\x_t) - \Delta_i(\x_t) \right] + \sum_{i \in \mathcal{M}^c_t} \left[ \nabla f(\x_{t-\tau_{t, i}}) - \Delta_i(\x_{t-\tau_{t, i}}) \right] \right\|^2 \\
    &\quad + \frac{\eta \eta_L}{M^2}\left\| \sum_{i \in \mathcal{M}^c_t} \left[ \nabla f(\x_t) - \nabla f(\x_{t-\tau_{t, i}}) \right] \right\|^2 \\
    &\leq - \frac{\eta \eta_L}{2} \left\| \nabla f(\x_t) \right\|^2 - \frac{\eta \eta_L}{2M^2} \left\| \sum_{i \in \mathcal{M}_t} \Delta_i(\x_t) + \sum_{i \in \mathcal{M}^c_t} \Delta_i(\x_{t-\tau_{t, i}}) \right\|^2 \\
    &\quad + \frac{\eta \eta_L}{M} \left[ \sum_{i \in \mathcal{M}_t} \left\| \nabla f(\x_t) - \Delta_i(\x_t) \right\|^2 + \sum_{i \in \mathcal{M}^c_t} \left\| \nabla f(\x_{t-\tau_{t, i}}) - \Delta_i(\x_{t-\tau_{t, i}}) \right\|^2 \right] \\
    &\quad + \frac{\eta \eta_L (M - m^{'})}{M^2} \sum_{i \in \mathcal{M}^c_t} \left\| \nabla f(\x_t) - \nabla f(\x_{t-\tau_{t, i}}) \right\|^2
\end{align*}

For each worker $i$, we have:
\begin{align*}
\| \nabla f_i(\x_t) -  \Delta_i(\x_t) \|^2 
&= \left\| \nabla f_i(\x_t) - \frac{1}{K_{t,i}} \sum_{j=0}^{K_{t,i}-1} \nabla f_i(\x_{t, i}^j) \right\|^2 \\
&= \frac{1}{K_{t,i}} \sum_{j=0}^{K_{t,i}-1} \| \nabla f_i(\x_{t}) - \nabla f_i(\x_{t, i}^j) \|^2 \\
&\leq \frac{L^2}{K_{t,i}} \sum_{j=0}^{K_{t,i}-1} \| \x_{t} - \x_{t, i}^j \|^2 \\
&\overset{(c2)}{\leq} 5 K_{t, i} L^2 \eta_L^2 \sigma_L^2 + 30 K_{t, i}^2 L^2 \eta_L^2 \sigma_G^2 + 30 K_{t, i}^2 L^2 \eta_L^2 \| \nabla f(\x_{t}) \|^2,
\end{align*}
where $(c2)$ follows from the same bound of $A6$ specified in Eq.~\eqref{appdx:a6}.

\begin{align*}
    \left\| \nabla f(\x_t) - \nabla f(\x_{t-\tau_{t, i}}) \right\|^2 &\leq L^2  \left\| \x_t - \x_{t-\tau_{t, i}} \right\|^2 \\
    &\leq L^2 \tau_{t, i} \sum_{u=0}^{\tau_{t, i}- 1} \left\| \x_{t-u} - \x_{t-u-1} \right\|^2.
\end{align*}

\begin{align*}
    A_1 &\leq - \frac{\eta \eta_L}{2} \left\| \nabla f(\x_t) \right\|^2 - \frac{\eta \eta_L}{2M^2} \left\| \sum_{i \in \mathcal{M}_t} \Delta_i(\x_t) + \sum_{i \in \mathcal{M}^c_t} \Delta_i(\x_{t-\tau_{t, i}}) \right\|^2 \\
    &\quad + \frac{\eta \eta_L}{M} \left[ \sum_{i \in \mathcal{M}_t} \left\| \nabla f(\x_t) - \Delta_i(\x_t) \right\|^2 + \sum_{i \in \mathcal{M}^c_t} \left\| \nabla f(\x_{t-\tau_{t, i}}) - \Delta_i(\x_{t-\tau_{t, i}}) \right\|^2 \right] \\
    &\quad + \frac{\eta \eta_L (M - m^{'})}{M^2} \sum_{i \in \mathcal{M}^c_t} \left\| \nabla f(\x_t) - \nabla f(\x_{t-\tau_{t, i}}) \right\|^2 \\
    &\leq - \frac{\eta \eta_L}{2} \left\| \nabla f(\x_t) \right\|^2 - \frac{\eta \eta_L}{2M^2} \left\| \sum_{i \in \mathcal{M}_t} \Delta_i(\x_t) + \sum_{i \in \mathcal{M}^c_t} \Delta_i(\x_{t-\tau_{t, i}}) \right\|^2 \\
    &\quad + \frac{\eta \eta_L}{M} \left[ \left(5L^2 \eta_L^2 \sigma_L^2 \right) \left(\sum_{i \in \mathcal{M}_t} K_{t,i} + \sum_{i \in \mathcal{M}^c_t} K_{t-\tau_{t, i},i} \right) + \left(30L^2 \eta_L^2 \sigma_G^2 \right) \left(\sum_{i \in \mathcal{M}_t} K_{t,i}^2 + \sum_{i \in \mathcal{M}^c_t} K_{t-\tau_{t, i},i}^2 \right) \right] \\
    &\quad + \frac{\eta \eta_L}{M} \left(30L^2 \eta_L^2 \right) \left(\sum_{i \in \mathcal{M}_t} K_{t,i}^2 \left\| \nabla f(\x_t) \right\|^2 + \sum_{i \in \mathcal{M}^c_t} K_{t-\tau_{t, i},i}^2 \left\| \nabla f(\x_{t-\tau_{t,i}}) \right\|^2 \right) \\
    &\quad + \frac{\eta \eta_L (M - m^{'}) L^2}{M^2} \sum_{i \in \mathcal{M}^c_t} \left( \tau_{t, i} \sum_{u=0}^{\tau_{t, i}- 1} \left\| \x_{t-u} - \x_{t-u-1} \right\|^2 \right)
\end{align*}

Combining $A_1$ abd $A_2$, we have:
\begin{align*}
	&\mathbb{E} [f(\x_{t+1})] - f(\x_t) \leq \underbrace{ \big< \nabla f(\x_t), \mathbb{E} [\x_{t+1} - \x_t] \big> }_{A_1} + \frac{L}{2} \underbrace{ \mathbb{E} [\| \x_{t+1} - \x_t \|^2 }_{A_2} \\
    &\leq - \frac{\eta \eta_L}{2} \left\| \nabla f(\x_t) \right\|^2 - \frac{\eta \eta_L}{2M^2} \left\| \sum_{i \in \mathcal{M}_t} \Delta_i(\x_t) + \sum_{i \in \mathcal{M}^c_t} \Delta_i(\x_{t-\tau_{t, i}}) \right\|^2 + \frac{L}{2} \mathbb{E} \| \x_{t-1} - \x_t \|^2 \\
    &\quad + \frac{\eta \eta_L}{M} \left[ \left(5L^2 \eta_L^2 \sigma_L^2 \right) \left(\sum_{i \in \mathcal{M}_t} K_{t,i} + \sum_{i \in \mathcal{M}^c_t} K_{t-\tau_{t, i},i} \right) + \left(30L^2 \eta_L^2 \sigma_G^2 \right) \left(\sum_{i \in \mathcal{M}_t} K_{t,i}^2 + \sum_{i \in \mathcal{M}^c_t} K_{t-\tau_{t, i},i}^2 \right) \right] \\
    &\quad + \frac{\eta \eta_L}{M} \left(30L^2 \eta_L^2 \right) \left(\sum_{i \in \mathcal{M}_t} K_{t,i}^2 \left\| \nabla f(\x_t) \right\|^2 + \sum_{i \in \mathcal{M}^c_t} K_{t-\tau_{t, i},i}^2 \left\| \nabla f(\x_{t-\tau_{t,i}}) \right\|^2 \right) \\
    &\quad +  \frac{\eta \eta_L (M - m^{'}) L^2}{M^2} \sum_{i \in \mathcal{M}^c_t} \left( \tau_{t, i} \sum_{u=0}^{\tau_{t, i}- 1} \left\| \x_{t-u} - \x_{t-u-1} \right\|^2 \right).
\end{align*}

Summing from $t=0$ to $T-1$, we have:
\begin{align*}
	&\mathbb{E} [f(\x_{T})] - f(\x_0) 
    \leq - \frac{\eta \eta_L}{2} \sum_{t=0}^{T-1} \left\| \nabla f(\x_t) \right\|^2 - \frac{\eta \eta_L}{2M^2} \sum_{t=0}^{T-1} \left\| \sum_{i \in \mathcal{M}_t} \Delta_i(\x_t) + \sum_{i \in \mathcal{M}^c_t} \Delta_i(\x_{t-\tau_{t, i}}) \right\|^2 + \frac{L}{2} \sum_{t=0}^{T-1} \mathbb{E} \| \x_{t+1} - \x_t \|^2 \\
    &\quad + \frac{\eta \eta_L}{M} \sum_{t=0}^{T-1} \left[ \left(5L^2 \eta_L^2 \sigma_L^2 \right) \left(\sum_{i \in \mathcal{M}_t} K_{t,i} + \sum_{i \in \mathcal{M}^c_t} K_{t-\tau_{t, i},i} \right) + \left(30L^2 \eta_L^2 \sigma_G^2 \right) \left(\sum_{i \in \mathcal{M}_t} K_{t,i}^2 + \sum_{i \in \mathcal{M}^c_t} K_{t-\tau_{t, i},i}^2 \right) \right] \\
    &\quad + \frac{\eta \eta_L}{M} \left(30L^2 \eta_L^2 \right) \sum_{t=0}^{T-1} \left(\sum_{i \in \mathcal{M}_t} K_{t,i}^2 \left\| \nabla f(\x_t) \right\|^2 + \sum_{i \in \mathcal{M}^c_t} K_{t-\tau_{t, i},i}^2 \left\| \nabla f(\x_{t-\tau_{t,i}}) \right\|^2 \right) \\
    &\quad + \frac{\eta \eta_L (M - m^{'}) L^2}{M^2} \sum_{t=0}^{T-1} \sum_{i \in \mathcal{M}^c_t} \left( \tau_{t, i} \sum_{u=0}^{\tau_{t, i}- 1} \left\| \x_{t-u} - \x_{t-u-1} \right\|^2 \right) \\
    &\overset{(c3)}{\leq} - \frac{\eta \eta_L}{2} \sum_{t=0}^{T-1} \left\| \nabla f(\x_t) \right\|^2 - \frac{\eta \eta_L}{2M^2} \sum_{t=0}^{T-1} \left\| \sum_{i \in \mathcal{M}_t} \Delta_i(\x_t) + \sum_{i \in \mathcal{M}^c_t} \Delta_i(\x_{t-\tau_{t, i}}) \right\|^2  \\
    &\quad + \frac{\eta \eta_L}{M} \sum_{t=0}^{T-1} \left[ \left(5L^2 \eta_L^2 \sigma_L^2 \right) \bar{K}_t + \left(30L^2 \eta_L^2 \sigma_G^2 \right) \hat{K}_t^2 \right] + \frac{\eta \eta_L \tau}{M} \left(30L^2 \eta_L^2 \right) \sum_{t=0}^{T-1} \left(\sum_{i \in [M]} K_{t,i}^2 \right) \left\| \nabla f(\x_t) \right\|^2 \\
    &\quad + \left( \frac{\eta \eta_L (M - m^{'})^2 L^2 \tau^2}{M^2} + \frac{L}{2} \right) \sum_{t=0}^{T-1} \left( \left\| \x_{t+1} - \x_{t} \right\|^2 \right) \\
    &\leq - \frac{\eta \eta_L}{2} \sum_{t=0}^{T-1} \left\| \nabla f(\x_t) \right\|^2 - \left[ \frac{\eta \eta_L}{2M^2} - \left( \frac{\eta \eta_L (M - m^{'})^2 L^2 \tau^2}{M^2} + \frac{L}{2} \right) \frac{2 \eta^2 \eta_L^2}{M^2} \right] \sum_{t=0}^{T-1} \left\| \sum_{i \in \mathcal{M}_t} \Delta_i(\x_t) + \sum_{i \in \mathcal{M}^c_t} \Delta_i(\x_{t-\tau_{t, i}}) \right\|^2  \\
    &\quad + \frac{\eta \eta_L}{M} \sum_{t=0}^{T-1} \left[ 5L^2 \eta_L^2 \bar{K}_t \sigma_L^2 + 30L^2 \eta_L^2 \hat{K}_t^2 \sigma_G^2 \right] + \frac{\eta \eta_L \tau}{M} \left(30L^2 \eta_L^2 \right) \sum_{t=0}^{T-1} \left(\sum_{i \in [M]} K_{t,i}^2 \right) \left\| \nabla f(\x_t) \right\|^2 \\
    &\quad + \left( \frac{\eta \eta_L (M - m^{'})^2 L^2 \tau^2}{M^2} + \frac{L}{2} \right) \frac{2 \eta^2 \eta_L^2}{M} \sigma_L^2 \sum_{t=0}^{T-1} \frac{1}{K_t} \\
    &\overset{(c4)}{\leq} - \sum_{t=0}^{T-1} \left[\frac{\eta \eta_L}{2} - \frac{30 L^2 \eta \eta_L^3 \tau}{M} \left(\sum_{i \in [M]} K_{t,i}^2 \right) \right] \left\| \nabla f(\x_t) \right\|^2  \\
    &\quad + \frac{\eta \eta_L}{M} \sum_{t=0}^{T-1} \left[ \left[5L^2 \eta_L^2 \bar{K}_t + \left( \frac{\eta \eta_L (M - m^{'})^2 L^2 \tau^2}{M^2} + \frac{L}{2} \right) 2 \eta \eta_L \frac{1}{K_t} \right] \sigma_L^2 + \left[ 30L^2 \eta_L^2 \hat{K}_t^2 \right] \sigma_G^2 \right] \\
    &\overset{(c5)}{\leq} - \sum_{t=0}^{T-1} \frac{\eta \eta_L}{4} \left\| \nabla f(\x_t) \right\|^2 + \frac{\eta \eta_L}{M} \sum_{t=0}^{T-1} \left[ \left[5L^2 \eta_L^2 \bar{K}_t + \left( \frac{\eta \eta_L (M - m^{'})^2 L^2 \tau^2}{M^2} + \frac{L}{2} \right) 2 \eta \eta_L\frac{1}{K_t} \right] \sigma_L^2 + \left[ 30L^2 \eta_L^2 \hat{K}_t^2 \right] \sigma_G^2 \right],
\end{align*}
where $(c3)$ follows from the facts that $\bar{K}_t = \left(\sum_{i \in \mathcal{M}_t} K_{t,i} + \sum_{i \in \mathcal{M}^c_t} K_{t-\tau_{t, i},i} \right)$, $\hat{K}_t^2 = \left(\sum_{i \in \mathcal{M}_t} K_{t,i}^2 + \sum_{i \in \mathcal{M}^c_t} K_{t-\tau_{t, i},i}^2 \right)$, and $\tau$ is the maximum delay;
$(c4)$ is due to $\left[ \frac{\eta \eta_L}{2M^2} - \left( \frac{\eta \eta_L (M - m^{'})^2 L^2 \tau^2}{M^2} + \frac{L}{2} \right) \frac{2 \eta^2 \eta_L^2}{M^2} \right] \geq 0$ if $\left( \frac{\eta \eta_L (M - m^{'})^2 L^2 \tau^2}{M^2} + \frac{L}{2} \right) \eta \eta_L\leq \frac{1}{4}$;
and $(c5)$ is due to $\frac{\eta \eta_L}{4} \leq \left[ \frac{\eta \eta_L}{2} - \frac{30 L^2 \eta \eta_L^3 \tau}{M} \left(\sum_{i \in [M]} K_{t,i}^2 \right) \right]$ if $\frac{30 L^2 \eta_L^2 \tau}{M} \left(\sum_{i \in [M]} K_{t,i}^2 \right) \leq \frac{1}{4}$.

By rearranging, we have:
\begin{align*}
    \frac{1}{T} \sum_{t=0}^{T-1} \left\| \nabla f(\x_t) \right\|^2 &\leq \frac{4 f(\x_0) - f(\x_T)}{\eta \eta_L T} + \alpha_L \sigma_L^2 + \alpha_G \sigma_G^2,
\end{align*}
where 
\begin{align*}
    \alpha_L &= \frac{4}{M} \left[5L^2 \eta_L^2 \frac{1}{T} \sum_{t=0}^{T-1} \bar{K}_t + \left( \frac{2\eta^2 \eta_L^2 (M - m^{'})^2 L^2 \tau^2}{M^2} + L \eta \eta_L \right) \frac{1}{T} \sum_{t=0}^{T-1} \frac{1}{K_t} \right], \\
    \alpha_G &= \frac{120L^2 \eta_L^2}{M} \frac{1}{T} \sum_{t=0}^{T-1} \hat{K}_t^2.
\end{align*}

Note 
\begin{align*}
    \frac{1}{K_t} &= \frac{1}{M} \left( \sum_{i \in \mathcal{M}_t} \frac{1}{K_{t,i}} + \sum_{i \in \mathcal{M}^c_t} \frac{1}{K_{t-\tau_{t, i}, i}}\right), \\
    \bar{K}_t &= \left(\sum_{i \in \mathcal{M}_t} K_{t,i} + \sum_{i \in \mathcal{M}^c_t} K_{t-\tau_{t, i},i} \right), \\
    \hat{K}_t^2 &= \left(\sum_{i \in \mathcal{M}_t} K_{t,i}^2 + \sum_{i \in \mathcal{M}^c_t} K_{t-\tau_{t, i},i}^2 \right).
\end{align*}
This completes the proof.
\end{proof}

\siloC*

\begin{proof}
Let $\eta_L = \frac{1}{\sqrt{T}}$, and $\eta = \sqrt{MK}$.
It then follows that:
$$\alpha_L = \mathcal{O}(\frac{1}{M^{1/2} K^{1/2} T^{1/2}}) + \mathcal{O}(\frac{K}{MT}) + \mathcal{O}(\frac{\tau^2 (M-m^{'})^2}{TM^2}),$$

$$\alpha_G = \mathcal{O}(\frac{K^2}{MT}).$$
This completes the proof.
\end{proof}
\section{Discussion} \label{dis}

\textbf{Convergence Error:} 
The case with uniformly distributed worker information arrivals under AFL can be viewed as a uniformly independent sampling process from total workers $[M]$ under conventional FL.
Also, the case with general worker information arrival processes under AFL can be equivalently mapped to an arbitrarily independent sampling under conventional FL.
In each communication round, the surrogate objection function for partial worker participation in FL
is $\tilde{f}(x) := \frac{1}{| \mathcal{M}_t |} \sum_{i \in \mathcal{M}_t} f_i(x)$.
For uniformly independent sampling, the surrogate object function approximately equals to $f(x):= \frac{1}{M} \sum_{i=1}^{M} f_i(x)$ in expectation, i.e., $\mathbb{E}[\tilde{f}(x)] = f(x)$.
However, the surrogate object function $\tilde{f}(x)$ may deviate from $f(x)$ with arbitrarily independent sampling. 
More specifically, for uniformly independent sampling, the bound of $\| \nabla f(\x_t) -  \tilde{f}(\x_t) \|^2$ is independent of $\sigma_G$ ($A_3$ term in ~\ref{appdx_uniform}).
On the other hand, for arbitrarily independent sampling, $\| \nabla f(\x_t) -  \tilde{f}(\x_t) \|^2 \leq \mathcal{O}(\sigma_G^2)$ ($A_3$ term in ~\ref{appdx_arbipdx_uniform}).
This deviation may happen in every communication round, so it is non-vanishing even with infinity communication rounds.
As a result, such deviation is originated from the arbitrary sampling coupling with non-i.i.d. datasets.
In other words, it is irrelevant to the optimization hyper-parameters such as the learning rate, local steps and others, which is different from the objective inconsistency due to different local steps shown in \citet{wang2020fednova}.
When we set $\tau = 0$ and $K_{t,i} = K, \forall t, i$, AFA-CD generalizes FedAvg. 
In such sense, the convergence error also exists in currently synchronous FL algorithms with such arbitrarily independent sampling and non-i.i.d. dataset.
Moreover, this sampling process coupling with non-i.i.d. dataset not only results in convergence issue but also potentially induces a new source of bias/unfairness~\cite{mohri2019agnostic,li2019fair}.
So how to model the practical worker participation process in practice and in turn tackle these potential bias are worth further exploration.

\textbf{Variance Reduction:}
If we view the derivation between local loss function and global loss function as global variance, i.e., $\| \nabla f_i(\x_t) - \nabla f(\x_t) \|^2 \leq \sigma_{G}^2$, $\forall i \in [m], \forall t$ as shown in Assumption~\ref{a_variance}, the AFA-CS algorithm is indeed a variance reduction (VR) method, akin to SAG~\cite{le2012stochastic,schmidt2017minimizing}.
SAG maintains an estimate stochastic gradient $v_i, i \in [n]$ for each data point ($n$ is the size of the dataset).
In each iteration, SAG only samples one data point (say, $j$) and update the stochastic gradient on latest model ($v_j = \nabla f_j(x_t)$) stored in the memory space, but then use the average of all stored stochastic gradients as the estimate of a full gradient to update the model ($x_{t+1} = x_t - \eta_t g_t, g_t = \frac{1}{n} \sum_{i=1}^{n} v_i$).
In such way, SAG is able to have a faster convergence rate by reducing the local variance due to the stochastic gradient. 
AFA-CS algorithm performs in the similar way.
The server in the AFA-CS algorithm maintains a parameter for each worker as an estimate of the returned stochastic gradient.
In each communication round, the server only receives $m$ updates in the memory space but updates the global model by the average of all the $M$ parameters.
As a result, not only can it diminish the convergence error derived from the non-i.i.d. dataset and general worker information arrival processes (arbitrarily independent sampling), but also accelerate the convergence rate with a linear speedup factor $M$.
Previous works have applied VR methods in FL, notably SCAFFOLD~\cite{karimireddy2020scaffold} and FedSVRG~\cite{konevcny2016fedsvrg}.
The key difference is that we apply the VR on the server side to control the global variance while previous works focus on the worker side in order to tackle the model drift due to local update steps.
Applying VR methods on server and worker side are orthogonal, and thus can be used simultaneously.
We believe other variance reduction methods could be similarly extended on the server side in a similar fashion as what we do in AFA-CD.
This will be left for future research.

\section{Experiments} \label{exp}
In this section, we provide the detailed experiment settings as well as extra experimental results that cannot fit in the page limit of the main paper.

\subsection{Model and Datasets}
We run three models on three different datasets, including i) multinomial logistic regression (LR) on manually partitioned non-i.i.d. MNIST, ii) convolutional neural network (CNN) for manually partitioned non-i.i.d. CIFAR-10, and iii) recurrent neural network (RNN) on natural non-i.i.d. Shakespeare datasets.
These dataset are curated from previous FL papers~\cite{mcmahan2016communication,li2018fedprox} and are now widely used as benchmarks in FL studies~\cite{li2019convergence,yang2021achieving}.

For MNIST and CIFAR-10, each dataset has ten classes of images.
To impose statistical heterogeneity, we split the data based on the classes ($p$) of images each worker contains.
We distribute the data to $M = 10 (\text{or } 100)$ workers such that each worker contains only certain classes with the same number of training/test samples.
Specifically, each worker randomly chooses $p$ classes of labels and evenly samples training/testing data points only with these $p$ classes labels from the overall dataset without replacement.
For example,
for $p = 2$, each worker only has training/testing samples with two classes, which causes heterogeneity among different workers.
For $p = 10$, each worker has samples with ten classes, which is nearly i.i.d. case.
In this way, we can use the classes ($p$) in worker's local dataset to represent the non-i.i.d. degree qualitatively.

The Shakespeare dataset is built from {\em The Complete Works of William Shakespeare}~\cite{mcmahan2016communication}.
We use a two-layer LSTM classifier containing 100 hidden units with an embedding layer. 
The learning task is the next-character prediction, and there are 80 classes of characters in total. 
The model takes as input a sequence of 80 characters, embeds each of the characters into a learned 8-dimensional space and outputs one character per training sample after two LSTM layers and a densely-connected layer.
The dataset and model are taken from LEAF~\cite{li2018fedprox}.

For MNIST and CIFAR-10, we use global learning rate $\eta=1.0$ and local learning rate $\eta_L=0.1$.
For MNIST, the batch size is $64$ and the total communication round is $150$.
For CIFAR-10, the batch size is $500$ and the total communication round is $10000$.
For the Shakespeare dataset, the global learning rate is $\eta=50$, the local learning rate is $\eta_L=0.8$, batch size is $b=10$, and the total communication round is $300$.
In the following tables and figure captions, we use ``$m/M$'' to denote that, in each communication round, we randomly choose $m$ workers from $[M]$ to participate in the training.

We emphasis the fact that the goal here is to demonstrate our algorithms give a performance similar to other algorithms.
Note that the baseline FedAvg algorithm is server-centric and under a highly coordinated environment (synchrony, uniformly worker sampling, identical local update number, etc.).
In comparison, our AFL algorithms work in a far more chaotic environment with asynchrony, arbitrary worker's arrival process, heterogeneous local steps, non-i.i.d. data, etc.
The mere fact that AFL algorithms in a highly chaotic environment are {\em still} able to provide {\em comparable performance} to the highly coordinated FedAvg (as shown in our experiments) is surprising. 
In other words, our goal is to show that AFL algorithms can perform {\em nearly as well} in a much more chaotic environment, where traditional FL algorithms are not applicable.
Furthermore, we use communication round instead of wall-clock time to measure the model performance.
With system parameters, the wall-clock time could be easily measured. 
For example, by using random exponential time model $\lambda=1$ to simulate the stragglers~\cite{charles2021large}, 
for LR/MNIST with $p=1$, the AFL/FedAvg ratio of communication rounds to achieve $85\%$ accuracy is $61/46$, while the ratio of wall-clock time is $1/2.6$, i.e., AFL only takes $1/2.6$-fraction of  FedAvg's wall-clock time.

We study the asynchrony and heterogeneity factors in AFL, including asynchrony, heterogeneous computing, worker's arrival process, and data heterogeneity.
To simulate the asynchrony, each participated worker choose one global model from the last recent five models instead of only using the latest global model for synchronous case.
To mimic the heterogeneous computing, we simulate two cases: constant and dynamic local steps.
For constant local steps, each participated worker performs a fixed $c$ local update steps.
In contrast, each worker takes a random local update steps uniformly sampled from $[1, 2 \times c]$ for dynamic local steps.
To emulate the effect of various worker's arrival processes, we use uniform sampling without replacement to simulate the uniformly distributed worker information arrivals, and we use biased sampling with probability $[0.19, 0.19, 0.1, 0.1, 0.1, 0.1, 0.1, 0.1, 0.01, 0.01]$ without replacement for total $10$ workers to investigate potential biases with general worker information arrival processes.
To study the data heterogeneity, we use the value $p$ as a proxy to represent the non-i.i.d. degree for MNIST and CIFAR-10.

\begin{table} [!htb]
\begin{center}
\caption{CNN Architecture for CIFAR-10.}
\begin{tabular}{c c} 
	\hline
	Layer Type & Size \\
	\hline
	Convolution + ReLu & $5 \times 5 \times 32$ \\
	Max Pooling & $2 \times 2$\\
	Convolution + ReLu & $5 \times 5 \times 64$ \\
	Max Pooling & $2 \times 2$\\
	Fully Connected + ReLU & $1024 \times 512$\\
	Fully Connected + ReLU & $512 \times 128$ \\
	Fully Connected & $128 \times 10$ \\
	\hline
\end{tabular}
\label{tab: cnn}
\end{center}
\end{table}

\subsection{Further experimental results}

\begin{table}[!htb]
	\centering
	\addtolength{\tabcolsep}{-2.55pt}
	\setlength\extrarowheight{1.5pt}
	\caption{Test Accuracy for comparison of asynchrony and local steps.}
	\label{tab:AFL}
	\begin{tabular}{|c|c|c|c|c|c|c|c|c|c|}
		\hline
		\multirow{2}[4]{*}{\makecell {Models/ \\Dataset}} & \multirow{2}[4]{*}{ \makecell {Non-i.i.d. \\ index (p) }} & \multicolumn{1}{c|}{\multirow{2}[4]{*}{\makecell {Worker \\ number } }} & \multicolumn{1}{c|}{\multirow{2}[4]{*}{ \makecell {Local \\ steps }}} & \multicolumn{2}{c|}{Synchrony} & \multicolumn{2}{c|}{Asynchrony} \\
		\cline{5-8}          &       &       & &\makecell{Constant \\ steps} & \makecell{Dynamic \\ steps} & \makecell{Constant \\ Steps} & \makecell{Dynamic \\ Steps} \\
		\hline
		\hline
		\multirow{12}{*}{\makecell{LR/ \\ MNIST}} & $p=1$ & 5/10 & 5 & 0.8916 & 0.8915 & 0.8888 & 0.8868 \\
		& $p=2$ & 5/10 & 5 & 0.8906 & 0.8981 & 0.8901 & 0.8931  \\
		& $p=5$ & 5/10 & 5 & 0.9072 & 0.9075 & 0.9059 & 0.9048  \\
		& $p=10$ & 5/10 & 5 & 0.9114 & 0.9111 & 0.9129 & 0.9143 \\
		\cline{2-8}
		 & $p=1$ & 5/10 & 10 & 0.8743 & 0.8786 & 0.8701 & 0.8734 \\
		 & $p=2$ & 5/10 & 10 & 0.8687 & 0.8813 & 0.8661 & 0.8819 \\
		 & $p=5$ & 5/10 & 10 & 0.9016 & 0.9050 & 0.9034 & 0.9065 \\
		 & $p=10$ & 5/10 & 10 & 0.9124 & 0.9135 & 0.9112 & 0.9111 \\
		\cline{2-8}
		 & $p=1$ & 20/100 & 5 & 0.8898 & 0.8973 & 0.8909 & 0.8938 \\
		 & $p=2$ & 20/100 & 5 & 0.8968 & 0.9007 & 0.8955 & 0.9000 \\
		 & $p=5$ & 20/100 & 5 & 0.9088 & 0.9088 & 0.9097 & 0.9078 \\
		 & $p=10$ & 20/100 & 5 & 0.9111 & 0.9106 & 0.9126 & 0.9125 \\
		\hline
		\hline
		\multirow{4}{*}{\makecell{CNN/ \\ CIFAR-10}} & $p=1$ & 5/10 & 5 & 0.7474 & 0.7606 & 0.7319 & 0.7350 \\
		& $p=2$ & 5/10 & 5 & 0.7677 & 0.7944 & 0.7662 & 0.777 \\
		& $p=5$ & 5/10 & 5 & 0.7981 & 0.802 & 0.8065 & 0.799 \\
		& $p=10$ & 5/10 & 5 & 0.8081 & 0.8072 & 0.8065 & 0.8119 \\
		\hline
		\hline
		\makecell{RNN/ \\
		 Shakespeare} & - & 72/143 & 50 & 0.4683 & 0.4831 & 0.4606 & 0.4687 \\
		\hline
	\end{tabular}%
\vspace{-.05in}
\end{table}

\textbf{Effect of asynchrony, local update steps, and non-i.i.d. level.}
In table~\ref{tab:AFL}, we examine three factors by comparing the top-1 test accuracy: synchrony versus asynchrony, constant steps versus dynamic steps and different levels of non-i.i.d. dataset.
The worker sampling process is uniformly random sampling to simulate the uniformly distributed worker information arrivals.
The baseline is synchrony with constant steps.
When using asynchrony or/and dynamic local steps, the top-1 test accuracy shows no obvious differences.
This observation can be observed in all these three tasks.
Asynchrony and dynamic local update steps enable each worker to participate flexibly and loosen the coupling between workers and the server.
As a result, asynchrony and dynamic local steps introduce extra heterogeneity factors, but the performance of the model is as good as that of the synchronous approaches with  constant local steps. 
Instead, the data heterogeneity is an important factor for the model performance.
As the non-i.i.d. level increases (smaller $p$ value), the top-1 test accuracy decreases. 

Next, we study convergence speed of the test accuracy for the model training under different settings.
Figure~\ref{appdx:lr_acc} illustrates the test accuracy for LR on MNIST with different non-i.i.d. levels. 
We can see that asynchrony and dynamic local steps result in zigzagging convergence curves, but the final accuracy results have negligible differences.
The zigzagging phenomenon is more dramatic as the non-i.i.d. level gets higher.
Interestingly, from Figure~\ref{appdx:cnn_acc} and Figure~\ref{appdx:lstm_acc}, we can see that for less non-i.i.d. settings such as $p=10$ and $p=5$, the curves of all algorithms are almost identical. 
Specifically, in Figure~\ref{appdx:lstm_acc}, the test accuracy curves of the LSTM model oscillates under asynchrony and dynamic local steps.
Another observation is that it takes more rounds to converge as the non-i.i.d. level of the datasets increases.
This trend can be clearly observed in Figure~\ref{appdx:cnn_acc}.


\begin{figure*}[!ht]
\centering
	\begin{subfigure}[b]{0.4\textwidth}
        {\includegraphics[width=1.\textwidth]{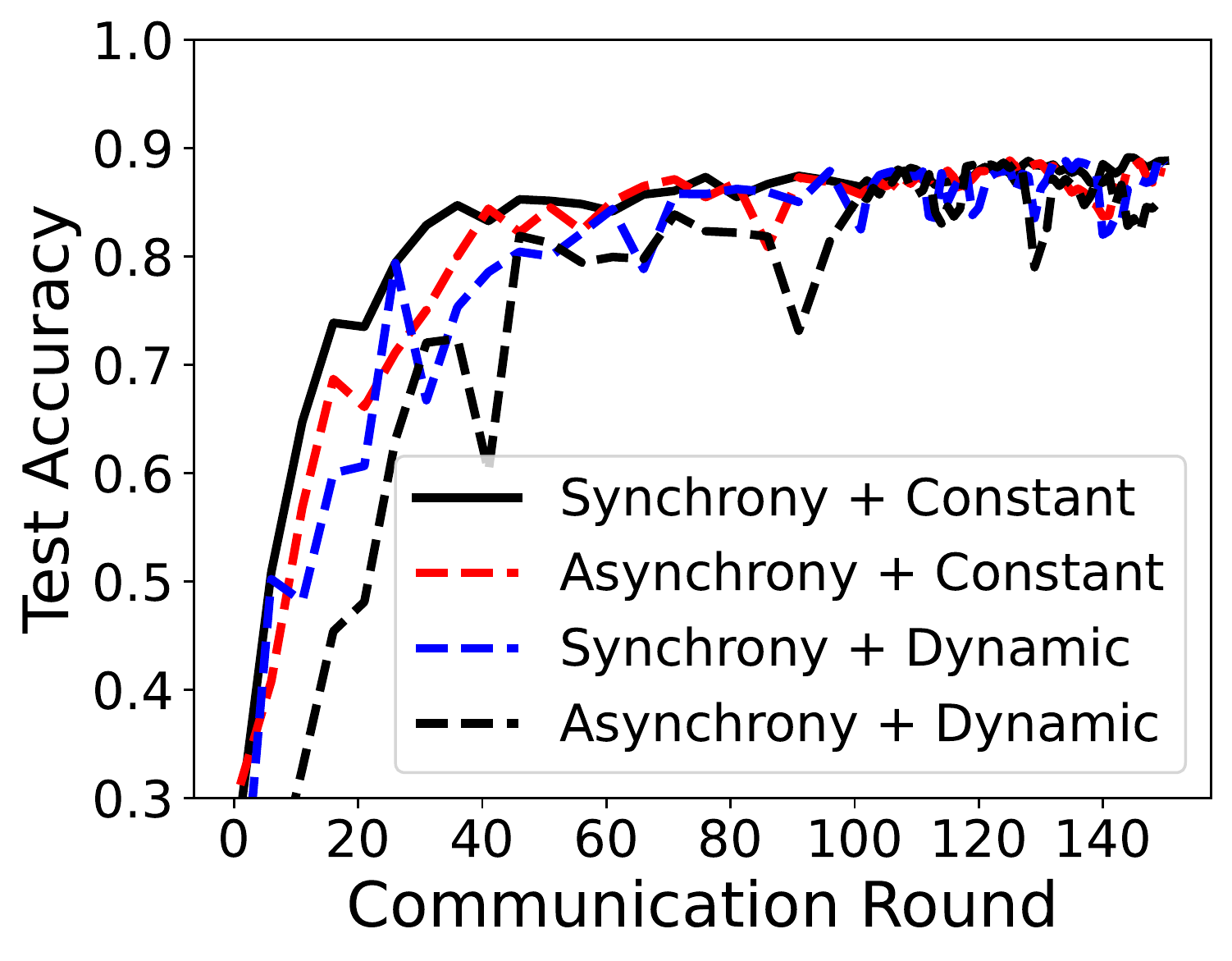}} 
        \caption{$p=1$.}
        \label{mnist_1}
    \end{subfigure}
    \qquad
    \begin{subfigure}[b]{0.4\textwidth}
        {\includegraphics[width=1.\textwidth]{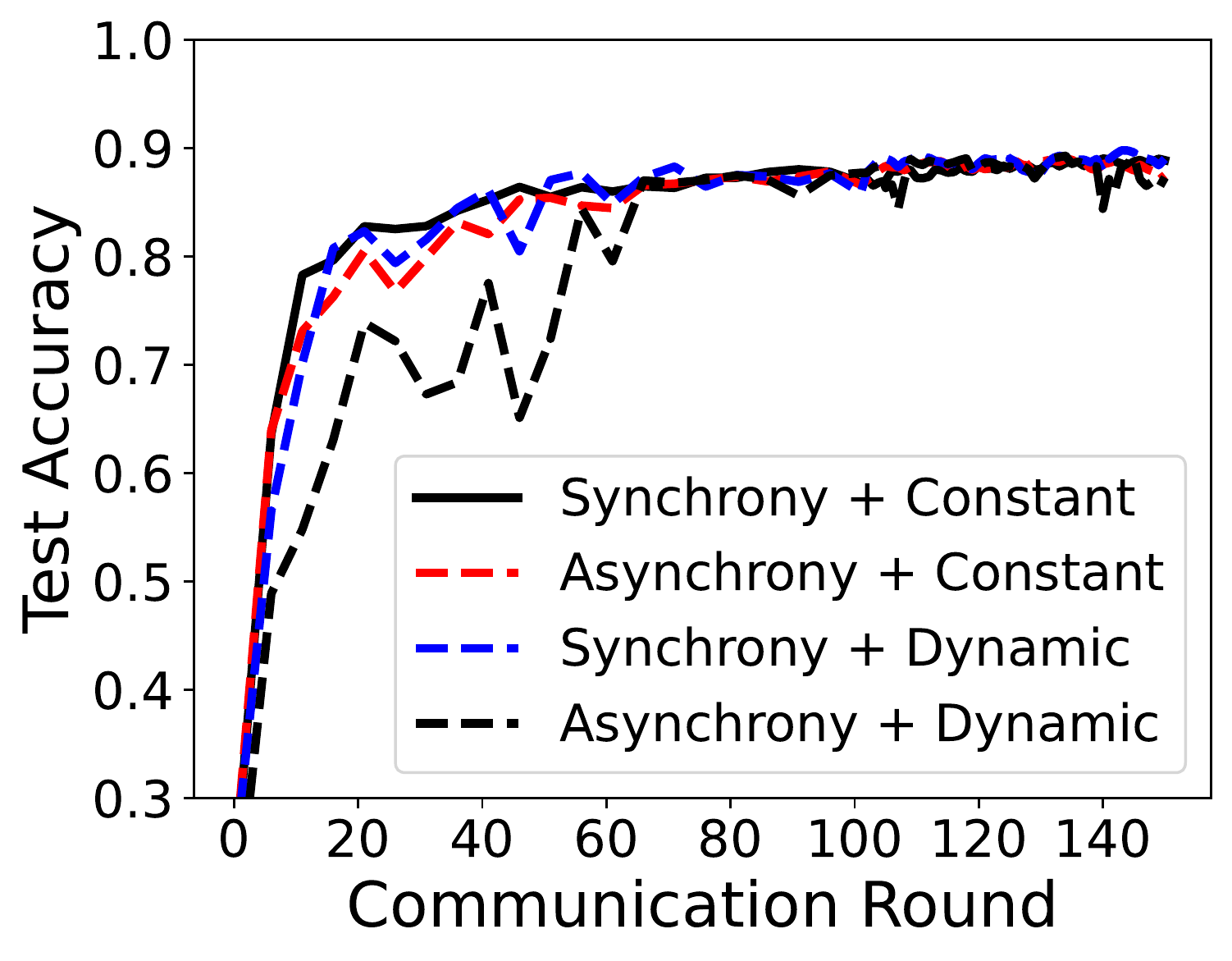}} 
        \caption{$p=2$.}
        \label{mnist_2}
    \end{subfigure} \\
    \begin{subfigure}[b]{0.4\textwidth}
        {\includegraphics[width=1.\textwidth]{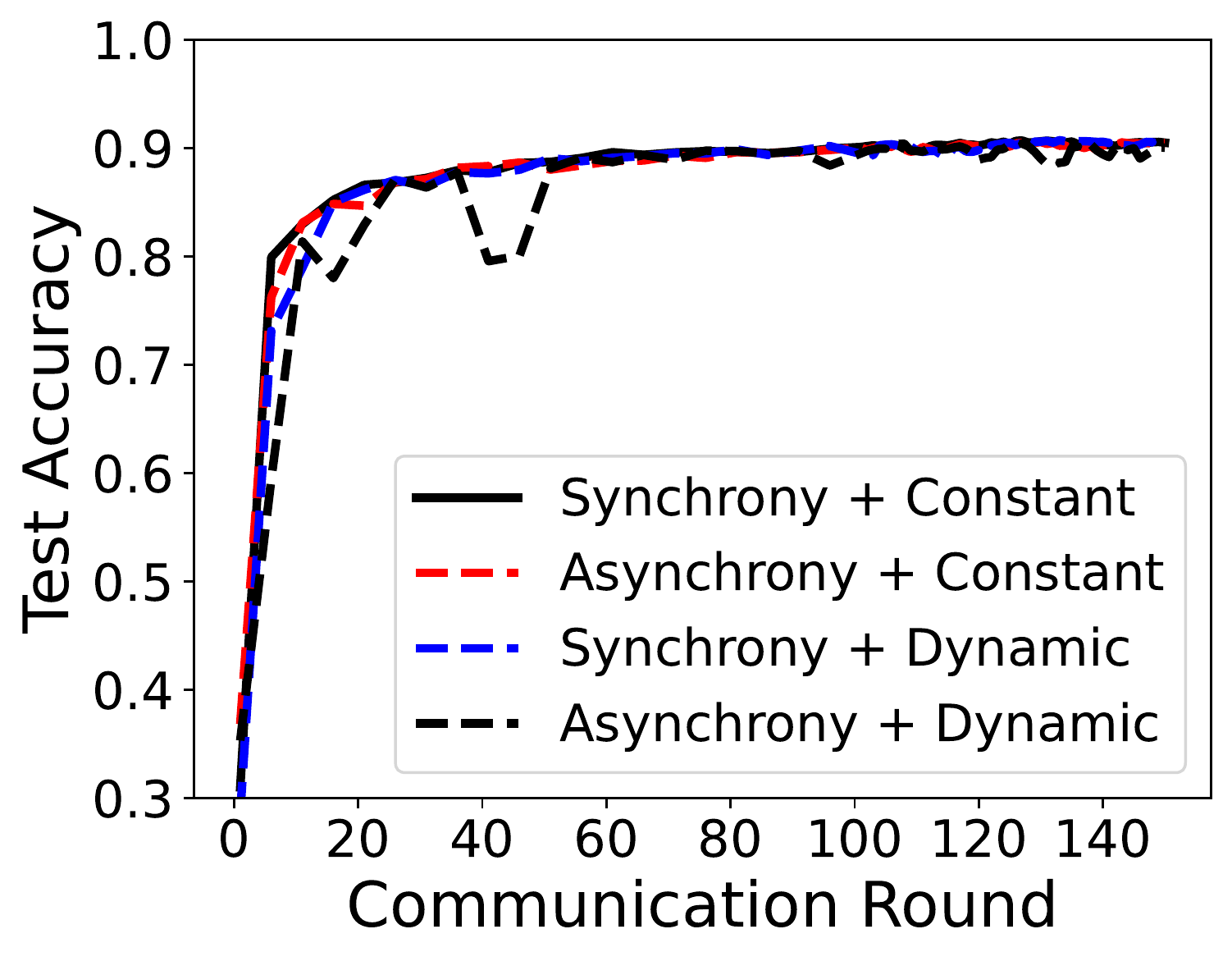}} 
        \caption{$p=5$.}
        \label{mnist_5}
    \end{subfigure}
    \qquad
    \begin{subfigure}[b]{0.4\textwidth}
        {\includegraphics[width=1.\textwidth]{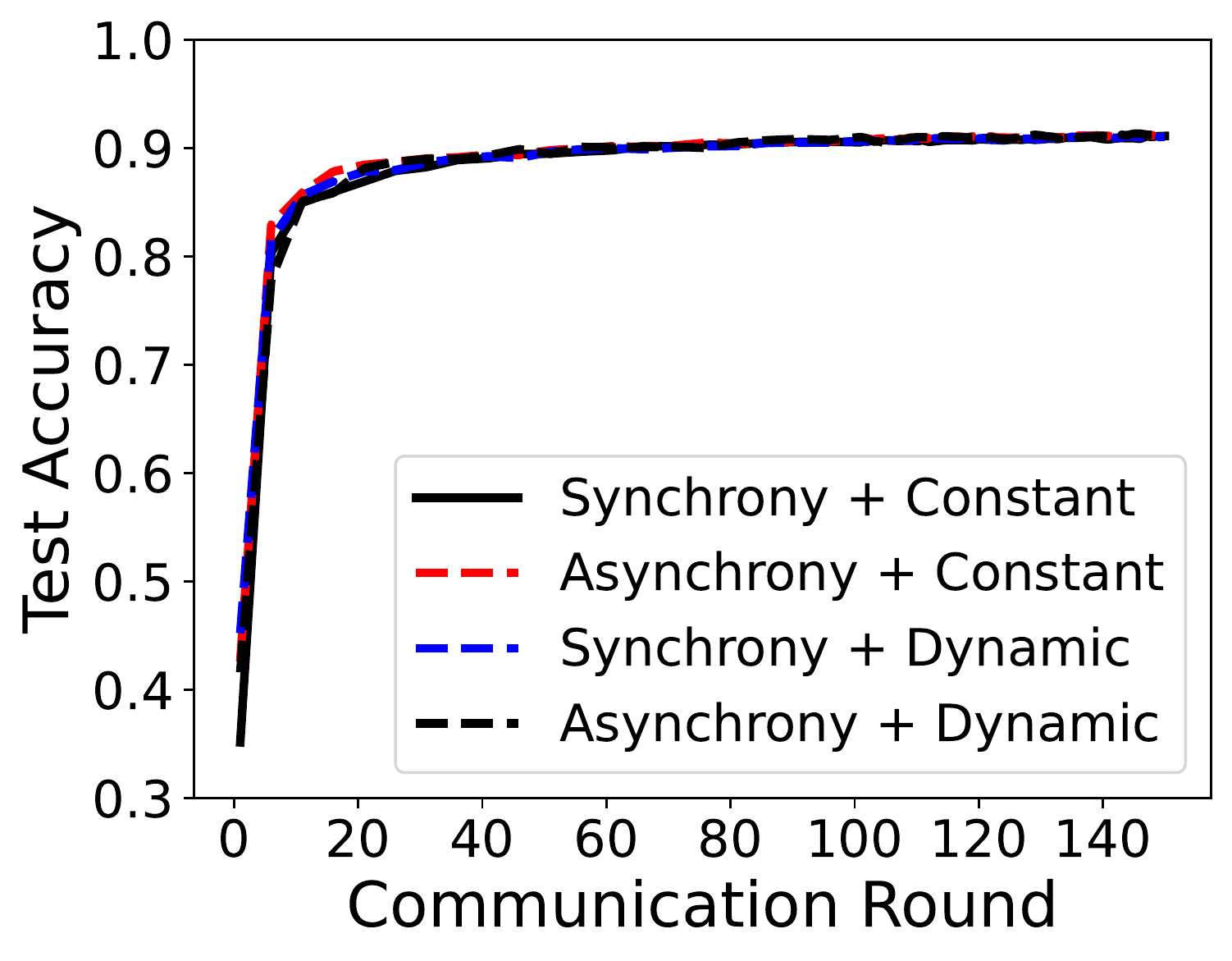}} 
        \caption{$p=10$.}
        \label{mnist_10}
    \end{subfigure}
\caption{Test accuracy for LR on MNIST with worker number $5/10$, local steps $5$.}
\label{appdx:lr_acc}
\vspace{-.1in}
\end{figure*}%

\begin{figure*}[!ht]
\centering
	\begin{subfigure}[b]{0.4\textwidth}
        {\includegraphics[width=1.\textwidth]{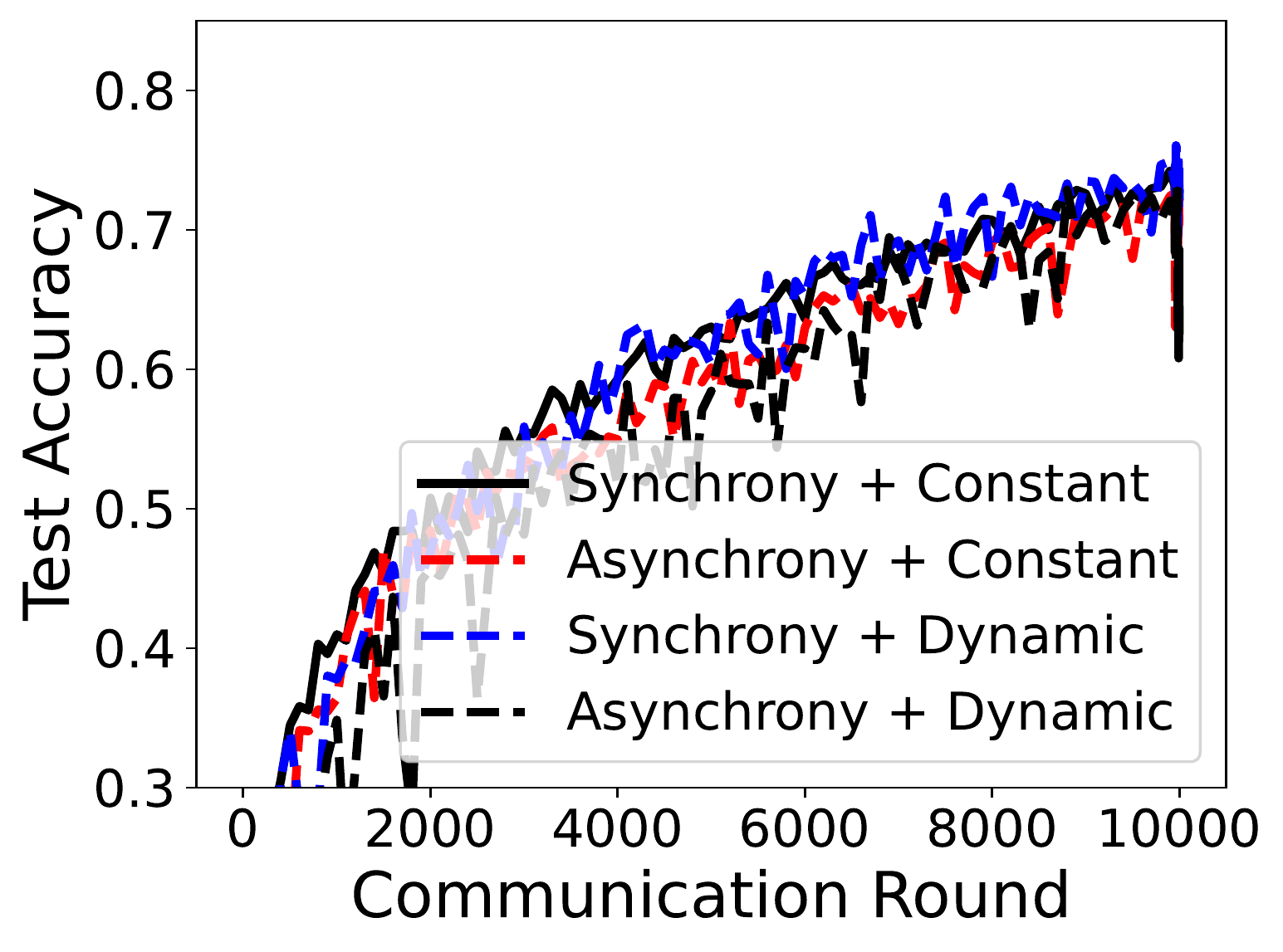}} 
        \caption{$p=1$.}
        \label{cnn_1}
    \end{subfigure}
    \qquad
	\begin{subfigure}[b]{0.4\textwidth}
        {\includegraphics[width=1.\textwidth]{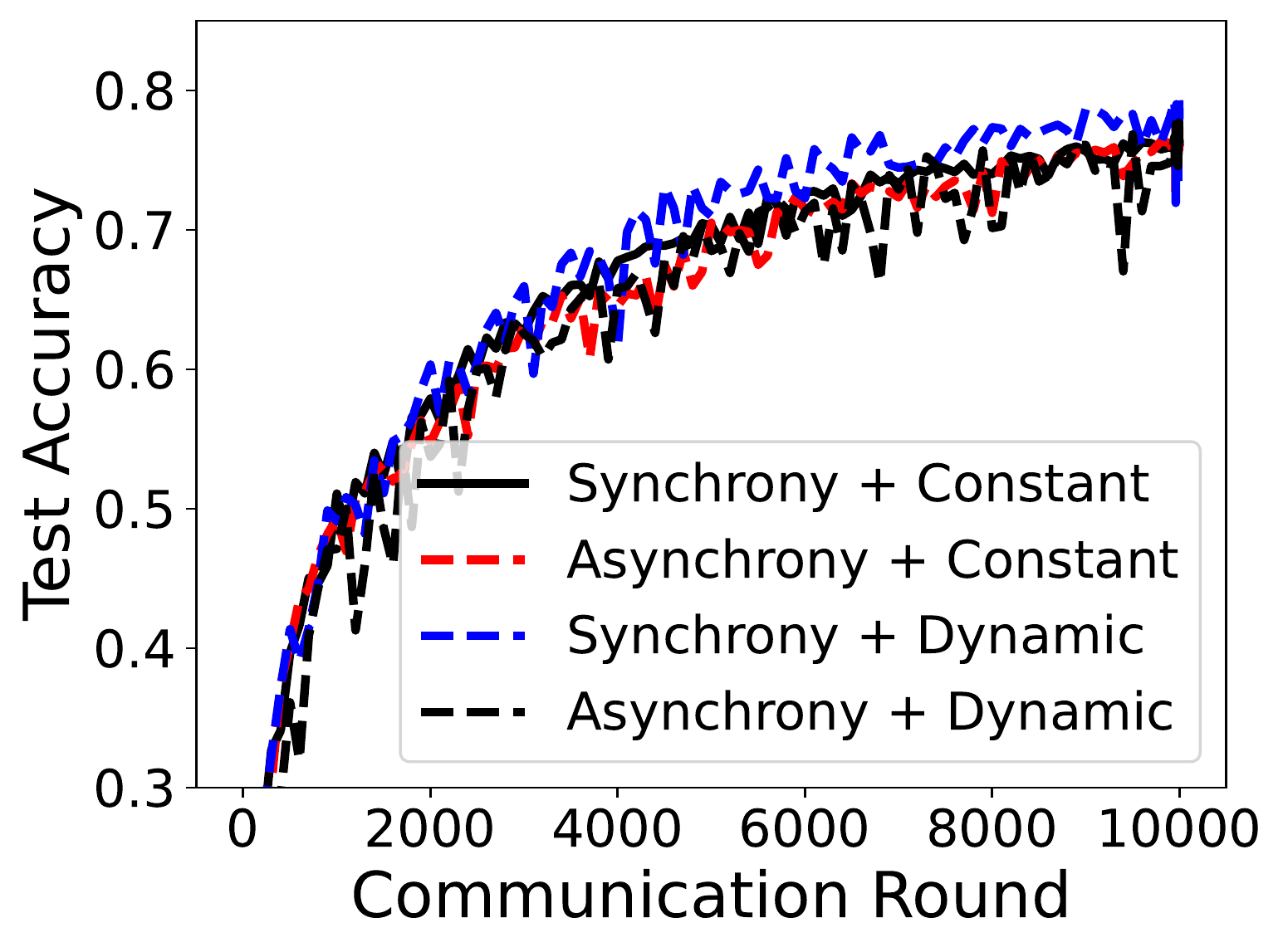}} 
        \caption{$p=2$.}
        \label{cnn_2}
    \end{subfigure} \\
    \begin{subfigure}[b]{0.4\textwidth}
        {\includegraphics[width=1.\textwidth]{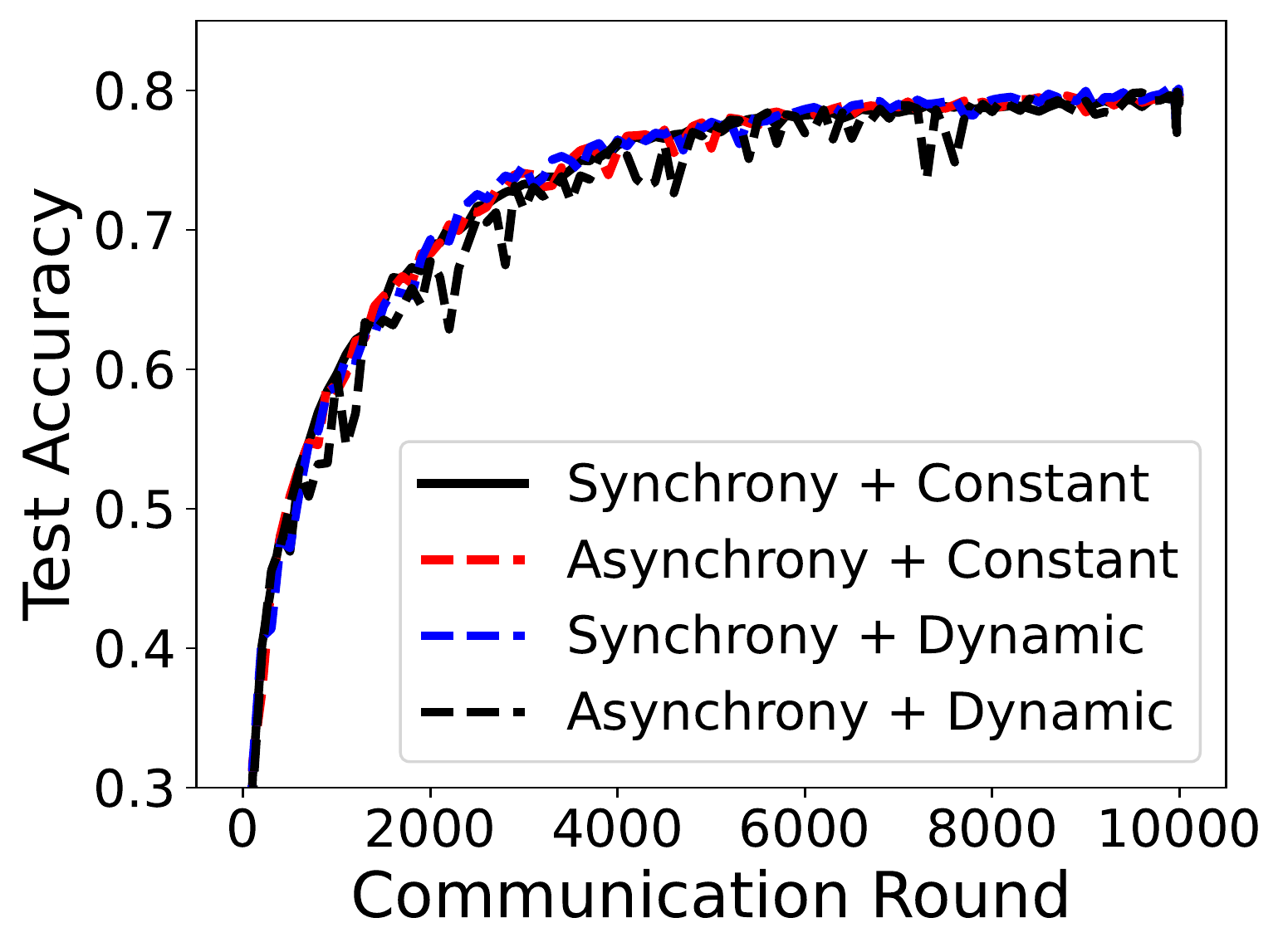}} 
        \caption{$p=5$.}
        \label{cnn_5}
    \end{subfigure}
    \qquad
    \begin{subfigure}[b]{0.4\textwidth}
        {\includegraphics[width=1.\textwidth]{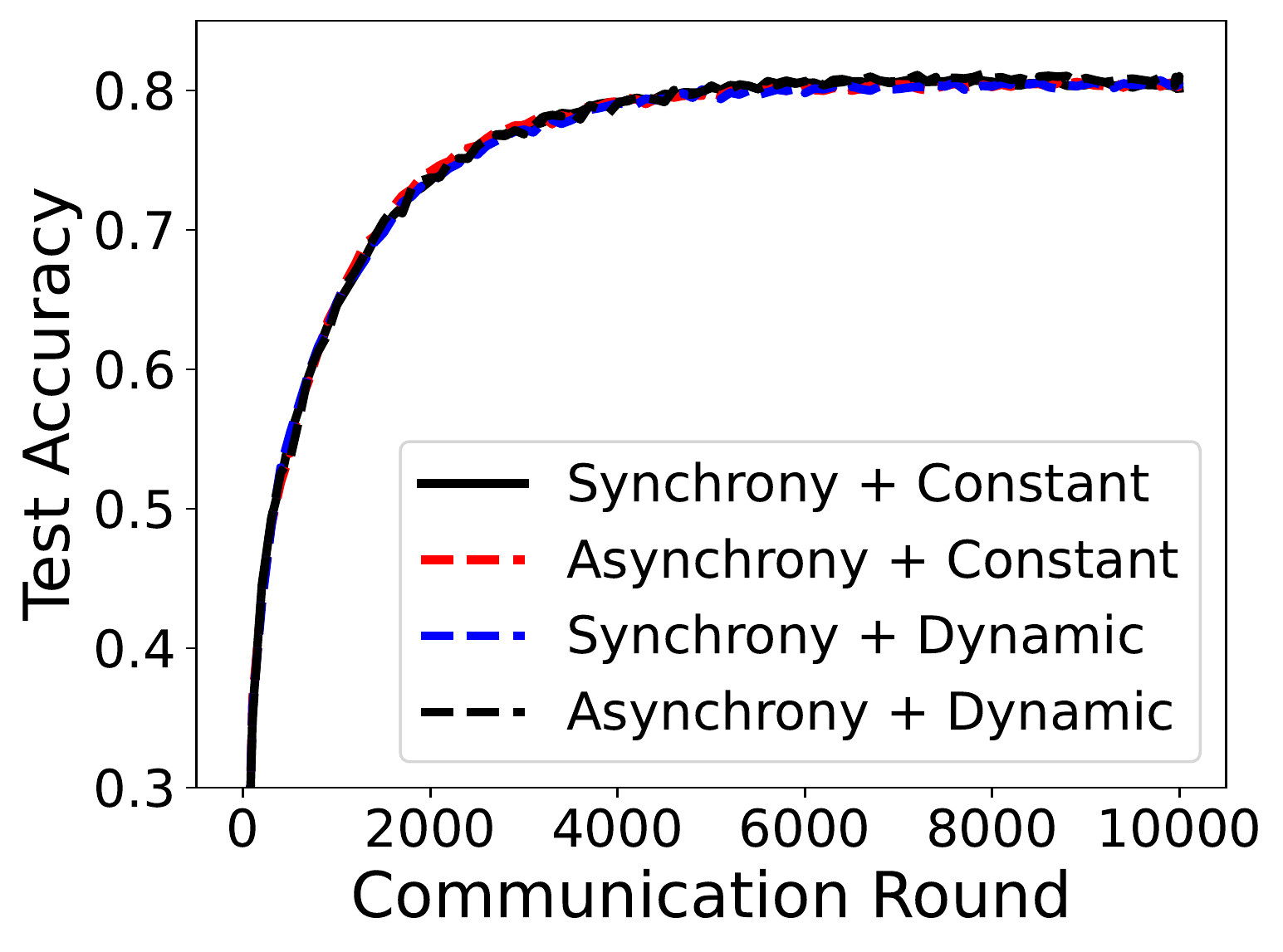}} 
        \caption{$p=10$.}
        \label{cnn_10}
    \end{subfigure}
    \qquad
\caption{Test accuracy for CNN on CIFAR-10 with worker number $5/10$, local steps $5$.}
\label{appdx:cnn_acc}
\vspace{-.1in}
\end{figure*}%

\begin{figure*}[!ht]
\centering
\begin{subfigure}[b]{0.4\textwidth}
\includegraphics[width=1.\textwidth]{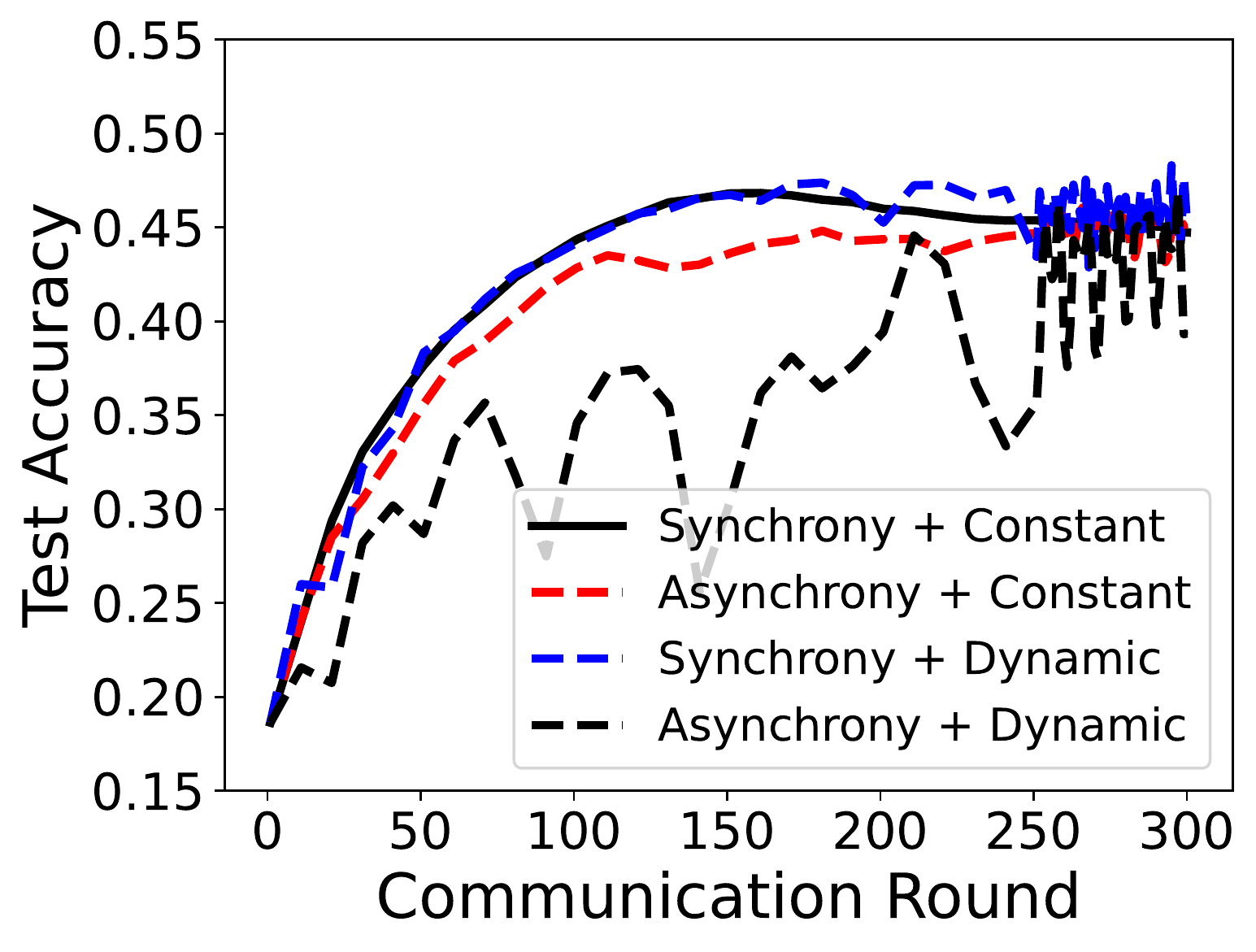}
\end{subfigure}
\caption{Test accuracy for LSTM on Shakespeare with worker number $72/143$, local steps $50$.}
\label{appdx:lstm_acc}
\vspace{-.1in}
\end{figure*}

\begin{table}[!ht]
	\centering
	\addtolength{\tabcolsep}{-2.55pt}
	\setlength\extrarowheight{1.5pt}
	\caption{Test Accuracy of FedProx and SCAFFOLD.}
	\label{tab:AFL+}
	\begin{tabular}{|c|c|c|c|c|c|c|c|c|c|}
		\hline
		\multirow{2}[1]{*}{\makecell {Models/ \\Dataset}} & \multirow{2}[1]{*}{ \makecell {Non-i.i.d. \\ index (p) }} & \multicolumn{1}{c|}{\multirow{2}[1]{*}{\makecell {Worker \\ number } }} & \multicolumn{1}{c|}{\multirow{2}[1]{*}{ \makecell {Local \\ steps }}} & \multirow{2}[1]{*}{\makecell {FedProx}} & \multirow{2}[1]{*}{\makecell{SCAFFOLD}} & \multirow{2}[1]{*}{ \makecell{AFL + \\ FedProx }} &   \multirow{2}[1]{*}{ \makecell {AFL + \\ SCAFFOLD }}\\
		 &       &       & & & &  &  \\
		\hline
		\hline
		\multirow{8}{*}{\makecell{LR/ \\ MNIST}} & $p=1$ & 5/10 & 5 & 0.8893 & 0.8928 & 0.8775 & 0.8946 \\
		& $p=2$ & 5/10 & 5 & 0.8868 & 0.8970 & 0.8832 & 0.8954   \\
		& $p=5$ & 5/10 & 5 & 0.9036 & 0.9032 & 0.9004 & 0.9019   \\
		& $p=10$ & 5/10 & 5 & 0.9075 & 0.9057 & 0.9054 & 0.9022  \\
		\cline{2-8}
		& $p=1$ & 5/10 & 10 & 0.8752 & 0.8789 & 0.8669 & 0.8838 \\
		& $p=2$ & 5/10 & 10 & 0.8685 & 0.8967 & 0.8789 & 0.8978 \\
		& $p=5$ & 5/10 & 10 & 0.9019 & 0.9047 & 0.8998 & 0.9029 \\
		& $p=10$ & 5/10 & 10 & 0.9072 & 0.9071 & 0.9052 & 0.9038 \\
		\hline
		\hline
		\multirow{4}{*}{\makecell{CNN/ \\ CIFAR-10}} & $p=1$ & 5/10 & 5 & 0.7488 & 0.1641 & 0.7415 & 0.3935  \\
		& $p=2$ & 5/10 & 5 & 0.7728 & 0.6315 & 0.7890 & 0.6971  \\
		& $p=5$ & 5/10 & 5 & 0.7931 & 0.7828 & 0.8031 & 0.7884  \\
		& $p=10$ & 5/10 & 5 & 0.8150 & 0.8083 & 0.8143 & 0.8051  \\
		\hline
		\hline
		\makecell{RNN/ \\ Shakespeare} & - & 72/143 & 50 & 0.4690 & 0.4794 & 0.4550 & 0.4515  \\
		\hline
	\end{tabular}%
\vspace{-.05in}
\end{table}

\textbf{Utilizing FedProx and SCAFFOLD as the optimizer on the worker-side.}
Here, we choose FedProx and SCAFFOLD as two classes of algorithms in existing FL algorithms.
FedProx represents these algorithms that modifies the local objective function.
Other algorithms belonging to this category includes FedPD~\cite{zhang2020fedpd} and FedDyn~\cite{acar2021feddyn}.
In such algorithms, no extra information exchange between worker and server is needed.
On the other hand, SCAFFOLD represents VR-based (variance reduction) algorithms.
It needs an extra control variate to perform the ``variance reduction'' step, so extra parameters are required in each communication round.
Other algorithms in this class includes FedSVRG~\cite{konevcny2016fedsvrg}.

In Table~\ref{tab:AFL+}, we show the effectiveness of utilizing existing FL algorithms, FedProx and SCAFFOLD, in the AFL framework.
For FedProx and SCAFFOLD, we examine synchrony and constant local steps settings.
When incorporating these two advanced FL algorithms in the AFL framework, we study the effects of asynchrony and dynamic local steps.
We set $\mu=0.1$ as default in FedProx algorithm.
We can see from Table~\ref{tab:AFL+} that FedProx performs as good as FedAvg does (compare with the results in Table~\ref{tab:AFL}).
Also, there is no performance degradation in AFL framework by utilizing FedProx as the worker's optimizer.
However, while SCAFFOLD performs well for LR on MNIST, it dose not work well for CNN on CIFAR-10, especially in cases with higher non-i.i.d. levels.
One possible reason is that the control variates can become stale in partial worker participation and in turn degrade the performance.
Previous work also showed similar results~\cite{acar2021feddyn,reddi2020adaptive}.
If we view the SCAFFOLD ( in synchrony and constant steps setting) as the baseline, no obvious performance degradation happens under AFL with SCAFFOLD being used as the worker's optimizer.

\begin{figure*}[!ht]
\centering
	\begin{subfigure}[b]{0.4\textwidth}
        {\includegraphics[width=1.\textwidth]{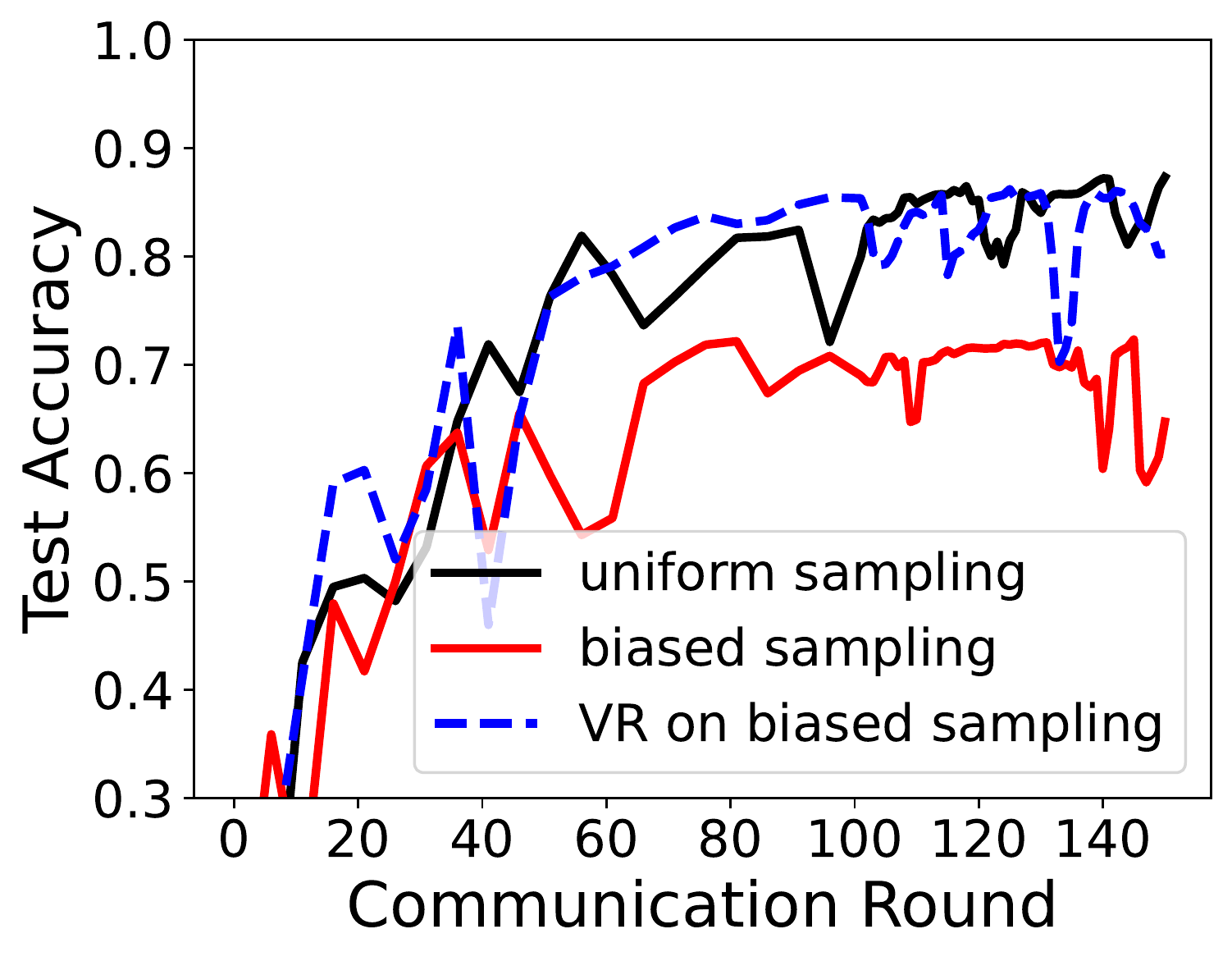}}
        \caption{$p=1$.}
        \label{lr_bias_1}
    \end{subfigure}
    \qquad
	\begin{subfigure}[b]{0.4\textwidth}
        {\includegraphics[width=1.\textwidth]{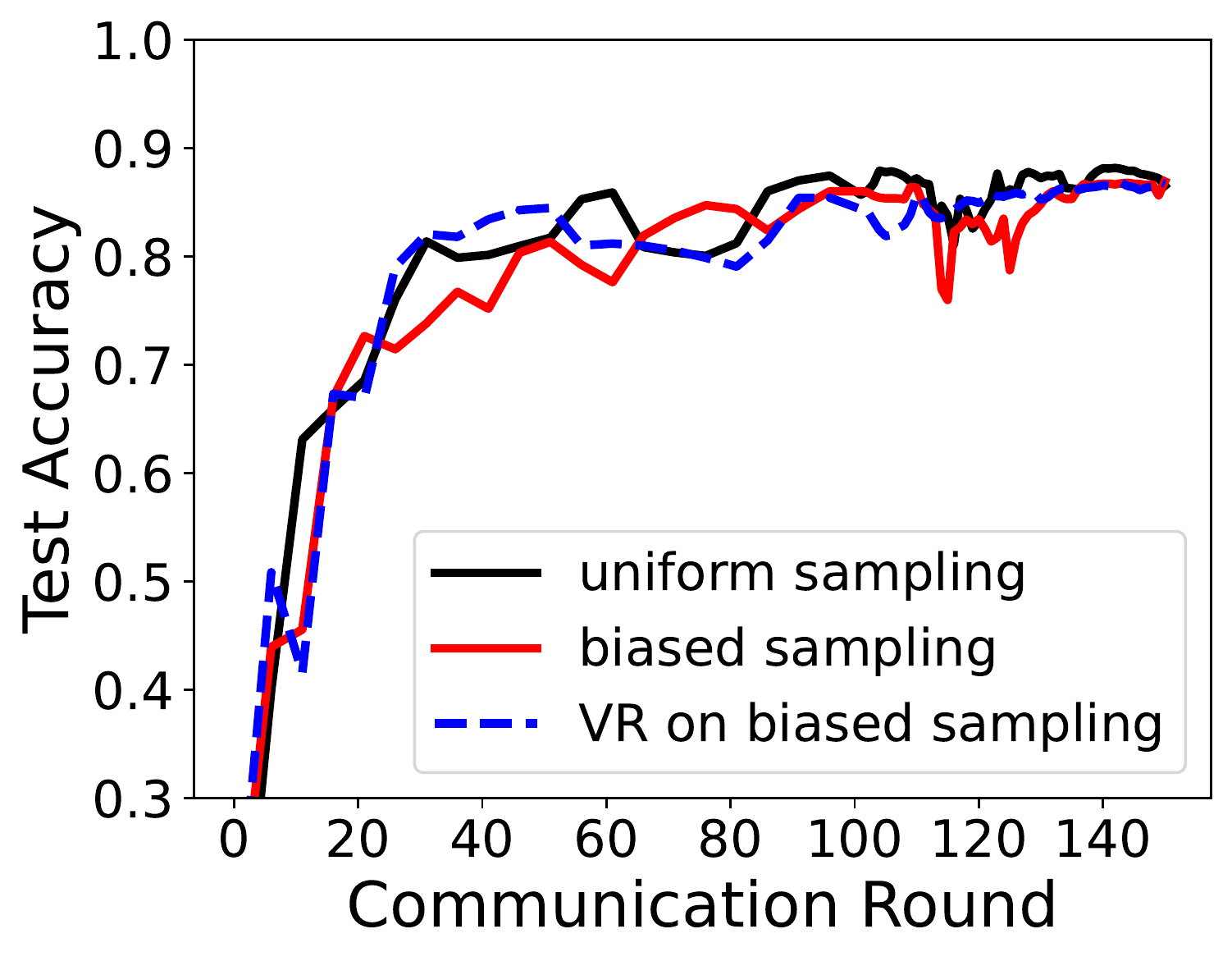}}
        \caption{$p=2$.}
        \label{lr_bias_2}
    \end{subfigure} \\
    \begin{subfigure}[b]{0.4\textwidth}
        {\includegraphics[width=1.\textwidth]{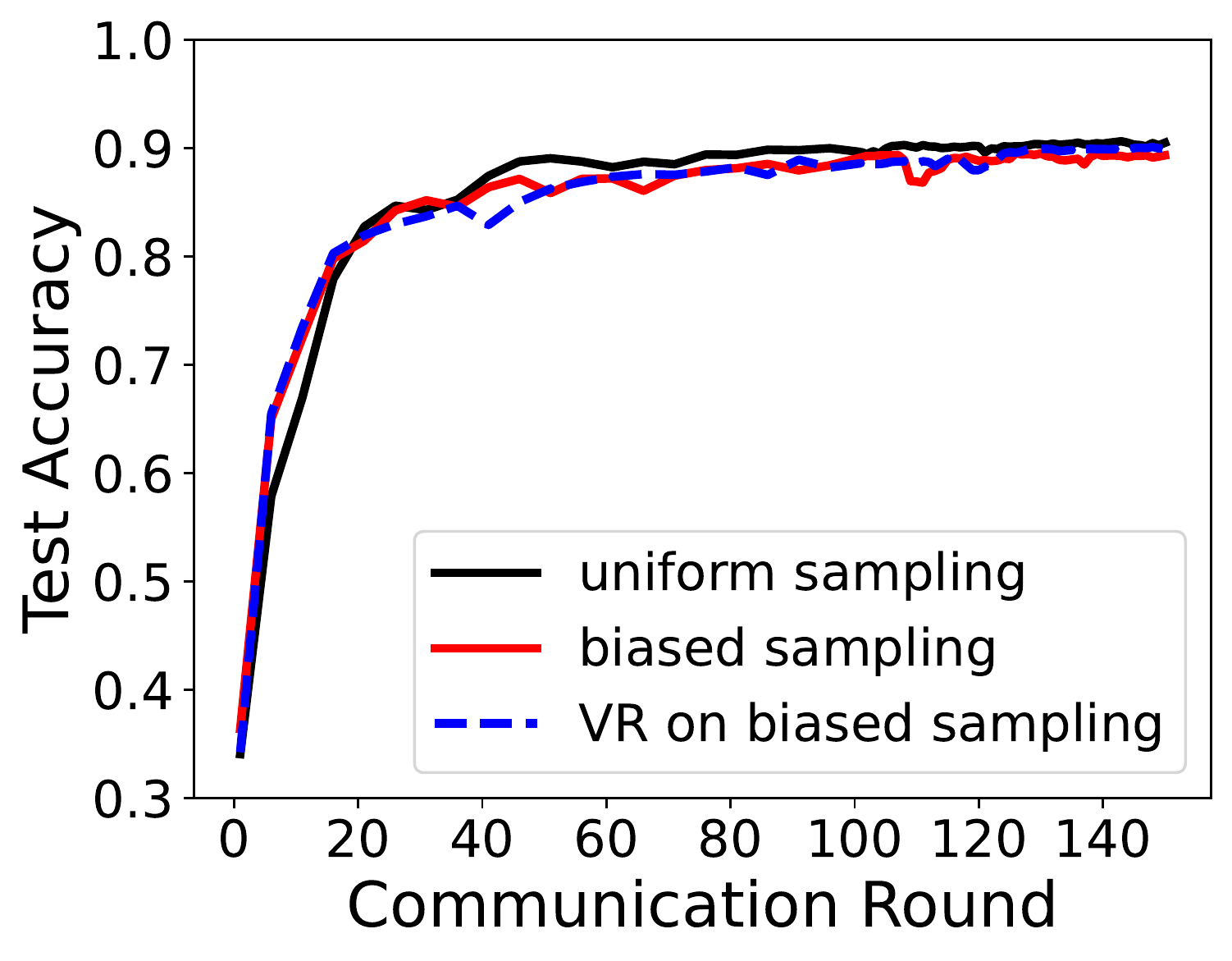}}
        \caption{$p=5$.}
        \label{lr_bias_5}
    \end{subfigure}
    \qquad
    \begin{subfigure}[b]{0.4\textwidth}
        {\includegraphics[width=1.\textwidth]{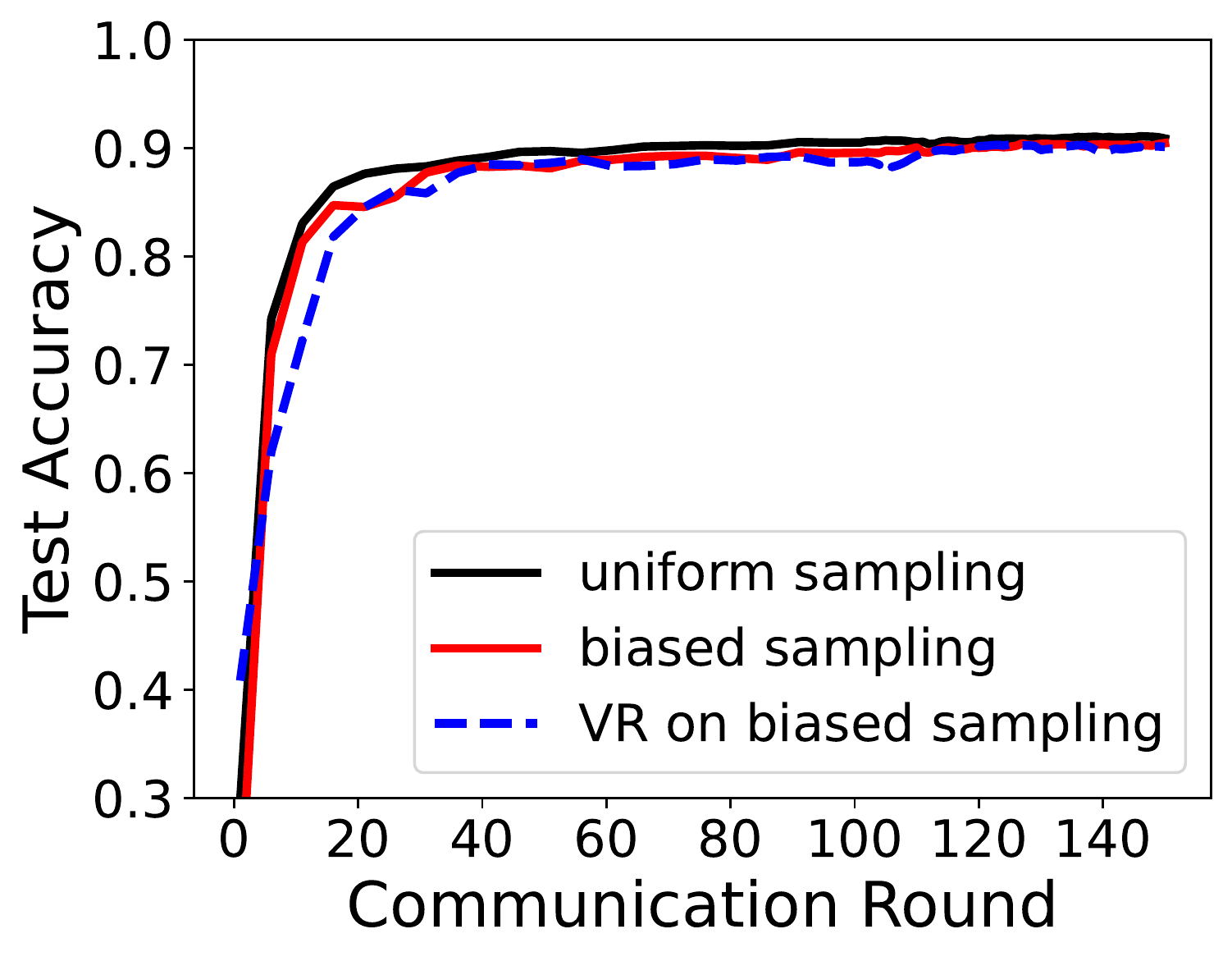}}
        \caption{$p=10$.}
        \label{lr_bias_10}
    \end{subfigure}
\caption{Test accuracy for LR on MNIST with asynchrony and dynamic local steps.}
\label{appdx:lr_acc_biased}
\vspace{-.1in}
\end{figure*}%

\begin{figure*}[!ht]
\centering
	\begin{subfigure}[b]{0.4\textwidth}
        {\includegraphics[width=1.\textwidth]{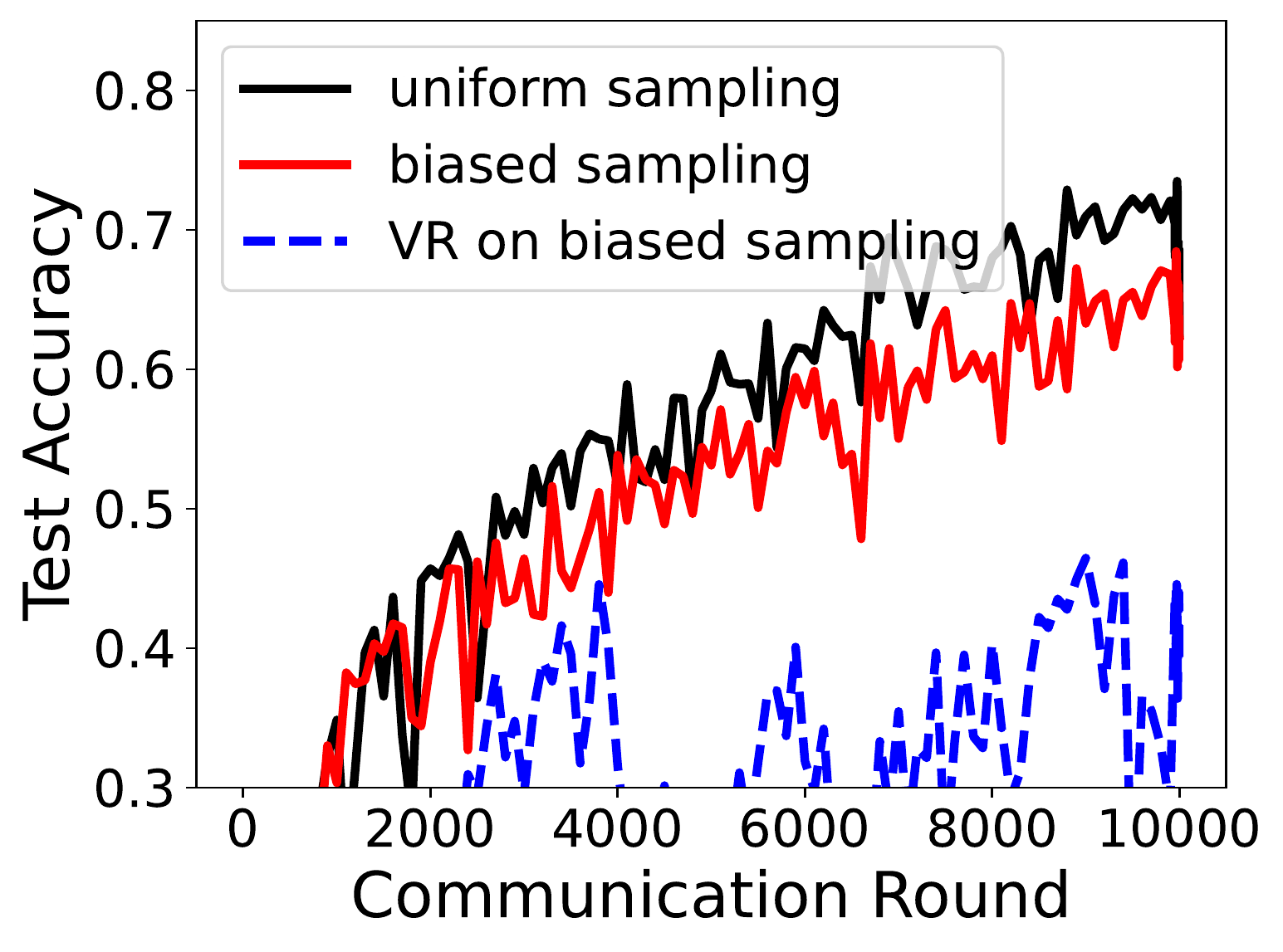}}
        \caption{$p=1$.}
        \label{cnn_bias_1}
    \end{subfigure}
    \qquad
	\begin{subfigure}[b]{0.4\textwidth}
        {\includegraphics[width=1.\textwidth]{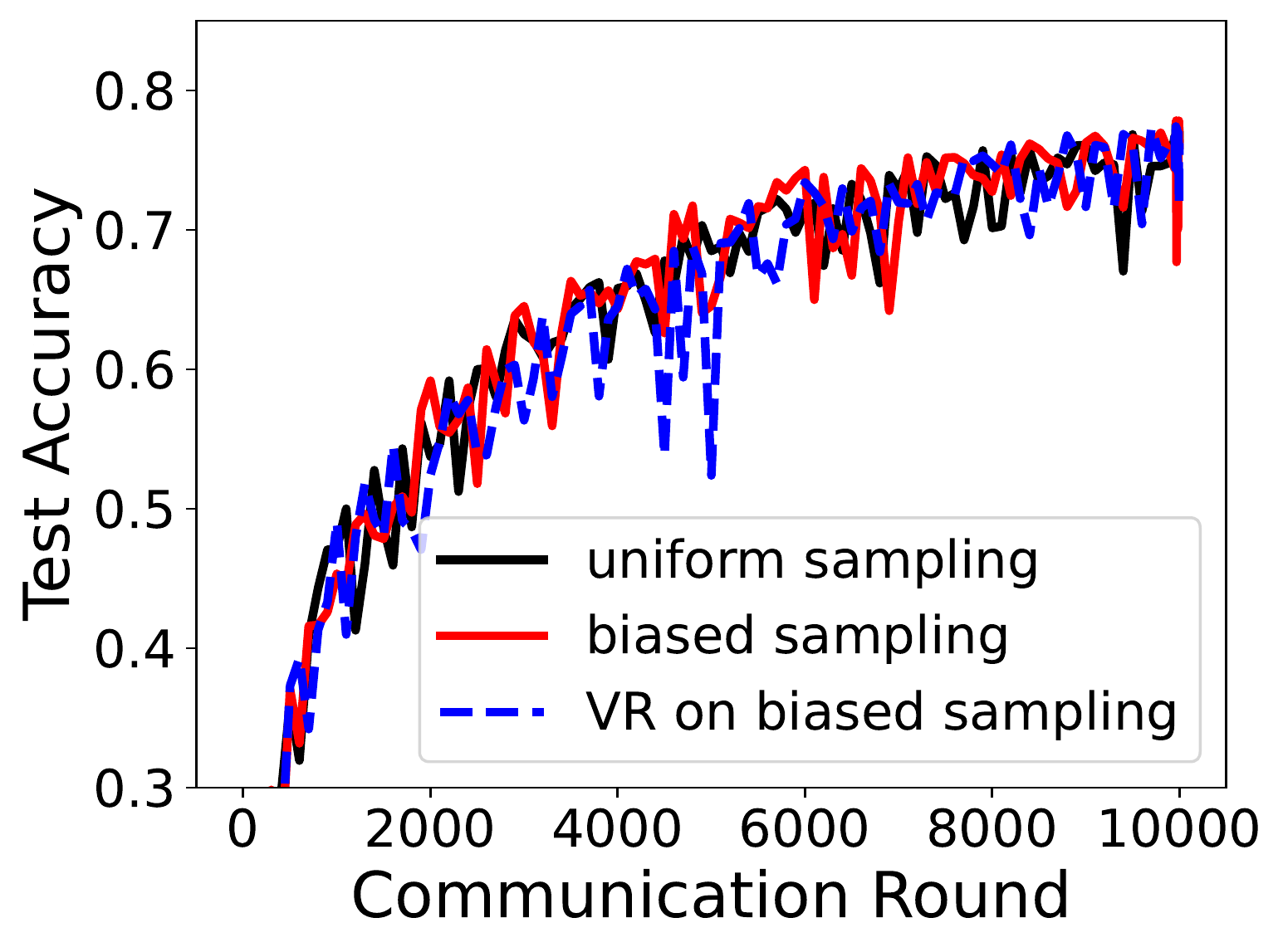}}
        \caption{$p=2$.}
        \label{cnn_bias_2}
    \end{subfigure} \\
    \begin{subfigure}[b]{0.4\textwidth}
        {\includegraphics[width=1.\textwidth]{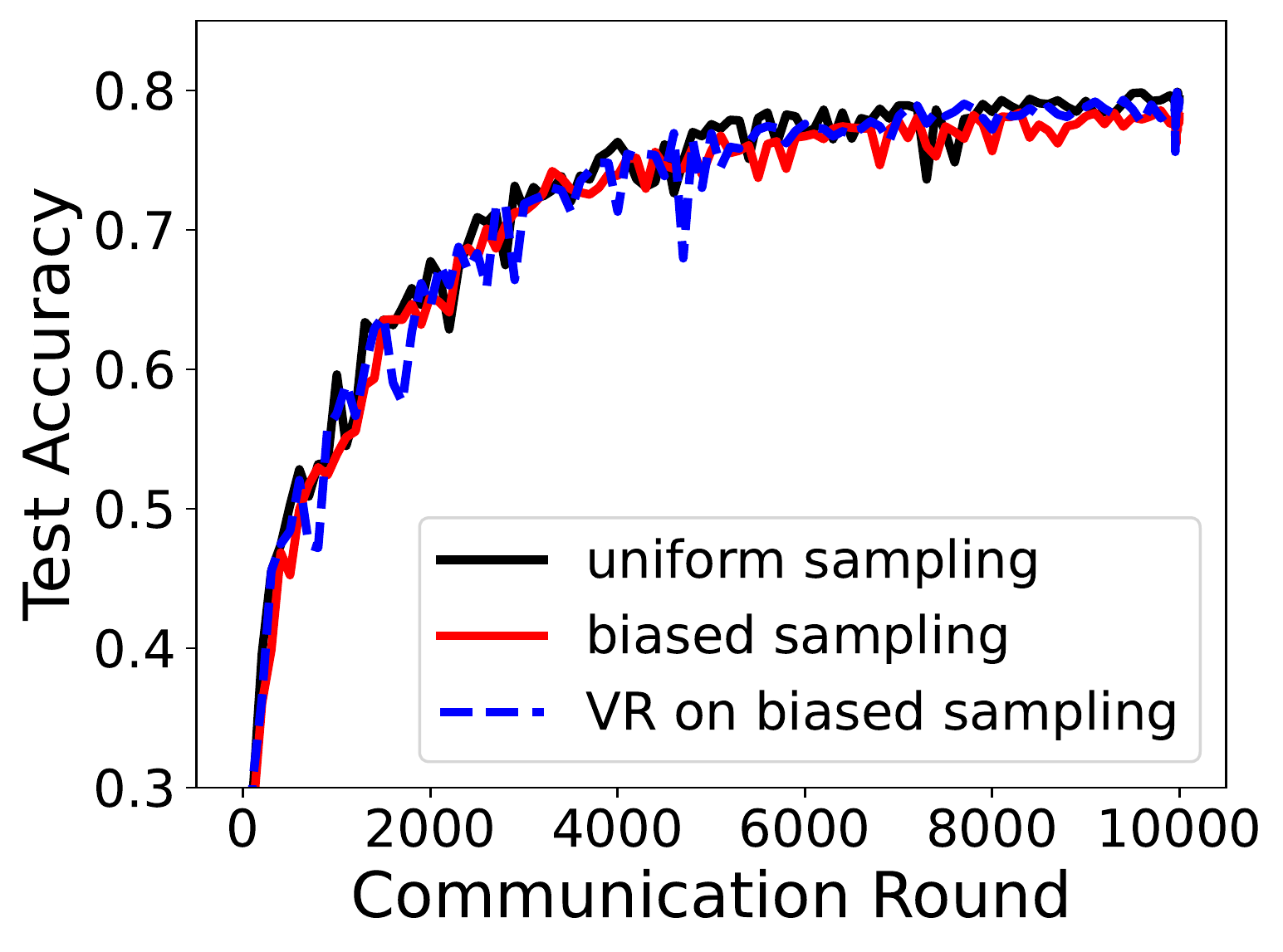}}
        \caption{$p=5$.}
        \label{cnn_bias_5}
    \end{subfigure}
    \qquad
    \begin{subfigure}[b]{0.4\textwidth}
        {\includegraphics[width=1.\textwidth]{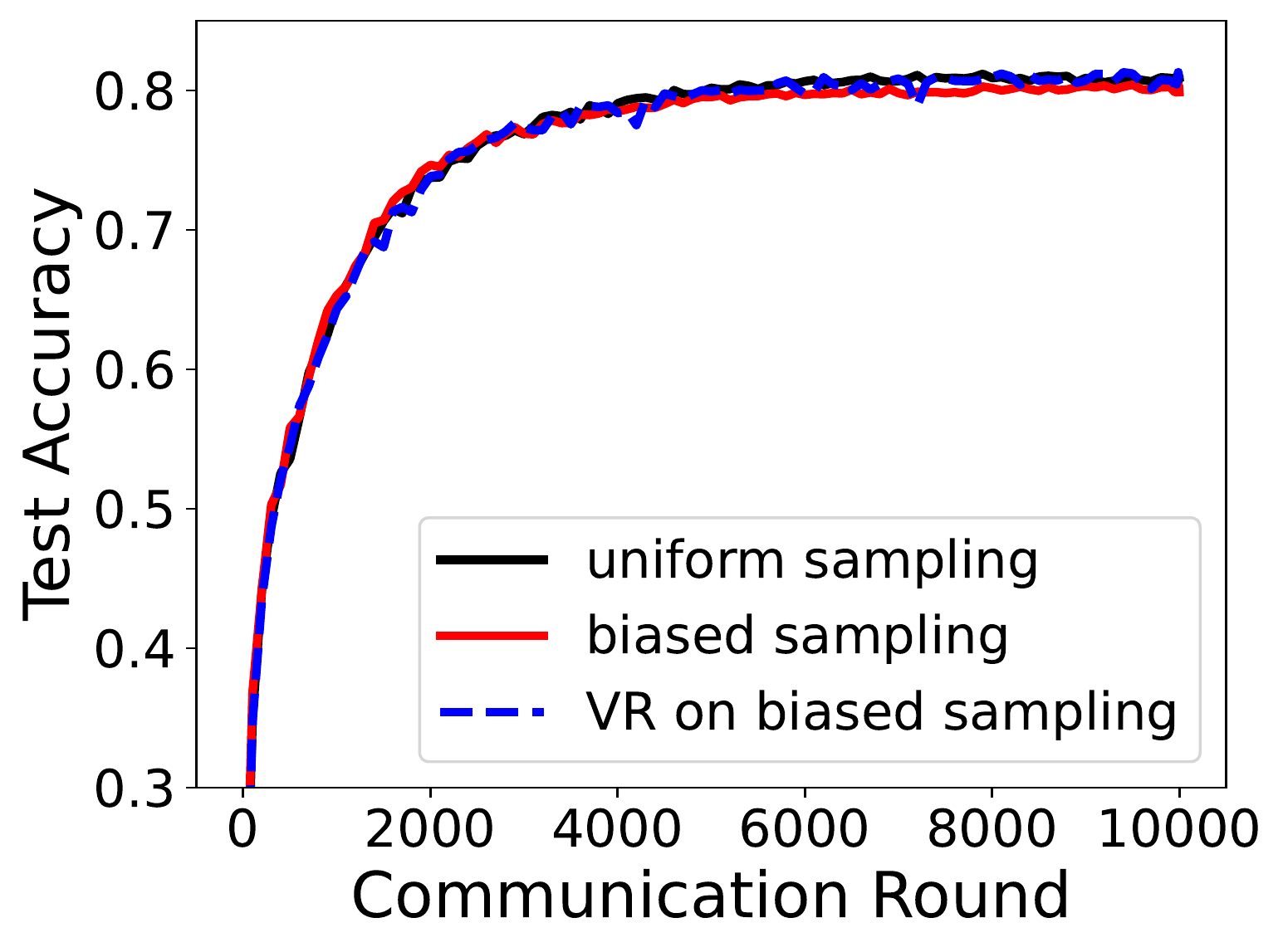}}
        \caption{$p=10$.}
        \label{cnn_bias_10}
    \end{subfigure}
\caption{Test accuracy for CNN on CIFAR-10 with asynchrony and dynamic local steps.}
\label{appdx:cnn_acc_baised}
\vspace{-.1in}
\end{figure*}%

\textbf{Effects of different worker information arrival processes.}
In order to generate different workers' arrival processes, we use uniform sampling without replacement to simulate the uniformly distributed worker information arrivals and use biased sampling to simulate the potential bias in general worker information arrival processes.
In Figures~\ref{appdx:lr_acc_biased} and ~\ref{appdx:cnn_acc_baised}, we illustrate the effect of the sampling process for LR on MNIST and CNN on CIFAR-10 with asynchrony and dynamic local steps.
For highly non-i.i.d. datasets ($p=1$), the biased sampling process degrades the model performance.
This is consistent with the larger convergence error as shown in our theoretical analysis.
On the other hand,
for other non-i.i.d. cases with $p=2, 5, 10$, such biased sampling dose not lead to significant performance degradation.
When applying variance reduction on such biased sampling process by reusing old gradients as shown in AFA-CS, we can see that AFA-CS performs well on MNIST, but not on CIFAR-10. 
We conjecture that AFA-CS, as a variance reduction method, does not always perform well in practice.
This observation is consistent with the previous work~\cite{defazio2018ineffectiveness,reddi2020adaptive}, which also demonstrated the ineffectiveness of variance reduction methods in deep learning and some cases of FL.



\end{document}